      \theoremstyle{plain}
      \newtheorem{theorem}{Theorem}[section]
      \newtheorem*{theorem*}{Theorem}
      \newtheorem{lemma}[theorem]{Lemma}
      \newtheorem{corollary}[theorem]{Corollary}
      \newtheorem*{corollary*}{Corollary}
      \newtheorem{proposition}[theorem]{Proposition}
      \newtheorem*{proposition*}{Proposition}
      \theoremstyle{definition}
	  \newtheorem{example}[theorem]{Example}
      \newtheorem{definition}[theorem]{Definition}
     \theoremstyle{remark}
      \newtheorem{remark}[theorem]{Remark}
 \def\len{{\mathrm{len}}}
 \def\Filt{{\mathrm{Filt}}}
 \def\Cech{{\mathrm{\check{C}ech}}}
 \def\Rips{{\mathrm{Rips}}} 
 \def\dgm{{\mathrm{dgm}}}
 \def\inj{{\mathrm{inj}}}
 \def\Vol{{\mathrm{Vol}}}
 \def\diam{{\mathrm{diam}}}
 \def\convex{{\mathrm{conv}}}
 \def\M{{\mathcal M}}
 \def\N{{\mathcal N}}
 \def\O{{\mathcal O}}
 \def\ZZ{{\mathbb Z}}
 \def\RR{{\mathbb R}}
 \def\PP{{\mathbb P}}
 \def\XX{{\mathbb X}}
 \def\YY{{\mathbb Y}}
 \def\WW{{\mathbb W}}
\begin{document}

\title[Intrinsic persistent homology via density-based metric learning]{
Intrinsic persistent homology via density-based metric learning
}

\author[X. Fernández]{Ximena Fernández}
\address{Department of Mathematics, Swansea University, UK and Departamento de Matemática, FCEN, Universidad de Buenos Aires, Argentina.}
\email{x.fernand@dm.uba.ar}
\author[E. Borghini]{Eugenio Borghini}
\address{Departamento de Matemática and IMAS-CONICET, FCEN, Universidad de Buenos Aires, Argentina.}
\email{eborghini@dm.uba.ar}
\author[G. Mindlin]{Gabriel Mindlin}
\address{IFIBA, CONICET and Departamento de Física, FCEN, Universidad de Buenos Aires, Argentina}
\email{gabo@df.uba.ar}
\author[P. Groisman]{Pablo Groisman}
\address{Departamento de Matemática and IMAS-CONICET, FCEN, Universidad de Buenos Aires, Argentina and NYU-ECNU Institute of Mathematical Sciences at NYU Shanghai.}
\email{pgroisma@dm.uba.ar}

\begin{abstract}%
We address the problem of estimating topological features from data in high dimensional Euclidean spaces under the manifold assumption. Our approach is based on the  computation of persistent homology of the space of data points endowed with a sample metric known as Fermat distance.
We prove that such metric space converges almost surely to the manifold itself endowed with an intrinsic metric that accounts for both the geometry of the manifold and the density that produces the sample. This fact implies the convergence of the associated persistence diagrams. The use of this intrinsic distance when computing persistent homology presents advantageous properties such as robustness to the presence of outliers in the input data and less sensitiveness to the particular embedding of the underlying manifold in the ambient space. We use these ideas to propose and implement a method for pattern recognition and anomaly detection in time series, which is evaluated in applications to real data.
\end{abstract}

\subjclass[2010]{62G05, 62G20, 62-07, 57N16, 57N25, 55U10}

\keywords{topological data analysis, persistent homology, manifold learning, distance learning, time series}

\maketitle

\section{Introduction}

\subsection{ Motivation and Problem Statement.}

It is a common situation in machine learning that the given data represents a  possibly noisy finite sample of a geometric object embedded in a high dimensional Euclidean space. 
This is the case, for instance, in the analysis of time series arising from observations of a dynamical system,
where a spatial representation of the data can be interpreted as a sample of a geometric structure --- the \textit{attractor} --- encoding valuable information of the underlying system's behaviour.
Under the manifold assumption, both the metric and the density of the sample play a central role in the process of reconstruction of topological properties of the underlying shape.

From a theoretical point of view, the problem can be stated as follows. Let $\mathbb X_n$  be a set of $n$ sample points with common density $f$ supported on a smooth compact Riemannian manifold $\M$ embedded in $\RR^D$. We are interested in recovering topological features of $\M$ from the sample $\XX_n\subseteq \RR^ D$ in a setting in which both $\M$ and $f$ are assumed to be unknown. 
A standard approach to accomplish this task consists in applying to  $\XX_n$ a computational technique known as \textit{persistent homology}, which allows to obtain qualitative information about connected components, cycles, voids and higher dimensional holes from the point cloud. Here, the sample $\XX_n$ is considered as a metric space endowed with some computable distance, such as the Euclidean distance or an estimator of the inherited geodesic distance.
Although the topological information carried by $\M$ remains the same when endowed with any Riemannian metric, the output of the application of persistent homology to $\XX_n$ strongly depends on the particular distance function employed.
In this article, we consider a computable estimator defined over $\XX_n$ of a certain Riemannian metric on
$\M$ that takes into account the density $f$, which was called \textit{Fermat distance} \cite{GJS}.
We show that the use of this density-based intrinsic metric in the computation of persistent homology can lead to results that overcome simultaneously certain weaknesses of standard approaches, such as the sensitivity to outliers and the dependence on the embedding of the sample in the ambient space.

Persistent homology is a central technique in Topological Data Analysis (TDA) developed to infer the \textit{homology groups} of a space by studying a sample $\XX_n$ at \textit{all} scales of resolution at the same time \citep[see][]{EH, ELZ, BCY, NSW, MR2121296}. 
It has found applications in many fields, including neuroscience \citep{Giusti}, finance  \citep{GK}, signal processing \citep{PH, TP, P}, computational neural networks \citep{pmlr-v108-gabrielsson20a}, virus evolution \citep{Chan18566} and sensor networks \citep{Silva2007CoverageIS}.
This method yields as output an object called {\em persistence diagram} associated to the sample. Under mild conditions, the homology groups of the underlying topological space can be read off the persistence diagram \cite[see][]{ELZ}. In \cite{CCSGGO, CDSGO}, Chazal et.al provided a general framework that allows to define persistence diagrams for infinite metric spaces instead of just finite approximations (samples). Thus, one can view the persistence diagram associated to a sample of a space as an estimate of a limiting object, namely, the persistence diagram of the entire space. When the distinction is needed, we will call these diagrams {\em sample persistence diagram} and {\em population persistence diagram} respectively.

Our main result states that, under reasonable conditions, there is convergence as metric spaces of the sample $\XX_n$ endowed with a computable estimator of the Fermat distance towards the manifold $\M$ (equipped with the Fermat distance) in the sense of Gromov--Hausdorff as the size $n$ grows. 
When combined with the well-known {\em stability theorem}  \citep{CSEH, CDSO, CCSGGO}, this approximation result as metric spaces allows to deduce the convergence of the corresponding persistence diagrams. For this purpose, the space of diagrams is naturally equipped with the \textit{bottleneck distance}.
Approximation results that include convergence rates and confidence regions have been established when the metric of the target space is known; see e.g. \citep{FLRWBS} where the Euclidean distance is considered for both the samples and the space, and also \citep{CGLM} where a general metric is used but assumed to be known in advance.

Persistence diagrams are known to be sensitive to the presence of outliers \cite[see][]{MR2854318, chazal2011geometric, buchet2016efficient, MR3968644}. In \cite{chazal2011geometric, MR3968644}, the authors proposed filtrations of point clouds regarded as empirical measures in the ambient Euclidean space --- called DTM-filtrations --- to achieve a robust computation of ambient persistent homology. This theory was later extended to general metric spaces
in \cite{buchet2016efficient}. On the other hand, intrinsic versions of the classical \v{C}ech and Vietoris--Rips filtrations were developed with the aim of capturing topological properties of manifolds sitting in an Euclidean space which are independent of the embedding. The approach exhibited in this article handles both difficulties at the same time. Indeed, we show that sample persistence diagrams computed using the estimator of the (intrinsic) Fermat distance are both robust to outliers for positive degree and display the correct homology of the manifold for a longer parameter interval as compared with the use of ambient Euclidean distance.
\par We refer the reader to the video \cite{tutorial} for an introductory exposition of the contents of this article.

\subsection{Contributions}  Let $(\M, \rho)$ be a smooth $d$-dimensional Riemannian manifold embedded in $\RR^D$ with density $f:\M\to \RR_{>0}$ and a Riemannian density-based distance $\rho$ (mainly, it will be the Fermat distance $d_{f,p}$ defined below).

\par For $p>1$, the {\em population Fermat distance} is defined as 
\begin{equation*}
\label{def.fermat}
d_{f,p}(x,y) = \inf_{\gamma} \int_{I}\frac{1}{f(\gamma_t)^{(p-1)/d}}|\dot{\gamma_t}| dt.
\end{equation*}
Here $x, y\in \M$, $|\cdot|$ denotes the Euclidean distance and the infimum  is taken over all piecewise smooth curves $\gamma  \colon I=[0,1] \to \M$ with $\gamma(0) = x$, and $\gamma(1) = y$.
In the special case when $f$ is uniform, the population Fermat distance reduces to (a multiple of) the  inherited Riemannian distance $d_{\M}$ from the ambient Euclidean space. When this is not the case, this distance takes into account the density, which may be advantageous in certain situations, like in the case of estimation of the topology of $\M$ from samples with presence of noise and outliers. This metric was also considered in the works \cite{HDH, MCD, SGJ,GJS}.

\par Given a finite set of points $\mathbb X_n$, the \textit{sample Fermat distance} between $x,y$ is defined as
\begin{equation*}
\label{def.fermat.sample}
d_{\XX_n, p}(x,y) = \inf_{\gamma} \sum_{i=0}^{r}|x_{i+1}-x_i|^{p}
\end{equation*}
where the infimum is taken over all paths $\gamma=(x_0,x _1, \dots, x_{r+1})$ with $x_0=x$, $x_{r+1} = y$ and $\{x_1, x_2, \dots, x_{r}\}\subseteq \XX_n$. 

Our main result states the Gromov--Hausdorff convergence (a.s.) of the sample endowed with the sample Fermat distance, appropriately re-scaled, to $(\M, d_{f,p})$.

\smallskip

\noindent
{\bf Theorem }  {\it Let $\M$ be a smooth, closed $d$-dimensional Riemannian manifold embedded in $\RR^D$. Let $f:\M\to \RR_{>0}$ be a smooth density function. Let $\XX_n = \{x_1, x_2, \dots, x_n\}\subseteq \M$ be a set of $n$ independent sample points in $\M$ with common density $f$. 
Given $p>1$, there exists a constant $\mu = \mu(p,d)$ such that for every $\lambda \in \big((p-1)/pd, 1/d\big)$ and $\varepsilon>0$ there exist $\theta>0$ satisfying
\[
    \PP\left( d_{GH}\left(\big(\M, d_{f,p}\big), \big(\XX_n, {\scriptstyle \frac{n^{(p-1)/d}}{\mu}} d_{\XX_n, p}\big)\right) > \varepsilon \right) \leq \exp{\left(-\theta n^{(1 - \lambda d) /(d+2p)}\right)}
\]
for $n$ large enough, where $d_{GH}$ stands for the Gromov-Hausdorff distance between metric spaces.}

\smallskip

As a consequence of this result and the stability theorem for persistence diagrams we deduce the following convergence result.

\smallskip

\noindent
{\bf Corollary }  {\it
Let $\varepsilon > 0$ and $\lambda \in \big((p-1)/pd, 1/d\big)$. There exists a constant $\theta > 0$ such that
\[
    \PP\Big( d_b\big(\dgm(\Filt(\mathcal M, d_{f,p})),\dgm(\Filt(\XX_n, {\scriptstyle \frac{n^{(p-1)/d}}{\mu}} d_{\XX_n, p}))\big) > \varepsilon \Big) \leq \exp{\left(-\theta n^{(1 - \lambda d) /(d+2p)}\right)}
\]
for $n$ large enough.
}

\smallskip

Here $\Filt(\cdot)$ denotes either the Vietoris--Rips or \v{C}ech filtration, $\dgm(\cdot)$ the associated persistence diagram and $d_b$ is the bottleneck distance (see Section \ref{persistence_diagrams} for precise definitions). Since $(\mathcal M, d_{f,p})$ is a Riemannian manifold, its population persistence diagram $\dgm(\Filt(\mathcal M, d_{f,p}))$ displays the correct homology up to the convexity radius $\convex(\M,d_{f,p})$. In contrast, for $(\M, |\cdot|)$ this is guaranteed only up to the reach $\tau_{\M}$. It is easy to find examples of manifolds in which $\convex(\M,d_{f,p})$ is much larger than $\tau_{\M}$.
\par On the other hand, we prove that for a reasonable upper bound $r$ on the filtration parameter, $\dgm(\Rips_{<r}(\XX_n, d_{\XX_n, p}))$ is robust to outliers for homology degree greater than 0. 

\smallskip

\noindent
{\bf Proposition }  {\it Let $\XX_n$ be a sample of $\M$ and let $Y\subseteq \RR^D\smallsetminus \M$ be a finite set of outliers. Let $\delta = \displaystyle \min\Big\{\min_{y\in Y} d_E(y, Y\smallsetminus \{y\}), ~d_E(\XX_n, Y)\Big\}$, where $d_E$ denotes the Euclidean distance between sets.
For all $k>0$ and $p>1$,
 \[
 \dgm_k(\Rips_{<\delta^p}(\XX_n \cup Y, d_{\XX_n\cup Y, p})) = \dgm_k(\Rips_{<\delta^p}(\XX_n, d_{\XX_n, p})),
\]
where $\Rips_{<\delta^p}$ stands for the Rips filtration  up to parameter $\delta^{p}$ and $\dgm_k$ for the persistent homology of degree $k$.
}

\smallskip

The threshold $\delta^p$ is restrictive if it is below $\diam(\XX_n, d_{\XX_n, p})$. However, we will show that under a natural model for the outliers, $\delta^p > \diam(\XX_n, d_{\XX_n, p})$ for large enough $p$.

\subsection{Applications to Signal Analysis.} The study of time series  --- specially, derived from dynamical systems --- through the inference of homology groups of a certain associated space called \textit{delay embedding} was pioneered in the works \cite{P, PH}. The construction of the delay embedding of a time series heavily depends on the dimension or number of independent variables of the underlying system, and the choice of a parameter called \textit{time delay.} It often leads to analyse subspaces of a sufficiently high dimensional Euclidean space, which makes the inference of topological information unstable.
\par In first place, by means of concrete examples involving the Lorenz attractor and noisy periodic signals, we show that the use of Fermat distance in this method can lead to a more robust inference of the delay embeddings' topological features. The reason behind this is that the Fermat distance is less prone, compared to the Euclidean distance, to the effect known as \textit{curse of dimensionality} and less dependent on the particular embedding. We also
describe a method to detect change-points in the time series through the study of the evolution in time of the persistence diagrams of the corresponding time-delay embeddings. This is applied to discover anomalies in electrocardiogram signals and different patterns in the song of canaries corresponding to different syllables. 

The code to replicate the computational examples and applications can be found at the repository \cite{repository}.

\subsection{Related Work}
The sample Fermat distance was introduced independently in the articles \cite{SGJ, MCD}. The study of approximations of density based metric from samples was suggested in \cite{VB} and developed in \cite{SO}. In \cite{cohen15approximating, CMS} it was analyzed a general family of metrics that includes the population Fermat distance and deeply studied the case $p=2$ of sample Fermat distance, which was also called power weighted shortest distance in \cite{MCD}.
\cite{GJS} proved that it is possible to recover the population Fermat distance $d_{f,p}$ for $d$-dimensional manifolds which are isometrically embedded (closures of) open sets of $\RR^{d}$ in $\RR^D$ as the limit of the sample Fermat distance. In the related work in \cite{HDH} it was shown that in the same context, a statistic that is similar to the sample Fermat distance but uses the inherited Riemannian distance $d_{\M}$ between consecutive points in a path instead of the Euclidean one to measure its cost, also converges almost surely to the Fermat distance. We remark that this statistic cannot be computed from the sample since the inherited distance is not assumed to be known in advance. However, the results in \cite{HDH} provides an essential and strong  foundation on the basis of which our main result is built over.

The problem of learning geodesic distances from samples for submanifolds of the Euclidean space, specially with the aim of reducing dimensionality and visualizing data, has a long history; see for instance \citep{Isomap, MR2147326}. On the other hand, the problem of estimating the persistence diagram of a submanifold of an Euclidean space from a sample has been studied in \cite{FLRWBS, CGLM}, where the underlying metric is assumed to be known. In this setting, both works \cite{CGLM} and \cite{FLRWBS} were able to prove the following satisfying result: the persistence diagrams computed using the sample converge almost surely (in the sense of bottleneck distance) to the persistence diagram of the desired metric space. Moreover, they gave exponentially small bounds in the size of the sample for the probability of the bottleneck distance between the corresponding persistence diagrams being larger than some positive number; see \cite[Corollary 3]{CGLM} and \cite[Lemma 4]{FLRWBS}, where in addition confidence sets for persistence diagrams are provided. In a different direction, the advantages of computing persistence diagrams of submanifolds of an Euclidean space using alternative metrics --- more specifically, metrics based on diffusion geometry and random walks --- were explored experimentally in \cite{MR2854318}.

\subsection{Structure of the Paper}
In Section \ref{distance learning} we prove our main result Theorem \ref{thm GH} regarding the Gromov--Hausdorff convergence of metric spaces using, respectively, the sample and the population Fermat distance. Section \ref{persistence_diagrams} includes an introduction to  persistent homology and is devoted to the study of persistence diagrams of manifolds endowed with Fermat distance. We deduce in first place the convergence of sample persistence diagrams to population persistence diagrams. Then, we show that by using these intrinsic metrics the topological features last longer in the persistence diagrams. Finally, we show that Fermat-based persistence diagrams are robust to the presence of outliers for positive homology degree. In Section \ref{experiments} we present a method for pattern recognition in time series, which is applied to real data from electrocardiograms and songs of canaries. Appendix \ref{proofs} contains the proofs of some technical results (Proposition \ref{thm resta} and Lemma \ref{spacing}), required as intermediate steps to prove Theorem \ref{thm GH}.

\section{Density-based Distance Learning}\label{distance learning}

In this section we prove the main theorem of the article, which states that the sample $\XX_n$, considered as a metric space with the sample Fermat distance (appropriately re-scaled), converges almost surely to $(\M, d_{f,p})$ in the sense of Gromov--Hausdorff.

\par
We begin by introducing the \textit{population Fermat distance} for a smooth closed Riemannian manifold without boundary $\M$ of dimension $d >1$ with Riemannian metric tensor $g$ together with a positive $C^{\infty}$ density function $f:\M\to \RR_{>0}$. For $p >1$, consider the deformed metric tensor  $g_p=f ^{2 (1-p)/d}g$ given by a conformal transformation of the original metric $g$. Since $f$ is smooth, $g_p$ is a Riemannian metric tensor. Thus, $\M$ has a metric space structure given by the geodesic distance with respect to $g_p$, denoted by $d_{f,p}$. 

\begin{definition}\citep{HDH}
For $p>1$, the \textit{population Fermat distance} between $x,y\in \M$ is defined as
\begin{equation*}\label{eq:fermat.distance}
d_{f,p}(x,y) = \inf_{\gamma} \int_{I}\frac{1}{f(\gamma_t)^{(p-1)/d}}\sqrt{g(\dot{\gamma}_t, \dot{\gamma}_t)} dt 
\end{equation*}
where the infimum  is taken over all piecewise smooth curves $\gamma  \colon I\to \M$ with $\gamma_0 = x$, and $\gamma_1=y$.
\end{definition}

Notice that geodesics in $\M$ with respect to the distance $d_{f,p}$ are more likely to lie in regions with high values of $f$.
The name \textit{Fermat distance} comes from the analogy with optics, in which $d_{f,p}$ is the optical distance as defined by Fermat's principle when the refraction index is given by $f^{-(p-1)/d}$.

Consider now a set $\XX_n = \{x_1, x_2, \dots, x_n\}\subseteq \M$ of $n$ sample points in $\M$ with common density $f$. Suppose that $\M$ is embedded in $ \RR^D$ and it is endowed with the standard inherited Riemannian metric.
Our aim is to approximate $d_{f,p}(x,y)$, assuming no knowledge about $\M$ and the Riemannian distance defined on it. To achieve this, we will define an estimator for this distance over the sample. We denote by $|x-y|$ the Euclidean distance between points $x,y\in \M$.

\begin{definition}\citep{SGJ, MCD}
For $p> 1$, the \textit{sample Fermat distance} between $x,y\in \M$ is defined as
\[
d_{\XX_n, p}(x,y) = \inf_{\gamma} \sum_{i=0}^{r}|x_{i+1}-x_i|^{p}
\]
where the infimum is taken over all paths $\gamma=(x_0,x _1, \dots, x_{r+1})$ of finite length with $x_0=x$, $x_{r+1} = y$ and $\{x_1, x_2, \dots, x_{r}\}\subseteq \XX_n$. 
\end{definition}

Since $p>1$, geodesics with respect to this distance are also likely to lie in regions with high density of points in $\mathbb X_n$. This is due to the fact that paths with short edges are favored even if they have large total (Euclidean) length.

We remark here that, for technical reasons, we adopt a slightly different definition for the sample Fermat distance than the original one from \cite{SGJ}. Namely, in the original setting, only paths completely contained in $\XX_n$ are considered, including the endpoints. Points that are not in the sample $\mathbb X_n$ are projected to the nearest point in $\mathbb X_n$. In consequence, our sample Fermat distance here does not generally induce a pseudometric over $\M$, but only a metric when restricted to $\XX_n$.

\begin{example}[Eyeglasses]\label{eyeglasses example} The effect of taking different values of $p$ for the sample Fermat distance $d_{\mathbb X_n,p}$ in the geometry of a manifold is illustrated below. Concretely, the \textit{eyeglasses} curve in $\RR^2$ uniformly sampled and perturbed with Gaussian noise is considered (see Figure \ref{fig:eyeglasses dist}). We compute the sample Fermat distance between each pair of points for a series of values of $p>1$ and embed the sampled points in $\RR^{2}$ in such a way that the Euclidean distance in the embedding reflects the Fermat distance, using the Multidimensional Scaling algorithm (MDS). As $p$ becomes larger, the geometry of the data overcomes the bottleneck region and it deforms into a circle. We also compute the Isomap embedding in $\RR^2$ posed in \cite{isomapproofs}. Recall that the Isomap embedding is the MDS projection with an estimator of the inherited Riemannian distance based in the $k$-NN graph as input distances \cite[see][Section 5] {isomapproofs}.
Due to the noise near the bottleneck region, some points that are far in the sense of the inherited Riemannian distance become close in the distance estimated from the $k$-NN graph.  Note that Isomap embedding is highly sensitive to noise, while with Fermat distance the points lying in low density regions are mapped to points that are far from the rest of the sample. The larger the power $p$, the stronger this effect. This feature allows Fermat distance to reconstruct the underlying topology of the manifold in the present case, even with noise, for a range of values of $p$.

\begin{figure}[tb!]
    \centering
    \includegraphics[width=0.37\textwidth]{./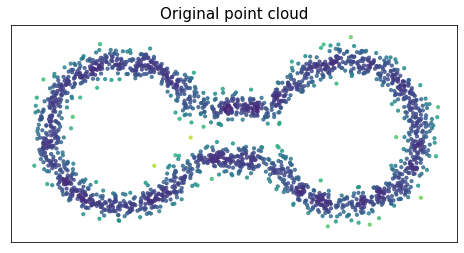}
    \includegraphics[width=0.28\textwidth]{./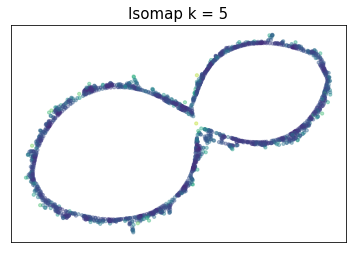}\\
    \includegraphics[width=0.32\textwidth]{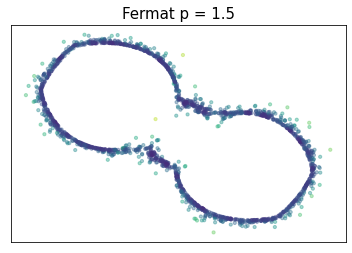}
    \includegraphics[width=0.32\textwidth]{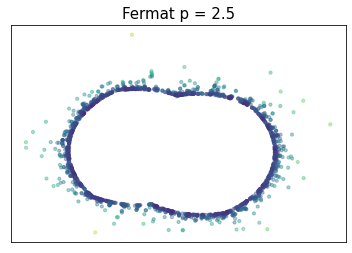}
    \includegraphics[width=0.32\textwidth]{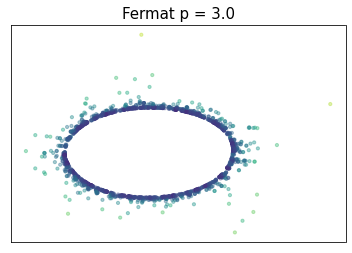}

    \includegraphics[width=0.328\textwidth]{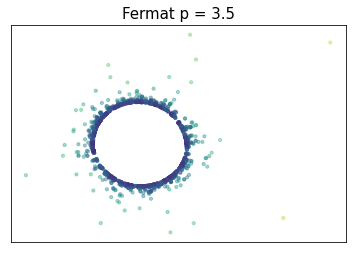}
    \includegraphics[width=0.328\textwidth]{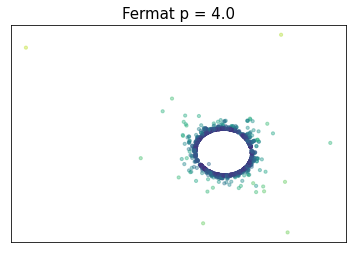}
    \includegraphics[width=0.328\textwidth]{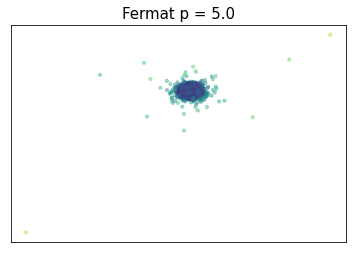}
    \caption{Top: A sample  with noise of 2000 points of the eyeglasses dataset and Isomap projection with $k=5$ (similar results are obtained for all reasonable values of $k$). Points are coloured according to local density. Middle and bottom: MDS embedding in $\RR^2$ using Fermat distance for different values of $p$. }
    \label{fig:eyeglasses dist}
\end{figure}
\end{example}

\begin{remark}[The role of $p$]  
The parameter $p$ in the definition of the population Fermat distance $d_{f,p}$ controls the density weight  $f^{-(p-1)/d}$ in the computation of geodesics. Whereas for $p=1$  the optimal paths are obtained in classic geodesic paths, for large $p$ they might significantly differ, being mostly restricted to areas of high density.
In practice, the value of $p$ in the sample Fermat distance $d_{\XX_n, p}$ quantifies the balance between the embedding and the density of a given sample $\XX_n$ when estimating the optimal paths (notice that it is equivalent to the Euclidean distance for $p=1$). In general, there is a reasonable large --- although bounded --- interval of values of $p$ for which the estimator $d_{\XX_n, p}$  allows to recover the intrinsic geometry of the sample $\XX_n$ even in presence of noise (c.f. Example \ref{eyeglasses example}). A similar phenomena can be experimentally observed when it is used in clustering tasks, as shown in simulations in \cite{SGJ} and \cite{LMM}.
\end{remark}

\begin{remark}[Dimensionality reduction]  The estimation of Fermat distance on input data, when coupled with the MDS projection, produces a new method to achieve dimensionality reduction. 
This strategy is in analogy with the popular algorithm Isomap \cite{Isomap}. 
It is known that Isomap suffers from topological instability in presence of noise, since it may construct erroneous connections (called \textit{short-circuits}) in the $k$-NN graph that potentially impair its performance (see \cite{Balasubramanian2002TheIA}). In contrast, since noise generally corresponds with regions of low density, noisy points are treated by our method almost as not being part of the manifold. These effects increase with the value of $p$, and they might be advantageous for the inference of the right geometry of the data (c.f. Section \ref{section outliers}).
\end{remark}

\par Our first result, Proposition \ref{thm resta}, 
shows that the sample Fermat distance converges to the population Fermat distance for closed (i.e. compact and without boundary) submanifolds of $\RR^{D}$. A related result was previously proved in \cite{GJS} for isometrically embedded (closures of) open sets of $\RR^{d}$.  Here we extend the class of manifolds to any compact manifold without boundary embedded in $\RR^D$. Moreover, Proposition \ref{thm resta} states a \textit{uniform} convergence  for \textit{any} two points in the manifold --- not only pointwise, as stated in \cite{GJS} ---. This feature is essential to study both the manifold and the sample endowed with the (population and sample respectively) Fermat distance as \textit{single} objects (metric spaces) and to 
prove convergence in the sense of Gromov--Hausdorff.

Let us fix some notations and general hypotheses. Hereafter, $\M$ will denote a smooth $d$-dimensional closed Riemannian submanifold of $\RR^D$ endowed with the inherited Riemannian distance $d_{\M}$. We will consider a set $\XX_n\subseteq\M$ of $n$ independent random points with common smooth density $f\colon\M\to \RR_{>0}$. We will denote by $M_f$ and $m_f$ the maximum and minimum values attained by $f$ on $\M$, respectively. Observe that $0<m_f<M_f<\infty$. Finally, given $p>1$ we set $\alpha = 1/(d+2p)$.

\begin{proposition}\label{thm resta}
For every $p>1$ and $\lambda \in \big((p-1)/pd, 1/d\big)$, given $\varepsilon>0$ there exist $\mu, \theta>0$ such that
\[
    \PP\left(\sup_{x,y}\left| n^{(p-1)/d} d_{\XX_n, p}(x,y) - \mu d_{f,p}(x,y )\right|>\varepsilon\right) \le \exp\left(-\theta n^{(1 - \lambda d)\alpha}\right)
\]
for $n$ large enough. The supremum is taken over  $x,y\in \M$.
\end{proposition}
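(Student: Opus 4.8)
The plan is to prove two one-sided estimates that, taken together, pin $n^{(p-1)/d} d_{\XX_n,p}(x,y)$ to within $\varepsilon$ of $\mu\, d_{f,p}(x,y)$, each holding simultaneously for all $x,y\in\M$ with probability at least $1-\exp(-\theta n^{(1-\lambda d)\alpha})$. I would identify the constant $\mu=\mu(p,d)$ as the time constant of Euclidean first-passage percolation (FPP) on a unit-intensity homogeneous Poisson process in $\RR^d$ with edge cost $|x_{i+1}-x_i|^p$; its existence (via a subadditive argument) and, crucially, its exponential concentration are the probabilistic backbone of the proof, and in the flat Euclidean setting these are precisely the ingredients supplied by \cite{GJS}.

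\textbf{Localization.} Since $\M$ is compact I would cover it by finitely many charts in which the manifold is, up to a bi-Lipschitz distortion tending to $1$ as the charts shrink, a flat patch of $\RR^d$; there the density $f$ is nearly constant by continuity, and the Euclidean edge length $|x_{i+1}-x_i|$ agrees with the intrinsic one to second order, with the error controlled by the reach and the second fundamental form. On such a patch the sample points behave like a homogeneous Poisson process of intensity $\approx n f(z)$, and after rescaling lengths by $(nf(z))^{1/d}$ the local problem becomes unit-intensity Euclidean FPP. This also explains the scaling: the typical spacing is $(nf(z))^{-1/d}$, so covering a Euclidean length $L$ near $z$ uses $\approx L(nf(z))^{1/d}$ edges, each of cost $\approx (nf(z))^{-p/d}$, giving total cost $\approx L\, n^{(1-p)/d} f(z)^{(1-p)/d}$; multiplying by $n^{(p-1)/d}$ recovers the population integrand $f^{-(p-1)/d}$.

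\textbf{Globalization.} Both $d_{f,p}$ and $d_{\XX_n,p}$ are length metrics, so I would patch the local estimates along paths. For the upper bound I take a near-minimizing curve $\gamma$ for $d_{f,p}$ from $x$ to $y$, cut it into short arcs lying in single charts, and use the spacing Lemma \ref{spacing} --- which guarantees that every ball of radius of order $(nf)^{-1/d}$ along $\gamma$ contains sample points --- to build a path through sample points that tracks $\gamma$; summing the arc contributions and invoking uniform continuity of $f^{-(p-1)/d}$ yields at most $\mu\, d_{f,p}(x,y)+\varepsilon$ after rescaling. For the lower bound I estimate from below the cost of an \emph{arbitrary} sample path: restricted to each chart, convexity of $t\mapsto t^p$ (this is where $p>1$ is essential) together with the local point density prevents long edges through low-density regions from undercutting the Fermat integrand, so the summed contributions dominate $\mu\, d_{f,p}(x,y)-\varepsilon$.

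\textbf{From pointwise to uniform.} I would place a net of scale $\delta\sim n^{-\lambda}$ on $\M$, apply the two-sided estimate at each of the $O(\delta^{-2d})=O(n^{2\lambda d})$ pairs of net points, and take a union bound. The admissible window for $\lambda$ arises here: the interpolation cost of moving from a net point to a nearby genuine point is at most $n^{(p-1)/d}\delta^p\sim n^{(p-1)/d-\lambda p}$, which is $o(1)$ exactly when $\lambda>(p-1)/pd$, while each box must contain $n\delta^d=n^{1-\lambda d}\to\infty$ points for the local FPP concentration to apply, forcing $\lambda<1/d$. The per-pair failure probability is $\exp(-\theta' n^{(1-\lambda d)\alpha})$ with $\alpha=1/(d+2p)$ --- the FPP large-deviation exponent for a configuration of $\approx n^{1-\lambda d}$ points --- and the polynomial factor $n^{2\lambda d}$ is absorbed. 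The main obstacle is the lower bound \emph{uniformly} in $x,y$: controlling all sample paths at once, and in particular ruling out shortcuts through atypically dense clusters, demands the genuine exponential concentration of the FPP metric rather than a first-moment bound, and coupling this concentration to the curvature corrections from the charts is the delicate point.
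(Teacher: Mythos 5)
Your proposal is correct in outline but follows a genuinely different --- and far heavier --- route than the paper. You set out to re-derive the whole first-passage-percolation machinery on the manifold: localization to nearly flat charts with curvature-controlled distortion, Poisson approximation, existence of the time constant $\mu$ by subadditivity, a patched upper bound along a Fermat geodesic, and a uniform lower bound against arbitrary sample paths, which you rightly flag as the delicate point. The paper does none of this: it cites \cite{HDH} (Theorem~\ref{thm damenlin} in the text), which already supplies the uniform exponential concentration of the \emph{geodesic-cost} statistic $L_{\XX_n,p}(x,y)=\inf_\gamma\sum_i d_{\M}(x_{i+1},x_i)^p$ over all pairs with $d_{\M}(x,y)\geq b_n$, with the same constant $\mu$ (from \cite{HN}) and the same exponent $\exp(-\theta(nb_n^d)^{\alpha})$. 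The only new probabilistic work is a transfer from $L_{\XX_n,p}$ to the computable $d_{\XX_n,p}$: since $|u-v|\leq d_{\M}(u,v)$ one always has $d_{\XX_n,p}\leq L_{\XX_n,p}$, and conversely, once Corollary~\ref{global_bound} (itself a consequence of the one-sided upper bound plus compactness) guarantees that every edge of a $d_{\XX_n,p}$-minimizing path is Euclidean-short with probability $1-\exp(-\theta n^{\alpha})$, the comparison $d_{\M}(u,v)\leq(1+\delta)|u-v|$ for nearby points yields the sandwich $d_{\XX_n,p}\leq L_{\XX_n,p}\leq(1+\delta)^p\,d_{\XX_n,p}$, so the cited concentration passes to $d_{\XX_n,p}$ (Proposition~\ref{thm quotient}). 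Your endgame, on the other hand, matches the paper's almost exactly: the threshold $b_n=n^{-\lambda}$; the constraint $\lambda<1/d$ so that $nb_n^d\to\infty$ (for you, enough points per box; for the paper, the hypothesis $\log n/(nb_n^d)\to 0$ of the cited theorem); the constraint $\lambda>(p-1)/pd$ so that the one-hop cost $n^{(p-1)/d}|x-y|^p\leq n^{(p-1)/d-\lambda p}=o(1)$ disposes of close pairs --- though the paper treats pairs with $d_{\M}(x,y)\leq n^{-\lambda}$ purely deterministically (both $\mu\,d_{f,p}(x,y)$ and $n^{(p-1)/d}d_{\XX_n,p}(x,y)$ are at most $\varepsilon/2$ for large $n$, no net or union bound needed, since the cited bound is already uniform over far pairs); and the conversion from ratio to difference via $d_{f,p}(x,y)\leq m_f^{-(p-1)/d}\diam(\M)$. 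What the paper's route buys is precisely that the hardest item on your list --- the uniform lower bound coupled to curvature corrections --- is outsourced to \cite{HDH}; note in this respect that \cite{GJS}, which you invoke for the concentration, covers only isometrically embedded closures of open sets of $\RR^d$, so in the curved closed-manifold setting executing your localization program in full would essentially amount to re-proving the theorem of \cite{HDH} with Euclidean edge costs --- legitimate, but a substantial paper's worth of work rather than the short reduction actually used.
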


The constant $\mu$ from the statement is fixed throughout this manuscript and depends only on $p$ and $d$. It was originally defined in \cite[Lemma 3]{HN}. The constant $\theta$ depends on $\epsilon, p, f$ and $\M$.

Proposition \ref{thm resta} is derived from a related result in \cite{HDH}, in which the authors establish the convergence of a sample statistic known as the \textit{power-weighted shortest path} to the population Fermat distance. For $p > 1$ and points $x,y\in \M$, the power-weighted shortest path between $x,y$ is defined as
\begin{equation}\label{weighted shortest path}
L_{\XX_n, p}(x,y) = \inf_{\gamma}\sum_{i=0}^{k}d_{\M}(x_{i+1},x_i)^{p}
\end{equation}
where the infimum is taken over all paths $\gamma=(x_0, \dots, x_{k+1})$ in $\XX_n$ of finite length with $x_0=x$, $x_{k+1} = y$.

\begin{theorem}\cite[Theorem 1]{HDH}\label{thm damenlin}
Let $p>1$ and $\varepsilon > 0$. Suppose that $(b_n)_{n \geq 1}$ is a sequence of positive real numbers such that $\frac{\log(n)}{n b_n^{d}} \to 0$ as $n$ goes to infinity. Then, there exists a constant $\theta>0$ (which depends on $\varepsilon$) such that
\[
\PP \left(\sup_{\substack{x,y\in \M\\d_{\M}(x,y)\geq b_n}}\left|\frac{n^{(p-1)/d}L_{\XX_n, p}(x,y)}{ d_{f,p}(x,y)}-\mu\right|>\varepsilon\right)\leq \exp(-\theta(n b_n^{d})^{\alpha})
\]
for all sufficiently large $n$, where the supremum is taken over $x,y\in \M$ with
$d_{\M}(x,y)\geq b_n$.
\end{theorem}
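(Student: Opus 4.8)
\emph{Proof strategy for Theorem~\ref{thm damenlin}.} The plan is to treat $L_{\XX_n,p}$ as a Euclidean first-passage percolation (FPP) functional on $\M$ and to prove a homogenization result, following the approach of Howard--Newman in the planar case and its adaptation to variable density. The starting point is the homogeneous model: for a unit-intensity Poisson process $P_1$ in $\RR^d$, set $T(u,v)=\inf_\gamma\sum_i|x_{i+1}-x_i|^p$, the infimum over finite paths through the points of $P_1$ joining $u$ to $v$. By superadditivity of $u\mapsto T(0,u)$ and Kingman's subadditive ergodic theorem one gets a constant $\mu=\mu(p,d)$, the one of \cite{HN}, with $T(0,u)/|u|\to\mu$ as $|u|\to\infty$. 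A scaling computation (map a process of intensity $\lambda$ to intensity $1$ via $x\mapsto\lambda^{1/d}x$, under which each edge cost $|x_{i+1}-x_i|^p$ scales by $\lambda^{-p/d}$) then gives $T_\lambda(u,v)\approx\mu\,\lambda^{-(p-1)/d}|u-v|$. Taking $\lambda=n f(x_0)$ locally already explains both the normalization $n^{(p-1)/d}$ and the exponent $-(p-1)/d$ on $f$ in $d_{f,p}$.

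The core is a \emph{local shape theorem with exponential control}. On a geodesic ball $B$ of radius of order $b_n$ around a point $x_0\in\M$, the manifold is $C^2$-close to its tangent space, so $d_{\M}(x_i,x_{i+1})=|x_i-x_{i+1}|\big(1+O(b_n^2)\big)$ for nearby sample points; consequently $L_{\XX_n,p}$ restricted to $B$ agrees, up to negligible error, with the Euclidean FPP functional. Moreover $f$ is nearly constant on $B$, equal to $f(x_0)$, and the $\sim n f(x_0)\,\Vol(B)$ points of $\XX_n$ in $B$ are, after Poissonization, close to a homogeneous Poisson process of intensity $n f(x_0)$. On such a ball one shows that, with probability at least $1-\exp\!\big(-\theta(n b_n^d)^\alpha\big)$, every local passage time between $u,v\in B$ lies within a factor $1\pm\varepsilon$ of $\mu\,(n f(x_0))^{-(p-1)/d}|u-v|$. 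The stretched-exponential exponent $\alpha=1/(d+2p)$ is produced by balancing the Kesten/Talagrand-type fluctuation bounds for the passage time against the number of points $\sim n b_n^d$ available in the ball; this is exactly where the hypothesis $\log n/(n b_n^d)\to0$ enters, ensuring enough points for the shape theorem to be effective.

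For the \textbf{upper bound} I would fix $x,y$ with $d_{\M}(x,y)\geq b_n$, take the $d_{f,p}$-geodesic $\gamma$ joining them, and partition it into $O(1/b_n)$ segments on each of which $f$ and the metric are nearly constant. Concatenating the near-optimal local paths supplied by the good event and using subadditivity of $L_{\XX_n,p}$ gives $n^{(p-1)/d}L_{\XX_n,p}(x,y)\leq(1+\varepsilon)\mu\,d_{f,p}(x,y)$. For the \textbf{lower bound}, any sample path from $x$ to $y$ is cut by the same partition, and the local lower estimate forces each piece to cost at least $(1-\varepsilon)\mu\,(n f)^{-(p-1)/d}$ times the Euclidean length it traverses; summing these contributions and recognizing a Riemann sum for $\int f(\gamma_t)^{-(p-1)/d}|\dot\gamma_t|\,dt$ yields the matching lower bound, after ruling out paths that wander far from the tube around $\gamma$ (which cost strictly more, again by the local estimates).

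Finally, \textbf{uniformity} over all admissible pairs is obtained by a covering argument: choose an $\eta$-net of $\M$ with $\eta$ a small power of $b_n$, apply the two-sided bound to the $O(\eta^{-d})$ net pairs via a union bound against the stretched-exponential tail (which absorbs the polynomial factor), and extend to arbitrary $x,y$ using a continuity estimate for $L_{\XX_n,p}$ together with $d_{\M}(x,y)\geq b_n$, which guarantees the additive endpoint errors are negligible compared with the main term of order $b_n$. The hardest part is the local shape theorem in the lower direction with the correct rate $\exp\!\big(-\theta(nb_n^d)^\alpha\big)$: controlling the fluctuations of the FPP metric simultaneously over all pairs in each ball, while transferring the flat, constant-density Euclidean estimates to the curved, variable-density setting of $\M$, is the delicate technical heart of the argument.
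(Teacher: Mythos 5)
A point of reference before anything else: the paper does not prove Theorem \ref{thm damenlin} at all --- it is imported verbatim as \cite[Theorem 1]{HDH}, with the constant $\mu$ taken from \cite[Lemma 3]{HN}, and the paper's own technical work (Propositions \ref{upper bound}, \ref{thm quotient} and \ref{thm resta}) only begins downstream of this statement. So there is no internal proof to compare your proposal against; what you have written is, in effect, a reconstruction of the argument of \cite{HDH}, and as such it captures the right architecture: the Howard--Newman homogeneous Euclidean FPP constant $\mu(p,d)$ via Kingman's theorem (note the slip: the passage time is \emph{sub}additive, not superadditive, which is exactly why Kingman applies), the scaling computation explaining the normalization $n^{(p-1)/d}$ and the exponent $-(p-1)/d$ on $f$, localization to balls of radius of order $b_n$ where $f$ is nearly constant and $d_{\M}$ is $(1+O(b_n^2))$-equivalent to the Euclidean distance, Poissonization, concatenation along the $d_{f,p}$-geodesic for the upper bound, a Riemann-sum argument for the lower bound, and a net plus union bound for uniformity. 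That is essentially how the cited proof proceeds.

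Judged as a proof rather than a strategy, however, the decisive step is missing, and you acknowledge it yourself: the ``local shape theorem with exponential control'' --- the two-sided estimate holding simultaneously for all pairs in a ball with probability $1-\exp(-\theta(nb_n^d)^{\alpha})$, with the specific exponent $\alpha = 1/(d+2p)$ --- is asserted by appeal to ``Kesten/Talagrand-type fluctuation bounds'' that are not available off the shelf for this power-weighted model; deriving them, and the exponent $\alpha$, is the bulk of \cite{HN} and \cite{HDH}, not a balancing heuristic. Two further steps would need genuine arguments. First, the lower-bound ``cutting'' step fails without an a priori bound showing that optimal paths contain no long edges and do not stray far from the geodesic: a single long edge can skip an entire segment of your partition, so the per-segment local estimates cannot simply be summed. (It is telling that the present paper has to prove exactly such a bound separately, as Corollary \ref{global_bound}, even while taking Theorem \ref{thm damenlin} as given.) Second, de-Poissonization back to the i.i.d. sample of size $n$, which you mention only in passing, costs a quantitative comparison if the stretched-exponential rate is to survive. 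None of this makes your outline wrong --- it is a faithful map of the known proof --- but it is a map, with the terrain at its technical heart left uncharted.
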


As explained in the paragraph following Theorem 1 in \cite[p. 2793]{HDH}, the requirement that $\frac{\log(n)}{n b_n^{d}} \to 0$ is necessary in order to obtain a nontrivial upper bound for the probability.

Note that in Theorem \ref{thm damenlin}, the convergence holds for the set of points $x,y\in \M$ with $d_{\M}(x,y)$ greater than some sequence $(b_n)$. However, since we will be interested in studying the Gromov--Hausdorff convergence of the associated metric spaces (see \eqref{GH_raw} below), it is necessary to
have uniform control of the convergence of the estimated distance for \textit{all} points in the manifold.
The uniform convergence is one of the main improvements upon Theorem \ref{thm damenlin} we show in Proposition \ref{thm resta}.
Also, notice that the proposed statistic $L_{\XX_n, p}$ of $d_{f,p}$ is based on the previous knowledge of the inherited Riemannian distance $d_\M$. In the general data analysis setting, only a sample of points in a Euclidean space is given. Under the assumption that points lie on an (unknown) manifold $\M$, the goal is to find an estimator of the intrinsic distance $d_{f,p}$ that can be completely computed from the sample. In Proposition \ref{thm resta}, we prove that sample Fermat distance $d_{\XX_n, p}$ is indeed a good estimator of $d_{f,p}$. 

Proposition \ref{thm resta} arises
as a natural continuation of Theorem \ref{thm damenlin}.
The main idea of the proof is to show that any segment that is part of any shortest path with respect to $d_{\XX_n,p}$ will be arbitrarily small with high probability if $n$ is large enough.
This will allow us to deduce that the power-weighted distance is well approximated by the sample Fermat distance.
We defer the proof to Appendix \ref{proofs}.

We will next  estimate the Gromov--Hausdorff distance between the metric space $\XX_n$ with an appropriate re-scaling of the sample Fermat distance $d_{\XX_n, p}$ and $\M$ endowed with the population Fermat distance $d_{f,p}$. Recall that the \textit{Gromov--Hausdorff distance} $d_{GH}$ is a metric on the (isometry classes of) compact metric spaces that, roughly speaking, quantifies how difficult it is to match every point of a metric space $(\XX,\rho_{\XX})$ with some point of another space $(\YY, \rho_{\YY})$. More formally, it is defined as
\begin{equation}\label{GH_raw}
   d_{GH}\big((\XX, \rho_\XX),(\YY, \rho_\YY)\big) := \inf \{d_H(h_1(\XX), h_2(\YY))\},
\end{equation}
where the infimum is over all the isometric embeddings $h_1\colon\XX \to \WW$, $h_2\colon\YY \to \WW$ in a common metric space $\WW$ and $d_H$ stands for the Hausdorff distance. We will employ the following equivalent characterization of the Gromov-Hausdorff distance, which is often more convenient:
\begin{equation}\label{GH}
d_{GH}\big((\mathbb X, \rho_{\mathbb X}) ,(\mathbb Y, \rho_{\mathbb Y})\big) = \frac{1}{2}\inf_R \sup_{(x,y), (x', y')\in R} |\rho_\XX(x, x') - \rho_\YY(y, y')|,
\end{equation}
where the infimum is taken over subsets $R\subseteq \mathbb X\times \mathbb Y$ such that the projections $\pi_{\mathbb X}(R) = \mathbb X$, $\pi_{\mathbb Y}(R) = \mathbb Y$.
 
We are now ready to state our main theorem. For notational convenience, we set $d_{n,p} = \frac{n^{(p-1)/d}}{\mu} d_{\XX_n, p}$, the re-scaled sample Fermat distance on $\XX_n$.

\begin{theorem}\label{thm GH}
Let $\varepsilon > 0$ and $\lambda \in \big((p-1)/pd, 1/d\big)$. There exists a constant $\theta > 0$ such that
\[
    \PP\big( d_{GH}( (\M, d_{f,p}), (\XX_n, d_{n,p})) > \varepsilon \big) \leq \exp{\left(-\theta n^{(1 - \lambda d) \alpha}\right)}
\]
for $n$ large enough and $\alpha = 1/(d+2p)$.
\end{theorem}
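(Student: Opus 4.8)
The plan is to exploit the characterization \eqref{GH} of the Gromov--Hausdorff distance together with the uniform estimate already supplied by Proposition \ref{thm resta}. Since $\XX_n\subseteq\M$, the key idea is to build an explicit correspondence $R\subseteq\M\times\XX_n$ via a nearest-point projection: fix a map $\pi\colon\M\to\XX_n$ assigning to each $m\in\M$ one of its nearest sample points with respect to the inherited Riemannian distance $d_{\M}$, and set $R=\{(m,\pi(m)):m\in\M\}$. Because every sample point is its own nearest neighbour, $\pi$ restricts to the identity on $\XX_n$, so $\pi$ is surjective and $R$ is a genuine correspondence, i.e. $\pi_{\M}(R)=\M$ and $\pi_{\XX_n}(R)=\XX_n$. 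Plugging $R$ into \eqref{GH} yields
\[
d_{GH}\big((\M,d_{f,p}),(\XX_n,d_{n,p})\big)\le \tfrac{1}{2}\sup_{m,m'\in\M}\big|d_{f,p}(m,m')-d_{n,p}(\pi(m),\pi(m'))\big|.
\]

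Next I would split the right-hand side by inserting the intermediate term $d_{f,p}(\pi(m),\pi(m'))$ and applying the triangle inequality twice. The first contribution, $|d_{f,p}(m,m')-d_{f,p}(\pi(m),\pi(m'))|$, is bounded by $d_{f,p}(m,\pi(m))+d_{f,p}(m',\pi(m'))$, hence by twice the \emph{Fermat covering radius} $R_{f,p}:=\sup_{m\in\M}d_{f,p}(m,\pi(m))$. The second contribution only involves pairs of sample points and, since $d_{n,p}=\tfrac{n^{(p-1)/d}}{\mu}d_{\XX_n,p}$, equals $\tfrac1\mu\,|\mu d_{f,p}-n^{(p-1)/d}d_{\XX_n,p}|$ evaluated at those points; it is therefore dominated by $\tfrac1\mu\sup_{x,y\in\M}|n^{(p-1)/d}d_{\XX_n,p}(x,y)-\mu d_{f,p}(x,y)|$. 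Altogether one obtains the deterministic bound
\[
d_{GH}\big((\M,d_{f,p}),(\XX_n,d_{n,p})\big)\le R_{f,p}+\tfrac{1}{2\mu}\sup_{x,y\in\M}\big|n^{(p-1)/d}d_{\XX_n,p}(x,y)-\mu d_{f,p}(x,y)\big|.
\]

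With this inequality in hand, the event $\{d_{GH}>\varepsilon\}$ is contained in the union of $\{R_{f,p}>\varepsilon/2\}$ and the event that the supremum above exceeds $\mu\varepsilon$. The latter is controlled directly by Proposition \ref{thm resta}, applied with $\mu\varepsilon$ in place of $\varepsilon$, which already produces the target bound $\exp(-\theta n^{(1-\lambda d)\alpha})$. For the former, I would use that $f$ is bounded below by $m_f>0$, so that $d_{f,p}\le m_f^{-(p-1)/d}d_{\M}$ and hence $R_{f,p}$ is controlled by the ordinary Riemannian covering radius of $\XX_n$ in $\M$. A standard covering argument --- cover $\M$ by $O(r^{-d})$ balls of radius $r$, note each has $f$-mass $\gtrsim r^d$ and so is missed by all $n$ samples with probability at most $\exp(-cnr^d)$, then take a union bound --- shows that $\PP(R_{\M}>r)$ decays like $\exp(-cn)$, up to polynomial factors, for fixed $r$.

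Finally, I expect the only delicate point to be the interplay of the two probabilistic rates: one must verify that the covering-radius probability, though of a different (and in fact faster) exponential order, does not worsen the exponent. This is ensured by $(1-\lambda d)\alpha<1$, so that $\exp(-cn)$ is negligible compared with $\exp(-\theta n^{(1-\lambda d)\alpha})$ and the two bounds combine to give the stated estimate. Everything else --- the choice of correspondence, the triangle-inequality decomposition, and the equivalence of $d_{f,p}$ with $d_{\M}$ --- is routine once Proposition \ref{thm resta} is available, which is precisely where the genuine analytic and probabilistic difficulty of the argument resides.
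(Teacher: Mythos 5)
Your proposal is correct and essentially reproduces the paper's own proof: the graph of the nearest-point projection is exactly the correspondence the paper plugs into \eqref{GH}, your triangle-inequality decomposition coincides with the paper's bound \eqref{GH bound} (your $R_{f,p}+\tfrac{1}{2\mu}\sup|n^{(p-1)/d}d_{\XX_n,p}-\mu d_{f,p}|$ is the same quantity), and the two events are controlled as in the paper by Proposition \ref{thm resta} and a covering-radius estimate. The only immaterial deviations are that you take the nearest point with respect to $d_{\M}$ rather than $d_{f,p}$, and that you bound the covering radius at a fixed scale $r$ (rate $\exp(-cn)$) where the paper invokes Lemma \ref{spacing} with the shrinking radius $n^{(\alpha-1)/d}$ (rate $\exp(-\theta n^{\alpha})$); both dominate the target rate $\exp(-\theta n^{(1-\lambda d)\alpha})$, as you correctly observe.
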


Before presenting the proof of Theorem \ref{thm GH}, we will need a preliminary lemma which asserts that, with high probability, no point of $\M$ is too far from the nearest point of the sample. The argument of this proof is standard, but we include it in Appendix \ref{proofs} for the reader's convenience.

\begin{lemma}\label{spacing}
For any $\kappa > 0$, the event
\[
    \left\{ \sup_{x \in \M} d_{\M}(x,\XX_n) \geq  n^{(\kappa-1)/d} \right\}
\]
holds with probability at most $\exp(-\theta n^{\kappa})$ for some constant $\theta > 0$ if $n$ is large enough.
\end{lemma}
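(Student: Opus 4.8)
The plan is to use a standard covering argument together with a union bound. Write $r_n = n^{(\kappa-1)/d}$, so that the event in the statement is exactly $\{\sup_{x\in\M} d_\M(x,\XX_n) \geq r_n\}$, i.e. the event that the sample fails to $r_n$-cover $\M$ in the intrinsic metric. The key observation is that if $\{B_\M(c_i, r_n/2)\}_{i=1}^{N_n}$ is a finite cover of $\M$ by geodesic balls of radius $r_n/2$ and every such ball contains at least one point of $\XX_n$, then every $x\in\M$ lies strictly within distance $r_n$ of some sample point by the triangle inequality. Since $x\mapsto d_\M(x,\XX_n)$ is continuous on the compact manifold $\M$, its supremum is attained and is therefore $<r_n$. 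Hence the bad event is contained in the event that at least one ball of the cover contains no point of $\XX_n$, and it suffices to bound the probability of the latter.

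For the geometric input I would invoke the compactness and smoothness of $\M$. Since $\M$ is a closed $d$-manifold with smooth metric, there is a radius $\rho_0>0$ (controlled by $\inj(\M)$ and a bound on the sectional curvature) such that for every $\rho<\rho_0$ and every $c\in\M$ one has $\Vol(B_\M(c,\rho)) \geq c_0\,\rho^d$ for a uniform constant $c_0>0$, and moreover $\M$ can be covered by at most $N_n \leq C_1\, r_n^{-d} = C_1\, n^{1-\kappa}$ balls of radius $r_n/2$. For $n$ large enough $r_n/2<\rho_0$, so both estimates apply. Combining the volume lower bound with $f\geq m_f>0$, the density-weighted mass of each ball of the cover is bounded below by
\[
q_n := \inf_i \PP\big(x_1\in B_\M(c_i,r_n/2)\big) \geq m_f\, c_0\, (r_n/2)^d = c_2\, n^{\kappa-1}
\]
for a constant $c_2>0$.

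The probabilistic step is then immediate. Because the sample points are i.i.d., the probability that a fixed ball of the cover contains no sample point is $(1-q_n)^n \leq \exp(-n q_n) \leq \exp(-c_2 n^\kappa)$, and a union bound over the $N_n$ balls yields
\[
\PP\Big(\sup_{x\in\M} d_\M(x,\XX_n) \geq r_n\Big) \leq N_n\, \exp(-c_2 n^\kappa) \leq C_1\, n^{1-\kappa}\exp(-c_2 n^\kappa).
\]
Since $(1-\kappa)\log n = o(n^\kappa)$ for any $\kappa>0$, the polynomial prefactor is absorbed for large $n$: choosing any $\theta\in(0,c_2)$ gives $C_1 n^{1-\kappa}\exp(-c_2 n^\kappa)\leq \exp(-\theta n^\kappa)$ eventually, which is the claimed bound. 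The only delicate point is ensuring that the covering number estimate and the volume lower bound hold \emph{uniformly} over all centers $c\in\M$; this is precisely where compactness enters, furnishing a single threshold $\rho_0$ and uniform constants $c_0, C_1$ valid for every ball simultaneously. Everything else reduces to the elementary inequality $(1-q)^n\le e^{-nq}$ and the fact that a subexponential prefactor does not affect a bound of the form $\exp(-\theta n^\kappa)$.
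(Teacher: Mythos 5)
Your proof is correct and follows essentially the same route as the paper's: a cover of $\M$ by $O(n^{1-\kappa})$ metric balls of radius $n^{(\kappa-1)/d}/2$ with a uniform volume lower bound, the i.i.d.\ emptiness estimate $(1-q)^n \le e^{-nq}$, and a union bound whose polynomial prefactor is absorbed into $\exp(-\theta n^{\kappa})$. The only cosmetic difference is that you quote the covering-number and volume bounds directly from compactness, whereas the paper derives them by doubling a maximal packing and invoking Croke's inequality (via the injectivity radius) for the volume lower bound.
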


We are now in position to prove Theorem \ref{thm GH}.

\begin{proof} [Theorem \ref{thm GH}] In order to compute the Gromov--Hausdorff distance between $(\M, d_{f,p})$ and $ (\XX_n, d_{n,p})$, we consider in \eqref{GH} the relation 

\[
R=\{(x_i, x_i)\colon x_i\in \XX_n\}\cup\{(x_{y}, y)\colon y \in \mathcal M, d_{f,p}(x_{y}, y) = d_{f,p}(\XX_n, y)\}.
\]

By a simple application of the triangle inequality we get that 
\begin{equation}
\label{GH bound}
d_{GH}\big( (\M, d_{f,p}), (\XX_n, d_{n,p})\big) \le \frac{1}{2}\left(\sup_{x,y\in \XX_n} |d_{f,p}(x, y) - d_{n,p}(x, y)| + 2 \sup_{y\in \mathcal M}d_{f,p}(\XX_n, y) \right).
\end{equation}

Observe that the two terms on the right hand side of the previous inequality can be bounded above by Proposition \ref{thm resta} and Lemma \ref{spacing} respectively. 

Given $\varepsilon>0$, by   \eqref{GH bound}
we have that 
\begin{align*}
\PP&\Big(d_{GH}\big((\M, d_{f,p}), (\XX_n, d_{n,p})\big)>\varepsilon/2\Big) \\
& \le \PP\left(\sup_{x,y\in \XX_n} |d_{f,p}(x, y) - d_{n,p}(x, y)|>\varepsilon/2 \right) + \PP\left(\sup_{y\in \mathcal M}d_{f,p}(\XX_n,y)>\varepsilon/4\right)\\
\end{align*}
To bound the first term, we apply Proposition \ref{thm resta} to get
\[
\PP\left(\sup_{x,y\in \XX_n} |d_{f,p}(x, y) - d_{n,p}(x, y)|>\varepsilon/2 \right)\leq \exp\big(-\theta n^{(1 - \lambda d) \alpha}\big).
\]
for some positive constant $\theta$ and $n$ sufficiently large. As for the second term, notice that since
\[
    d_{f,p}(x,y)\leq m_f^{-(p-1)/d} d_{\M}(x,y),
\]
Lemma \ref{spacing} implies
\[
    \PP\left(\sup_{y\in \mathcal M}d_{f,p}(\XX_n,y)> n^{(\alpha-1)/d}m_f^{(p-1)/d}\right)\leq \exp(-\theta n^\alpha)
\]
for $n$ large. The proof follows by noticing that the sequence $n^{(\alpha-1)/d}m_{f}^{-(p-1)/d}$ converges to $0$ as $n$ goes to infinity.
\end{proof}

\begin{remark}[Rate of convergence]
The rate of convergence in Theorem \ref{thm GH} is related to the fluctuations of $n^{\frac{p-1}{d}}d_{\mathbb X_n,p}(x,y)$ around $\mu d_{f,p}(x,y)$ or, more coarsely, the variance of $n^{\frac{p-1}{d}}d_{\mathbb X_n,p}(x,y)$ (\cite{DW} provides strong evidence that the bias can be bounded by the variance). It is expected that this variance decreases as a power of $n$, i.e. 
\[
c n^{-\zeta} \le \mathrm{Var} \left( n^{\frac{p-1}{d}}d_{\mathbb X_n,p}(x,y) \right) \le Cn^{-\zeta}
\] for a dimension-dependent constant $\zeta = \zeta(d)>0$.
The precise value of $\zeta (d)$ is a still open problem in probability theory in the context of 
First Passage Percolation (\cite{HN2,ADH}). For $d=1$ it can be proved 
that $\zeta=1$.
For $d\ge 2$ it is widely believed \citep{ADH} that the exponent should not depend on $p$ and that for $d=2$ we should have $\zeta(2)=2/3$. 
For $d\ge 3$ it is not clear what the value of $\zeta(d)$ should be. If we write $\zeta(d) = -2(\chi(d) -1)/d$, it is expected that $\chi(d)$ should decrease with the dimension but there is not agreement on whether there exists some critical dimension $d_c$ such that $\chi(d)=0$ for $d\ge d_c$ or even if we should have $\chi(d) \to 0$ as $d \to \infty$ \cite[Section 3]{ADH}. In \cite{HN2} non-optimal rigorous bounds have been proven for Euclidean First Passage Percolation that in our context read
\[
    \PP\left( d_{GH}\left(\big(\M, d_{f,p}\big), \big(\XX_n, {\scriptstyle \frac{n^{(p-1)/d}}{\mu}} d_{\XX_n, p}\big)\right) > n^{-\frac1 d + \varepsilon} \right) \leq C_1 \exp{\left(-C_0 n^{\varepsilon}\right)}
\]
for positive constants $C_0, C_1$ depending on $\varepsilon>0$.
This bound follows immediately in our case when $\M$ is the closure of a bounded open and convex set and $f$ is constant on $\M$. For the general case considered in this manuscript we expect to have similar bounds. Obtaining those bounds would be highly valuable, but its analysis is out of the scope of this paper.  We refer the reader to \cite{LMM} for a detailed discussion about the rate of convergence.
\end{remark}

\section{Fermat-based Persistent Homology}\label{persistence_diagrams}

\par In this section we explore the use of Fermat distance as input in the computation of the persistence diagram associated to a sample of a manifold. We deduce the almost sure convergence of persistence diagrams of the sample $\XX_n$  with the (re-scaled) sample Fermat distance  towards the persistence diagram of $(\M, d_{f,p})$. We also show that we expect to read the correct homology of $\M$  for a longer parameter interval in the diagram associated to the sample $\XX_n$ computed with Fermat distance as compared with the use of Euclidean distance. Finally, we prove that Fermat-based persistence diagrams are robust to the presence of outliers for homology degree greater than 0.

\subsection{Convergence of Persistence Diagrams}
We start by briefly recalling the main concepts and results in persistent homology theory and refer the reader to the works \cite{CDSO, CDSGO} for a more thorough exposition.

For the computation of the persistent homology of a point cloud, one imagines each point as a {\em ball} (that is, representing a small surrounding region) and builds a combinatorial model for the space connecting the points according to whether the corresponding regions intersect. More precisely, for every fixed value of a parameter or {\em scale} that controls the size of the region that each point represents, one gets a {\em simplicial complex} (i.e., a higher dimensional analogue of a graph). This family of simplicial complexes, also known as a {\em filtration}, is the input of the procedure to compute persistent homology. Indeed, the topological features of this family of complexes change as the scale parameter grows: different connected components join in one, some loops are filled, new cavities appear, etc. By analyzing these transitions, we are able to assign a \textit{birth} and a \textit{death} value to each of these features, and the difference between them represents its \textit{persistence}. The most persistent features represent \textit{topological signatures}, whereas the shortest intervals may be considered as \textit{noise}. The output of this procedure is summarized in an object called \textit{persistence diagram}. We next give the formal definitions.

\par Given a (possibly infinite) metric space $(\mathbb X,\rho)$, a filtration over the real numbers $\mathrm{Filt}(\mathbb X,\rho) \\ = (\mathrm{Filt}_{\epsilon}(\mathbb X, \rho))_{\epsilon\in \RR}$ is a family of simplicial complexes with vertex set $\mathbb X$ such that $\mathrm{Filt}_{\epsilon}(\mathbb X) \subseteq \mathrm{Filt}_{\epsilon'}(\mathbb X)$ whenever $\epsilon \le \epsilon'$. For the purposes of this article, we are going to consider only some natural filtrations that are strongly linked to the metric $\rho$.
The \textit{\v{C}ech filtration}  consists of  a family of  simplicial complexes $(\Cech_{\epsilon}(\XX))_{\epsilon\in \RR}$ where a set of points $[x_0, \dots, x_k]$ forms a $k$-simplex of $\Cech_{\epsilon}(\XX)$  if the intersection of the $k+1$ closed balls $\bar B_{\rho}(x_i, \epsilon)$ is non empty. Equivalently, $\Cech_{\epsilon}(\XX)$ is the \textit{nerve} of the cover $\{\bar B_{\rho}(x, \epsilon)\colon x\in \XX\}$.
The \v{C}ech complex is the most natural way to build a simplicial complex associated to a space, since in favourable cases, it allows to recover its homotopy type as a consequence of the Nerve Theorem \cite[\S 4.G]{Hat}. However, the construction of the \v{C}ech complex is expensive from a computational point of view, since it requires to check for a large number of intersections. 
To circumvent this issue, one can instead consider the \textit{Vietoris--Rips filtration} $(\Rips_{\epsilon}(\XX))_{\epsilon\in \RR}$.  The $k$-simplices of $\Rips_{\epsilon}(\XX)$ are sets   $[x_0, \dots, x_k]$  such that $\rho(x_i, x_j) \leq \epsilon$ for all $0\leq i,j\leq k$. Equivalently, $\Rips_{\epsilon}(\XX)$ can be defined as the flag complex of $\Cech_{\epsilon}(\XX)$ (that is, the clique complex of the 1-skeleton of $\Cech_{\epsilon}(\XX)$).
If $\XX$ is a subset of the Euclidean space $\RR^D$, then one have $\Cech_{\epsilon}(\XX)\subseteq \Rips_{2\epsilon}(\XX)\subseteq \Cech_{\sqrt {2D/(D+1)}\epsilon}(\XX)$; see e.g. Theorem 2.5. from \cite{MR2308949}. In this sense, the Rips complex is a computationally efficient approximation of the \v{C}ech complex. Other filtrations involving lower dimensional simplices, such as the \textit{Alpha filtration} \cite{ MR713690}, can also be considered in our context. 

\par For any filtration as above, it is clear that the topology of the complexes $\Filt_{\epsilon}(\XX)$ will typically change as $\epsilon$ increases. This evolution is appropriately captured by considering the homology groups (over a field $\mathbf{k}$) of the nested family of simplicial complexes. One gets in this way a sequence of vector spaces $(H_\bullet(\Filt_{\epsilon}(\XX)))_{\epsilon \in \RR}$, where the inclusions $\Filt_{\epsilon}(\XX)\subseteq \Filt_{\epsilon'}(\XX)$ induce canonical linear maps $H_\bullet(\Filt_{\epsilon}(\XX))\to H_\bullet(\Filt_{\epsilon'}(\XX))$ in homology. Under some  conditions, such as finiteness of $\XX$ \citep{ELZ, MR2121296}, this sequence can be decomposed as a direct sum of \textit{intervals} $ I[\epsilon_b,\epsilon_d ]$ defined as
\[
 0\xrightarrow{\hspace{7pt}0\hspace{7pt}} \cdots 
 \xrightarrow {\hspace{7pt}0\hspace{7pt}}  0 
 \xrightarrow {\hspace{7pt}0\hspace{7pt}}  \underbrace{\mathbf{k}
 \xrightarrow {\hspace{7pt}\mathbbm{1}\hspace{7pt}}  \cdots 
 \xrightarrow {\hspace{7pt}\mathbbm{1}\hspace{7pt}}  \mathbf{k}}_{[\epsilon_b, \epsilon_d]}
 \xrightarrow {\hspace{7pt}0\hspace{7pt}}   0 
 \xrightarrow {\hspace{7pt}0\hspace{7pt}}  \cdots 
 \xrightarrow {\hspace{7pt}0\hspace{7pt}}  0
\]

Every interval is determined by the \textit{birth} and \textit{death} parameters $\epsilon_b$ and $\epsilon_d$ respectively, and it can be interpreted as a \textit{topological feature} of $\XX$ with an associated \textit{lifetime} $\epsilon_d-\epsilon_b$  (note that $\epsilon_d$ may be infinite, in that case the feature has infinite lifetime). 
The (multi)set  of points $(\epsilon_b, \epsilon_d)$ is called the \textit{persistence diagram}  of $(\XX, \rho)$ and is denoted $\dgm(\Filt(\XX, \rho))$ (or simply $\dgm(\Filt(\XX))$ if $\rho$ is clear from the context). Persistence diagrams are contained in the half (extended) plane above the diagonal $\Delta=\{(x,y)\colon x=y\}.$ For technical reasons, the diagonal $\Delta$ is considered as part of every persistence diagram with infinite multiplicity. 
In \cite{CCSGGO, CDSGO,CDSO} it is proved that, within a more abstract persistent framework, it is possible to extend the definition of persistence diagrams to some cases where the sequence might not be interval-decomposable. In particular, it is shown in \cite{CDSO} that if $\XX$ is a compact metric space, for every value of $\epsilon$ at most a finite number of new topological features appear (even though the vector spaces $(H_\bullet(\Filt_{\epsilon}(\XX)))_{\epsilon \in \RR}$ may be infinite-dimensional) and hence  $\dgm(\Filt(\XX))$ is well-defined. Notice also that all the definitions can be extended to filtrations indexed over connected subsets of the real line.

\begin{example}[Eyeglasses] 
We compute the persistence diagram associated to the Vietoris--Rips filtration of the sample points from Example \ref{eyeglasses example}, Figure \ref{fig:eyeglasses dist}. 
We compare the results obtained with different distant choices: the Euclidean distance, the $k$-NN estimator of the inherited Riemannian distance for $k=4$ and $k=5$ and the sample Fermat distance for $p=2.5$ and $p=3$. We also considered a weighted Vietoris--Rips filtration derived by a DTM-function with parameters $m=0.01$ and $p=1$  (see \cite{MR3968644} and Remark \ref{DTM}).
The homology of the eyeglasses curve has one generator of $H_0$ and one generator of $H_1$. However, it can be noticed that for either Euclidean and $k$-NN distance for $k\geq 5$, the persistence diagram displays two salient generators for the first homology group $H_1$, which can be attributed to the small reach of the manifold. As it can be seen in Figure \ref{fig:eyeglasses noise}, smaller values of $k$ fail to capture the geometry of the eyeglasses manifold.
A similar situation is presented using the  Vietoris--Rips DTM-filtration.
Finally, for the Vietoris--Rips filtration using Fermat distance for different choices of $p$, the diagrams show accurately only one persistent generator for $H_1$. On the other hand, the number of noticeable connected components increases with $p$. This effect is caused by the presence of noisy points in regions of extremely low density, becoming isolated points (or outliers) as $p$ evolves (cf. Remark \ref{H0}). 

\begin{figure}[tb!]
\begin{center}
\includegraphics[width=0.327\textwidth]{./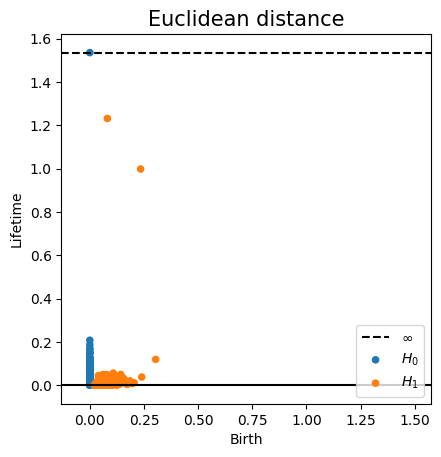}
\includegraphics[width=0.328\textwidth]{./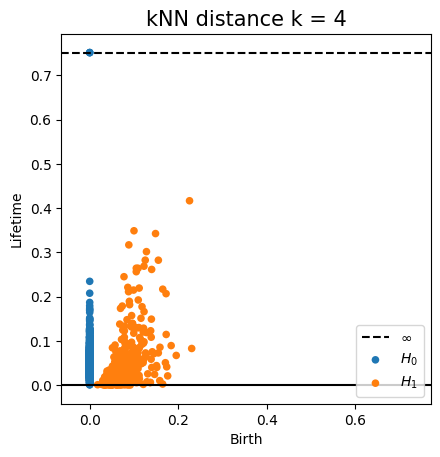}
\includegraphics[width=0.32\textwidth]{./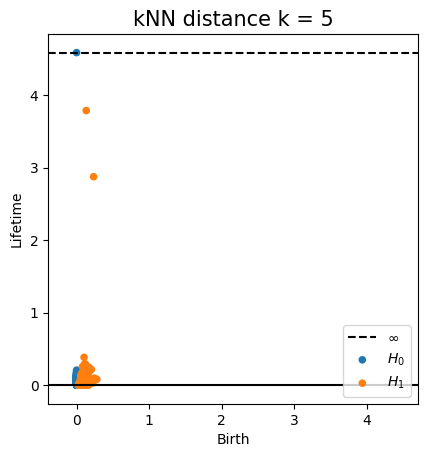}

\includegraphics[width=0.324\textwidth]{./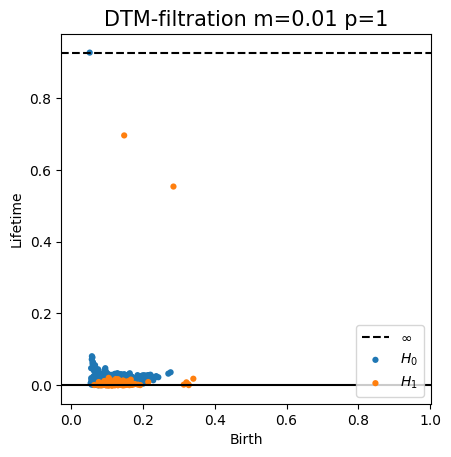}
\includegraphics[width=0.325\textwidth]{./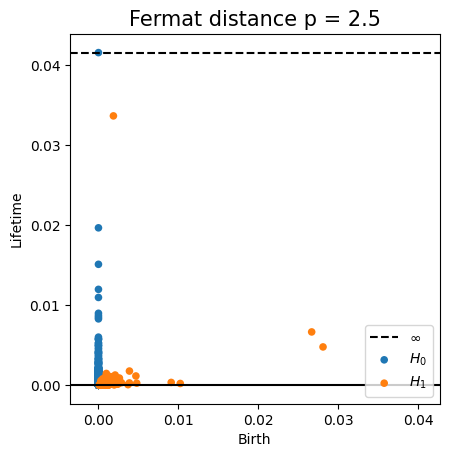}
\includegraphics[width=0.335\textwidth]{./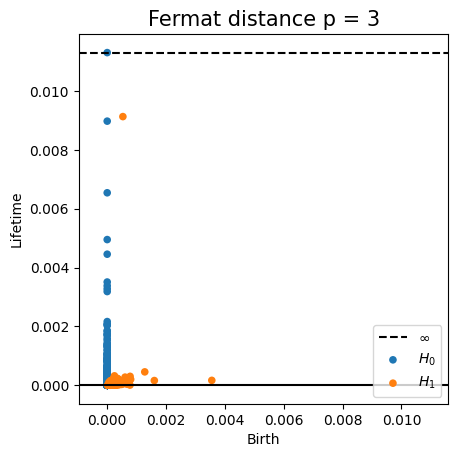}
\end{center}
\caption{Persistence diagrams (lifetime) associated to the eyeglasses point cloud with noise for different filtrations. Top: Vietoris--Rips filtration with Euclidean distance and $k$-NN distance for $k=4$ and $k=5$.
Bottom: Vietoris--Rips DTM-filtration with parameters $m=0.01$ and $p=1$ and Vietoris--Rips filtration with Fermat distance for $p=2.5$ and $p=3$.}
\label{fig:eyeglasses noise}
\end{figure}
\end{example}

Since in our setup we usually only get an approximation of the metric space under consideration, we will be interested in comparing persistence diagrams built on top of different metric spaces. In this sense, the {\em bottleneck distance} is a frequently used quantity to measure the difference between two persistence diagrams. Given persistence diagrams $\dgm_1$ and  $\dgm_2$, consider all perfect matchings $M\subseteq \dgm_1\times \dgm_2$ such that every point of $\dgm_1\smallsetminus \Delta$ and $\dgm_2\smallsetminus \Delta$ is paired exactly once in $M$. Note that points in $\dgm_1\smallsetminus \Delta$ and $\dgm_2\smallsetminus \Delta$ are allowed to be paired with points in the diagonal $\Delta$.
The bottleneck distance $d_b(\dgm_1, \dgm_2)$ is then defined as the infimum, over all such matchings $M$ as before, of the largest $\ell_{\infty}$-distance between matched pairs. That is,
\[
d_b(\dgm_1, \dgm_2) = \inf_{M} \max_{(x,y)\in M} |x-y|_{\infty}.
\]
\par The stability theorem \citep{CSEH, CDSO} ensures continuity (more precisely, Lipschitz continuity) in the process of computing persistence diagrams for a metric space. This means that small perturbations in the original metric space (in the sense of Gromov--Hausdorff) will translate into an at most proportional perturbation in the corresponding persistence diagram (in the sense of the bottleneck distance). 
Formally, it states that for any two precompact metric spaces $\mathbb X$ and $\mathbb Y$
\begin{equation}
\label{stability}
d_b\Big(\mathcal \dgm\big(\Filt(\XX, \rho_{\XX})\big),\mathcal \dgm\big(\Filt(\YY, \rho_{\YY})\big)\Big)\leq 2 d_{GH}\big((\XX,\rho_{\XX}),(\YY,\rho_{\YY})\big).
\end{equation}
This fact is exploited in \cite{FLRWBS, CGLM} to establish the almost sure convergence (in the sense of bottleneck distance) of the persistence diagrams associated to samples of a compact metric space drawn according to a measure satisfying certain hypotheses to the persistence diagram of the space. In these works the distance function of the underlying metric space is assumed to be known, and it is inherited by the sample.  
\par We are able to obtain convergence of persistence diagrams in our context, in which only an estimator of the underlying metric is available. Concretely, given the metric spaces $(\M, d_{f,p})$ and $(\XX_n, d_{n,p})$, from the estimation of its Gromov--Hausdorff distance  of Theorem \ref{thm GH} and the stability theorem \eqref{stability} we deduce the following result.

\begin{corollary}\label{conv.diag}
Let $\varepsilon > 0$ and $\lambda \in \big((p-1)/pd, 1/d\big)$. There exists a constant $\theta > 0$ such that
\[
    \PP\Big( d_b\big(\dgm(\Filt(\M, d_{f,p})),\dgm(\Filt(\XX_n, d_{n,p}))\big) > \varepsilon \Big) \leq \exp{\left(-\theta n^{(1 - \lambda d) \alpha}\right)}
\]
for $n$ large enough and $\alpha = 1/(d+2p)$.
\end{corollary}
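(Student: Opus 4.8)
The plan is to obtain this corollary as an immediate consequence of the Gromov--Hausdorff estimate of Theorem \ref{thm GH} combined with the stability inequality \eqref{stability}. Once stability is available, the whole argument reduces to a containment of events followed by a harmless rescaling of $\varepsilon$, so no new probabilistic input is needed beyond Theorem \ref{thm GH} itself.

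First I would verify that the stability theorem is applicable, which is the only point genuinely requiring a word of care. Its hypothesis asks that both metric spaces be precompact. On one hand, $(\M, d_{f,p})$ is the geodesic metric space associated to the Riemannian tensor $g_p$ on a closed, hence compact, manifold, so it is a compact metric space; in particular its population persistence diagram $\dgm(\Filt(\M, d_{f,p}))$ is well defined by the results of \cite{CDSO}. On the other hand, $(\XX_n, d_{n,p})$ is a finite metric space (recall that $d_{\XX_n,p}$, and hence $d_{n,p}$, is a genuine metric when restricted to $\XX_n$) and is therefore trivially compact. Thus both spaces are precompact and inequality \eqref{stability} applies with $\XX = (\M, d_{f,p})$ and $\YY = (\XX_n, d_{n,p})$, yielding
\[
d_b\big(\dgm(\Filt(\M, d_{f,p})), \dgm(\Filt(\XX_n, d_{n,p}))\big) \le 2\, d_{GH}\big((\M, d_{f,p}), (\XX_n, d_{n,p})\big).
\]

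Since this inequality holds sample-path-wise, it gives the event inclusion
\[
\big\{ d_b > \varepsilon \big\} \subseteq \big\{ d_{GH} > \varepsilon/2 \big\},
\]
and hence $\PP(d_b > \varepsilon) \le \PP\big(d_{GH} > \varepsilon/2\big)$. Finally I would apply Theorem \ref{thm GH} at level $\varepsilon/2$: it supplies a constant $\theta > 0$ (depending on $\varepsilon$, $p$, $d$ and $\lambda$, generally different from the constant attached to $\varepsilon$ but still strictly positive) so that $\PP\big(d_{GH} > \varepsilon/2\big) \le \exp\big(-\theta n^{(1-\lambda d)\alpha}\big)$ for $n$ large enough, which is exactly the claimed bound with the same exponent $(1-\lambda d)\alpha$. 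There is no substantial obstacle in this argument; the only subtleties are the precompactness check above and the trivial bookkeeping that the positive constant $\theta$ may change when passing from $\varepsilon$ to $\varepsilon/2$ while the exponent in $n$ is unaffected.
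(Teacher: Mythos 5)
Your proof is correct and follows exactly the paper's route: the authors likewise deduce the corollary by combining the Gromov--Hausdorff bound of Theorem \ref{thm GH} with the stability inequality \eqref{stability}, via the same event inclusion $\{d_b > \varepsilon\} \subseteq \{d_{GH} > \varepsilon/2\}$. Your explicit verification of the precompactness hypothesis (compactness of $(\M, d_{f,p})$ and finiteness of $(\XX_n, d_{n,p})$) is a welcome extra detail that the paper leaves implicit.
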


\subsection{Homology Inference}

The content of Corollary \ref{conv.diag} is that $\dgm(\Filt(\XX_n, d_{n,p}))$ is (asymptotically)  a good estimator of $\dgm(\Filt(\M, d_{f,p}))$. On the other hand, if we were to employ the Euclidean distance $|\cdot|$, it follows from the results in \cite{CGLM} that the sample persistence diagrams $\dgm(\Filt(\XX_n, | \cdot |))$ converge to $\dgm(\Filt(\M,|\cdot|))$ under reasonable hypotheses. We are therefore interested in comparing for how long we may expect to read the correct homology of $\M$ in each of the diagrams $\dgm(\Filt(\M, d_{n,p}))$ and $\dgm(\Filt(\M, | \cdot |))$ in terms of two natural geometric measures associated to the manifold, namely, the reach and the convexity radius  \cite[see][]{Hau, Lat, NSW, ChL}. In this section we show that the homology of $(\M, d_{f,p})$ can be recovered correctly from its persistence diagram up to the convexity radius $\convex(\M,d_{f,p})$, whereas for $(\M, |\cdot|)$ this is guaranteed only up to its reach $\tau_{\M}$. Notice that the reach of a submanifold of an Euclidean space depends strongly on the particular embedding, whereas the convexity radius is an intrinsic quantity linked to the geometry of the manifold. There are simple examples of manifolds in which this distinction is relevant to correctly  recover its homology from a sample (see Examples \ref{eyeglasses example} and \ref{ellipse}).
\par Recall that given $\XX \subseteq \RR^{D}$ a closed subset, the \textit{medial axis} $\mathrm{Med}(\XX)$ of $\XX$ is defined as
\[
    \mathrm{Med}(\XX) := \{y \in \RR^{D}: d_{E}(y,\XX) = |p-y| \text{ for at least two different points } p \in \RR^{D} \},
\]
where $d_{E}(y,\XX) = \inf _{x\in \XX}|y-x|$.
The \textit{reach} $\tau_{\XX}$ of $\XX$, first introduced in \cite{Fed},  is the minimum distance from $\XX$ to $\mathrm{Med}(\XX)$, that is,
\[
    \tau_{\XX} := \inf_{x \in \XX} d_{E}(x,\mathrm{Med}(\XX)).
\]
Given a Riemannian manifold $(\N,g)$, we will say that a subset $S \subseteq \N$ is  \textit{geodesically convex} if for every two points in $S$, there is a unique geodesic segment that connects them and it is completely contained in $S$. The \textit{convexity radius} $\convex(\N,x)$ at a point $x \in \N$ is the supremum over those $r > 0$ for which the (geodesic) ball $B(x,r)$ is geodesically convex. The convexity radius $\convex(\N)$ of the manifold $\N$ is
defined as
\[
    \convex(\N) := \inf_{x \in \N} \convex(\N,x).
\]

\begin{proposition} 
\label{thm homotopy reconstruction}
Let $\M$ be a compact submanifold of $\RR^D$. Then, we have the following homotopy equivalences:
\begin{itemize}
    \item $\Cech_{\epsilon}(\M,|\cdot|) \simeq \M$ for $\epsilon < \tau_{\M}$ and $\Rips_{\epsilon}(\M,|\cdot|) \simeq \M$ for $\epsilon < 2\sqrt{\frac{D+1}{2D}}\tau_{\M}$, and both bounds are optimal, in the sense that there exist examples for which the homotopy equivalence does not hold for larger values of $\epsilon$.
    \item $\Cech_{\epsilon}(\M,d_{f,p}) \simeq \M$ and $\Rips_{\epsilon}(\M,d_{f,p}) \simeq \M$ for $\epsilon < \convex(\M,d_{f,p})$.
\end{itemize}
Moreover, if $d_{f,p}$ coincides up to a constant with $d_{\M}$ (i.e. $f$ is uniform), we have the estimate
\[
    \convex(\M, d_{f,p}) = \Vol(\M,d_{\M})^{(p-1)/d} \convex(\M,d_{\M}) \geq \Vol(\M,d_{\M})^{(p-1)/d} \frac{\pi}{2} \tau_{\M}.
\]
\end{proposition}

\begin{proof} 
The fact that $\Cech_{\epsilon}(\M, |\cdot|)$ is homotopy equivalent to $\M$ for $\epsilon < \tau_{\M}$ is an immediate consequence of the Nerve Theorem. The same result implies that $\Cech_{\epsilon}(\M, d_{f,p}) \simeq \M$  for $\epsilon < \convex(\M, d_{f,p})$, since geodesically convex sets are always contractible and the intersection of geodesically convex sets is again geodesically convex. Regarding the Vietoris--Rips filtration, the fact that the simplicial complex $\Rips_{\epsilon}(\M,|\cdot|)$ is homotopy equivalent to $\M$ for $\epsilon < 2 \sqrt{\frac{D+1}{2D}} \tau_{\M}$ can be deduced from \cite[Theorem 20]{KSCRW}. Finally, since $d_{f,p}$ is a Riemannian distance on $\M$, there is an explicit homotopy equivalence $\Rips_{\epsilon}(\M, d_{f,p}) \simeq \M$ for $\epsilon < \convex(\M, d_{f,p})$ \cite[see][]{Hau, Lat}.
\par The optimality of the bound $\epsilon < \tau_{\M}$ for $\Cech_{\epsilon}(\M,|\cdot|)$ is clear (think of a unit sphere in $\RR^{D}$), and indeed, typically the topology of $\Cech_{\epsilon}(\M, |\cdot|)$ changes when $\epsilon$ attains $\tau_{\M}$. A critical example for the Vietoris--Rips complex is the standard 1-dimensional circle $\mathbb S^1$, and it can be derived from the main result of \cite{Adam}, similarly as in \cite[Example 24]{KSCRW}.
\par The last assertion in the statement follows directly from the inequalities
\[
    \convex(\M,d_{\M}) \geq \min \left\{ \frac{\pi}{2\sqrt{\sup K}}, \frac{1}{2}\inj(\M, d_{\M})\right\}
\]
\cite[see][\S 5.14]{ChEb} and
\[
    \inj(\M, d_{\M}) \geq \pi \tau_{\M} \text{,  } K \leq \frac{1}{\tau_{\M}^{2}}
\]
\cite[see][Proposition A.1]{Reach}. Here $\inj(\M,d_{\M})$ is the injectivity radius of $\M$ and $K$
is the sectional curvature.
\end{proof}

\begin{example} 
\label{ellipse}
Consider a planar ellipse $E_{R, \varepsilon}$ with minor axis of length $\varepsilon$ and major axis of length $R \geq \varepsilon$. By letting $R \to +\infty$ and/or $\varepsilon \to 0$, we see that the convexity radius of a closed submanifold of $\RR^{2}$ can be arbitrarily large while its reach can be arbitrarily small. A similar example can be constructed in $\RR^D$, being $\mathcal M$ a $d$-dimensional ellipsoid for any $d<D$. 
The same phenomenon can be achieved by constructing different \textit{eyeglasses} curves  with arbitrarily large length and constant reach, Figure \ref{fig:eyeglasses_reach}.
Its population persistence diagrams differ as predicted by Theorem \ref{thm homotopy reconstruction}.
The persistence diagram computed with the Euclidean distance captures the right homology only for $\epsilon$ less that the reach.  In contrast, for the Fermat distance the correct homology is captured for radii as large as (a multiple of) the convexity radius, which can be made large enough by enlarging the bridge between the glasses.

\begin{figure}[tb!]
    \centering

    \includegraphics[width =1.01 \textwidth]{./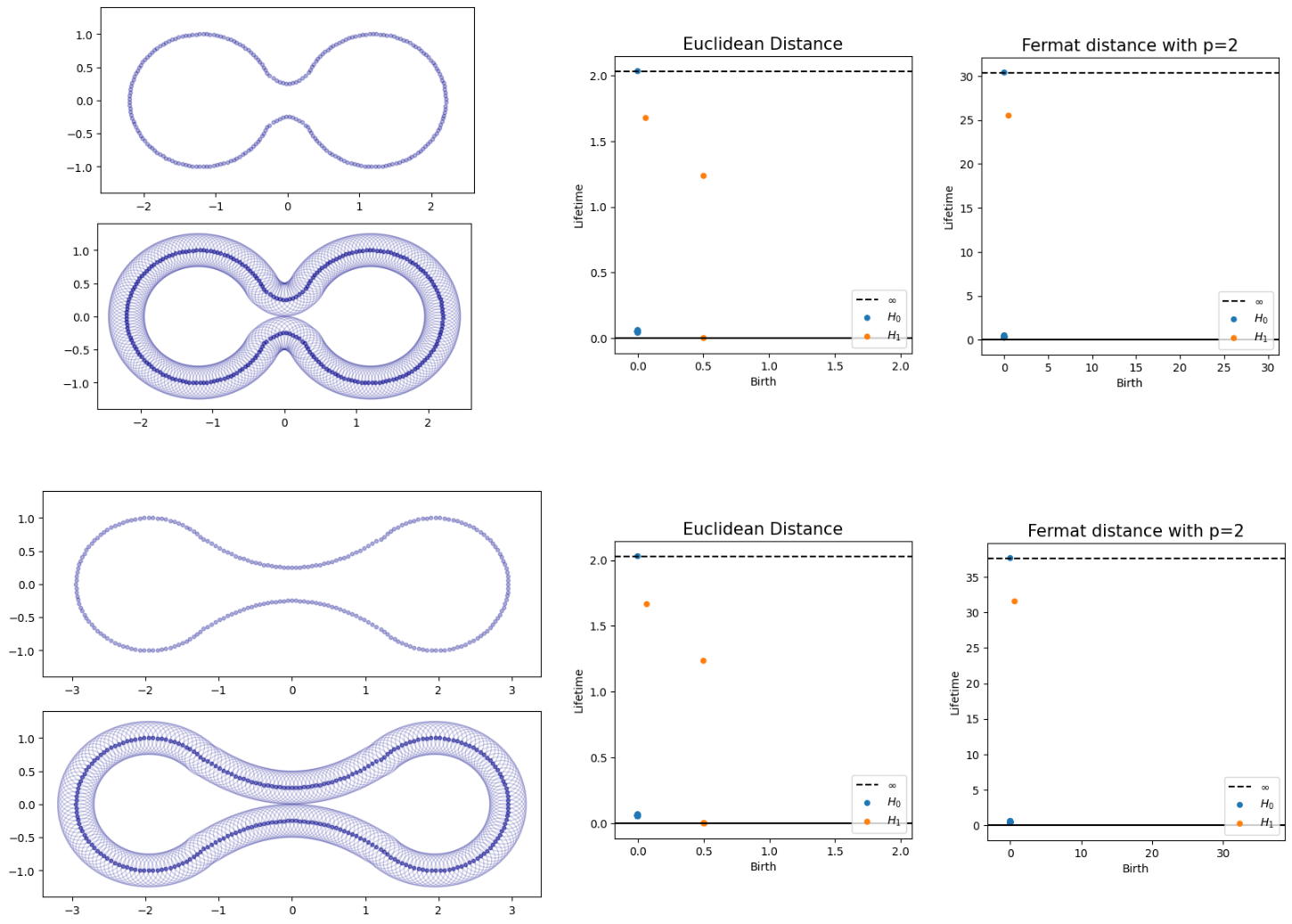}

    \caption{Left: Eyeglasses curves, uniformly sampled (250 points). In both cases, the reach is 0.5. Below each curve, we plot a thickening of the samples with Euclidean balls of radius slightly greater than the reach.  Right: Persistence diagrams (lifetime) associated to the Vietoris--Rips filtration for both the Euclidean distance and the re-scaled Fermat distance $d_{n,p}$ with $p=2$. 
    While $H_0$ is correctly estimated in both cases by reading the persistence diagrams,
    the ones computed with the Euclidean distance displays two salient generators for the first homology group $H_1$,
    inaccurately suggesting two cycles. The second cycle's birth is at the level of twice the reach. 
     For the (re-scaled) Fermat distance,  the diagrams shows correctly only one persistent generator for $H_1$. 
    }
    \label{fig:eyeglasses_reach}
\end{figure}
\end{example}

\subsection{Robustness to Outliers}\label{section outliers}

Persistence diagrams are highly sensitive to outliers \cite[see][]{MR2854318, chazal2011geometric, buchet2016efficient, MR3968644}.  We will see that the computation of persistence homology using Fermat distance is robust to the presence of outliers for positive degree.
Concretely, given a sample $\XX_n\subseteq \M$ and $Y\subseteq \RR^D\smallsetminus \M$ a finite set of points in the complement of $\M$ in the ambient Euclidean space --- the \textit{outliers} --- we prove that $\dgm_k(\Rips(\XX_n \cup Y, d_{\XX_n\cup Y, p}))$  coincides with $\dgm_k(\Rips(\XX_n, d_{\XX_n, p}))$ for $k>0$ up to some reasonable filtration parameter.
First we need a definition. 

\begin{definition}
Given a finite set of points $S\subseteq \RR^D$, define the \textit{minimal spacing} of $S$ as 
\[
\kappa(S) = \min_{x\in S} d_E(x, S\smallsetminus \{x\}),
\]
where $d_E$ denotes the Euclidean distance between sets.
\end{definition}

\begin{proposition}\label{outliers}
Let $\delta = \min \{\kappa(Y), d_E(\XX_n, Y)\}$ and $p>1$.
Then, for every $\epsilon <\delta^p$
\[
 \Rips_\epsilon(\XX_n \cup Y, d_{\XX_n\cup Y, p}) = \Rips_\epsilon(\XX_n, d_{\XX_n, p}) \cup Y.
 \]
 In particular, for all $k>0$
 \[
 \dgm_k(\Rips_{<\delta^p}(\XX_n \cup Y, d_{\XX_n\cup Y, p})) = \dgm_k(\Rips_{<\delta^p}(\XX_n, d_{\XX_n, p})),
\]
where $\Rips_{<\delta^p}(\XX, \rho_{\XX})$ stands for $\big(\Rips_{\epsilon}(\XX, \rho_{\XX})\big)_{\epsilon<\delta^ p}$, i.e., the Rips filtration  up to parameter $\delta^{p}$ of a metric space $(\XX, \rho_{\XX})$.
\end{proposition}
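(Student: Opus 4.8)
The plan is to exploit the definition of $\delta$, which forces every point of $Y$ to be Euclidean-distance at least $\delta$ from every other point of $\XX_n \cup Y$, so that any step of a path touching a point of $Y$ is expensive. First I would record the elementary observation that for $y \in Y$ and any $z \in (\XX_n \cup Y) \smallsetminus \{y\}$ one has $|z - y| \ge \delta$: if $z \in Y$ this is the bound $\kappa(Y) \ge \delta$, and if $z \in \XX_n$ it is $d_E(\XX_n, Y) \ge \delta$. Consequently every edge of a path incident to a point of $Y$ contributes at least $\delta^p$ to the cost $\sum_i |x_{i+1} - x_i|^p$ defining the sample Fermat distance.

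From this I would deduce two facts. First, if $u$ or $v$ lies in $Y$ (with $u \neq v$), then every admissible path has its first or last edge incident to a point of $Y$, whence $d_{\XX_n \cup Y, p}(u, v) \ge \delta^p$. Second, if $u, v \in \XX_n$ but some path between them uses an intermediate vertex in $Y$, that vertex has two incident edges in the path, so the path costs at least $2\delta^p$. Since the distance is an infimum, the second fact yields that whenever $d_{\XX_n \cup Y, p}(u,v) < \delta^p$ there is an admissible path of cost $< \delta^p$ avoiding $Y$ entirely, so that $d_{\XX_n, p}(u,v) = d_{\XX_n \cup Y, p}(u,v)$; combined with the trivial inequality $d_{\XX_n \cup Y, p} \le d_{\XX_n, p}$ (enlarging the set of admissible intermediate points can only lower the infimum), this shows the two distances agree on all $\XX_n$-pairs below the threshold $\delta^p$.

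With these comparisons in hand, the complex identity for $\epsilon < \delta^p$ follows by sorting simplices according to their vertices. A simplex containing a vertex $y \in Y$ together with any second vertex would require an edge of $d_{\XX_n \cup Y, p}$-length $\le \epsilon < \delta^p$ incident to $y$, contradicting the first fact; hence the only simplices meeting $Y$ are the isolated $0$-simplices $[y]$. A simplex with all vertices in $\XX_n$ belongs to $\Rips_\epsilon(\XX_n \cup Y, d_{\XX_n \cup Y, p})$ iff all its pairwise distances are $\le \epsilon < \delta^p$, which by the agreement of the two distances below $\delta^p$ is equivalent to membership in $\Rips_\epsilon(\XX_n, d_{\XX_n, p})$. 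This produces exactly $\Rips_\epsilon(\XX_n \cup Y, d_{\XX_n \cup Y, p}) = \Rips_\epsilon(\XX_n, d_{\XX_n, p}) \cup Y$, the union being the complex on $\XX_n$ together with the points of $Y$ as isolated vertices. For the persistence statement I would observe that this identification holds simultaneously for every $\epsilon < \delta^p$ and is compatible with the inclusions of the filtration, since the vertices of $Y$ stay isolated throughout; adding a fixed finite set of isolated vertices leaves positive-degree homology unchanged, so for $k>0$ the inclusion-induced maps $H_k(\Rips_\epsilon(\XX_n, d_{\XX_n, p})) \to H_k(\Rips_\epsilon(\XX_n \cup Y, d_{\XX_n \cup Y, p}))$ are isomorphisms commuting with the structure maps, whence the two persistence modules are isomorphic and their diagrams coincide.

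The only genuine subtlety is the second fact above: controlling the paths in $d_{\XX_n \cup Y, p}$ that might be tempted to shortcut between $\XX_n$-points through an outlier, and handling carefully that the infimum defining the sample Fermat distance is approximated by $Y$-avoiding paths below the threshold. Everything else reduces to a direct comparison of the two complexes.
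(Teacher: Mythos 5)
Your proof is correct and follows essentially the same route as the paper: lower-bound every sample Fermat distance involving a point of $Y$ by $\delta^p$, show that $d_{\XX_n\cup Y, p}$ and $d_{\XX_n, p}$ agree on pairs of points of $\XX_n$ whenever they fall below that threshold, and conclude that for $\epsilon < \delta^p$ the outliers enter the Rips complex only as isolated vertices, which leaves positive-degree persistence unchanged. Your single observation that every edge incident to a point of $Y$ costs at least $\delta^p$ in fact streamlines the paper's case analysis of minimal paths between two outliers, and by working with approximating paths rather than a minimizing geodesic you sidestep the paper's implicit assumption that the infimum is attained --- but these are refinements of the same argument, not a different one.
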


\begin{proof} 
Let us estimate the distance between two given points in $\XX_n\cup Y$ with respect to 
$d_{\XX_n\cup Y, p}$ in terms of $\delta$ and $d_{\XX_n,p}$.

If $x\in \XX_n$ and $y\in Y$,
\[
d_{\XX_n\cup Y, p}(x, y)\geq d_{\XX_n\cup Y, p}( \XX_n, Y)=d_E(\XX_n, Y)^p\geq \delta^p.
\]
\par If $y, y' \in Y$, 
\[
d_{\XX_n\cup Y, p}(y, y')\geq d_{\XX_n\cup Y, p}( y, Y\smallsetminus\{y\})\geq \delta^p.
\]
For the second inequality, notice that if $\tilde y\in Y$ is such that  $d_{\XX_n\cup Y, p}( y, Y\smallsetminus\{y\}) = d_{\XX_n\cup Y, p}(y, \tilde y) = \len(\gamma)$,  the geodesic $\gamma$ between $y$ and $\tilde y$  either involves only points from $Y$ or there exist some point $x\in \XX_n$ in $\gamma$.
In the first case $d_{\XX_n\cup Y, p}(y, \tilde y)\geq \kappa(Y)^p$ whereas in the second case $d_{\XX_n\cup Y, p}(y, \tilde y)\geq 2 d_E(\XX_n, Y)^ p$. 

Given $x, x' \in \XX_n$, let $\gamma$ be a  minimal path between $x, x'$, so that $d_{\XX_n\cup Y, p}(x, x') = \len(\gamma)$. If $d_{\XX_n\cup Y, p}(x, x')< \epsilon$, then $\gamma$ only involves points in $\XX_n$ since otherwise $\epsilon \geq \len(\gamma) \geq 2d_E(\XX_n, Y) \geq 2\delta^p$, which is a contradiction.
Hence, $d_{\XX_n\cup Y, p}(x, x') = d_{\XX_n, p}(x, x')$.
\end{proof}

We define now a geometric notion of outliers. Recall that given $\XX_n\subseteq \RR^D$, the $\varepsilon$-graph $G_\varepsilon (\XX_n)$ is the undirected graph with the points of $\XX_n$ as vertices and an edge connecting $x_i$ and $x_j\in \XX_n$ whenever $|x_i-x_j| <\varepsilon$.

\begin{definition}
Let $\XX_n\subseteq \M$ be a sample of $\M\subseteq \RR^D$ and $Y\subseteq \RR^D\smallsetminus \M$ be a finite set of points.
 Let $\varepsilon_* := \min\{\varepsilon>0: G_\varepsilon (\XX_n)\text{ is connected}\}$ and $\delta = \min \{\kappa(Y), d_E(\XX_n, Y)\}$.
 We say that $Y$ are \textit{(geometric) outliers} if $\delta >\varepsilon_*$.
\end{definition}

We show next that for this notion of outliers, the upper bound on the parameter for the Rips filtration of Proposition \ref{outliers} is not restrictive for sufficiently large $p$. Indeed, let $\diam_p(\XX_n)$ be the diameter of $(\XX_n, d_{\XX_n,p})$. Note that for every $\epsilon \geq \diam_p(\XX_n)$ the simplicial complex $\Rips_{\epsilon}(\XX_n, d_{\XX_n, p})$ equals the standard $(n-1)$-simplex $\Delta^ {n-1}$, with trivial topology (and hence persistence diagrams are not interesting for scales larger than this threshold). The next result states that provided that $p$ is large enough, the persistence diagrams of $(\XX_n, d_{\XX_n, p})$ and $(\XX_n \cup Y, d_{\XX_n\cup Y, p})$ coincide up to the filtration parameter $\diam_p(\XX_n)$.

\begin{corollary} \label{threshold}
Given $\XX_n$ a sample of $\M$ and $Y\subseteq \RR^D$ a finite set of outliers, then 
for all $k>0$
 \[
 \displaystyle \dgm_k(\Rips_{<\diam_p(\XX_n)}(\XX_n \cup Y, d_{\XX_n\cup Y, p})) = \dgm_k(\Rips_{<\diam_p(\XX_n)}(\XX_n, d_{\XX_n, p})).
\]
for $p>C\log(n)$ with $C=\log(\delta/\epsilon_*)^{-1}$.
 \end{corollary}
 \begin{proof} 
There is an upper bound $\diam_p(\XX_n) \leq n\varepsilon_*^ p$. Since $Y$ are outliers, $\varepsilon_*<\delta$ . For $p>C\log(n)$, $\left(\frac{\delta}{\varepsilon_*}\right)^p>n$ and consequently, $\diam_p(\XX_n)<\delta ^p$. The result now follows from Proposition \ref{outliers}.
 \end{proof}

\begin{remark} \label{H0} In general, the persistence diagram of $(\XX_n \cup Y, d_{\XX_n\cup Y, p})$ for degree $k=0$ does not coincide with the diagram of the metric space without outliers $(\XX_n , d_{\XX_n, p})$. However, if $Y$ is a set of geometric outliers, it is related to the corresponding persistence diagrams of $\XX_n$ and $Y$ through the following formula:
\[\dgm_0(\Rips(\XX_n \cup Y, d_{\XX_n\cup Y, p}))=\dgm^{<\infty}_0(\Rips(\XX_n, d_{\XX_n, p})) \cup \dgm_0(\Rips(Q, d_Q)).
\]
Here, $\dgm^{<\infty}$ denotes the bounded persistence intervals and $Q=(Y\cup \XX_n)/\XX_n$ is the quotient metric space endowed with the induced metric  $d_Q$.
\end{remark}

\begin{remark}[DTM] \label{DTM} Filtrations classically used for the computation of persistent homology of Euclidean point clouds, such as the \v{C}ech or Vietoris--Rips filtrations, are very sensitive to the presence of outliers. That is,  \v{C}ech (or Vietoris--Rips) filtrations computed on top of $\XX_n$ and $\XX_n\cup Y$ might be very different (its interleaving distance depends on $d_H(\XX_n, \XX_n\cup Y)$, see e.g. \cite{CDSO}).  To overcome this limitation,
\cite{MR3968644} introduced weighted filtrations based on the notion of distance to measure (DTM).
Given $\mu$ the empirical measure of $\XX_n\subseteq \RR^D$ and $m\in[0,1)$ a parameter, the DTM-function over $\RR^D$ is   defined as $d_{\mu, m}(x):=\sqrt{\frac{1}{m}\int_0^m\delta^2_{\mu,t}(x)dt}$,
where $\delta_{\mu,t}(x) = \inf\{r \geq 0\colon \mu(\bar B(x, r)) > t\}$ and $\bar B(x, r)$ denotes the closed Euclidean ball with center $x$ and radius $r$. Given a parameter $p>1$,  the weighted ball $B_{d_{\mu, m}}(x,\epsilon)$ with center $x\in\XX_n$ and radius $\epsilon\geq d_{\mu, m}(x)$ is the Euclidean ball $B(x,r_x(\epsilon))$ with radius $r_x(\epsilon) =\left(\epsilon^p-d^p_{\mu, m}(x)\right)^{1/p}$ (if $\epsilon< d_{\mu, m}(x)$, it is empty).
The \v{C}ech DTM-filtration $(V^{DTM}_{m,p}(\XX_n))_{\epsilon>0}$ with parameters $(m, p)$ is the weighted \v{C}ech filtration constructed as the nerve of the cover $\{B_{d_{\mu, m}}(x,\epsilon):x\in \XX_n\}$ for every $\epsilon>0$. A DTM-based version of a weighted Vietoris--Rips filtration can also be derived.

DTM-filtrations of Euclidean point clouds produce filtrations (and hence, persistence diagrams) less sensitive to outliers, given that the (interleaving) distance between $V^{DTM}_{m,p}(\XX_n)$ and $V^{DTM}_{m,p}(\XX_n\cup Y)$ is upper bounded not only in terms of $d_H(\XX_n, \XX_n\cup Y)$ but also in terms of the Wasserstein distance between the measures $\mu_{\XX_n}$ and $\mu_{\XX_n\cup Y}$.
However, if $\XX_n$ is a sample of a manifold $\M$, these filtrations are still very sensitive to the particular embedding of the manifold in $\RR^D$. This is consequence of the dependence of the DTM-function on the
ambient space (see Example \ref{ex:trefoil}). Its (lack of) dependence on non-intrinsic properties has been investigated thereafter. In this direction, a generalization of DTM-filtrations for general metric spaces $(\XX, \rho)$ is considered in \citep{buchet2016efficient}.
\end{remark}

\begin{example}[Trefoil] \label{ex:trefoil}
\label{trefoil} Consider the embedding of a topological circle $\mathbb S^1$ in $\RR^3$ given by the \textit{trefoil knot}. In particular, it is homeomorphic to $\mathbb S^1$ and its homology has just one generator in $H_0$ (one connected component) and one generator in $H_1$ (one 1-dimensional cycle).
Given a (noisy) sample of 1500 points from the trefoil knot with  10 outliers, Figure \ref{fig:trefoil}, we compute its persistence diagram for different choices of filtrations and compare them with the case without the outliers, Figure \ref{fig:PH_trefoil}. For the Vietoris--Rips filtration using Euclidean distance, the small reach of the embedding produces a persistence diagram with four persistent generators for $H_1$ in both cases, with and without outliers (cf. Example \ref{ellipse}). If we use $k$-NN distances, the presence of outliers affects the accuracy of the topological features captured in the persistence diagram, which presents four salient generators for $H_1$ instead of the single generator recovered from the sample without outliers.
For the Vietoris--Rips DTM-filtration, we observe that the diagrams are comparable both in absence and presence of outliers. However, the dependence of the embedding of the construction is reflected in the incorrect number of generators for $H_1$ with long persistence.
Finally, the persistence diagram computed from the Vietoris--Rips filtration using Fermat distance remains unaffected in presence of outliers for degree 1 (Corollary \ref{threshold}), and it shows correctly a single salient generator of $H_1$. For degree 0, the diagram is related to  the diagram of the sample without the outliers and the diagram of the outliers themselves (cf. Remark \ref{H0}).

\begin{figure}[tb!]
    \centering
\includegraphics[width=0.38\textwidth]{./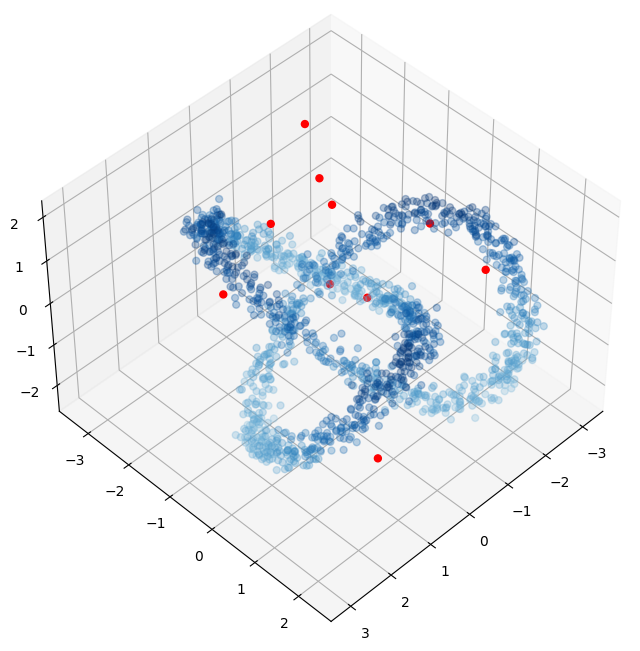}
\includegraphics[width=0.45\textwidth]{./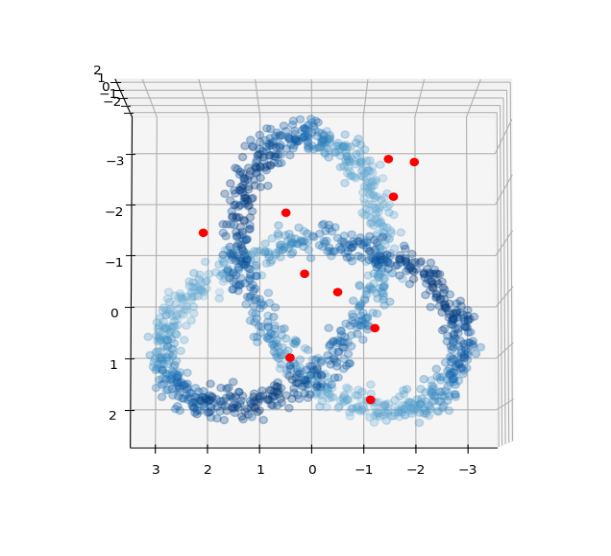}
\caption{A (noisy) sample of 1500 points from the trefoil knot with outliers (red).}
\label{fig:trefoil}
\end{figure}

\begin{figure}[ptb!]
\begin{minipage}{0.272\textwidth}

\includegraphics[width=1\textwidth]{./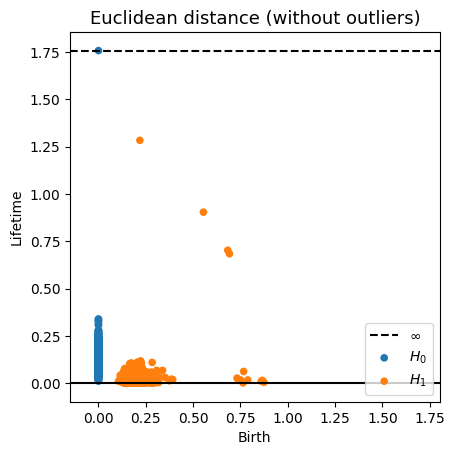}

\includegraphics[width=1\textwidth]{./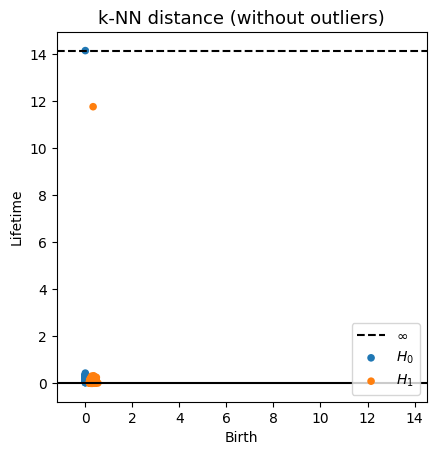}

\includegraphics[width=1\textwidth]{./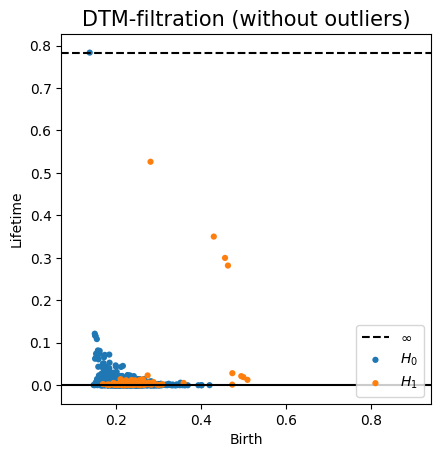}

\includegraphics[width=1.03\textwidth]{./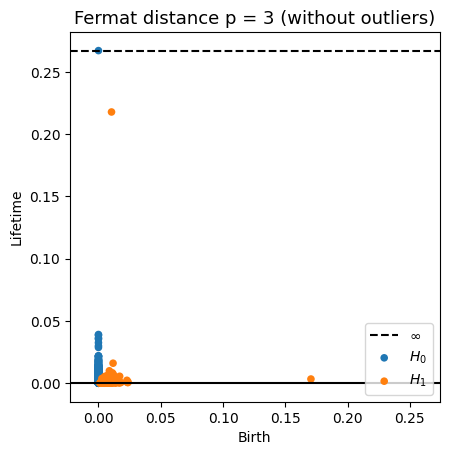}
\end{minipage}
\begin{minipage}{0.72\textwidth}
\centering
 \includegraphics[width=0.38\textwidth]{./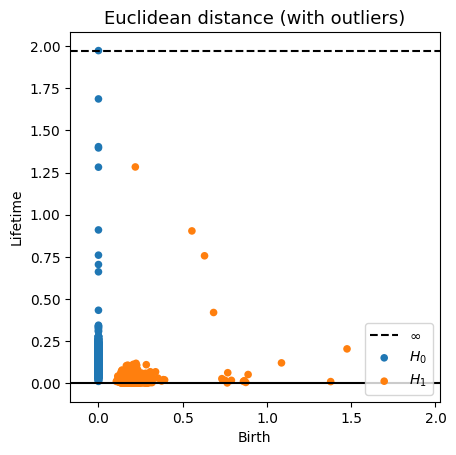}

\includegraphics[width=0.367\textwidth]{./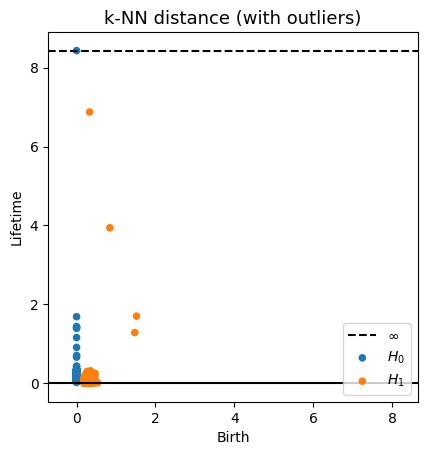}

\includegraphics[width=0.38\textwidth]{./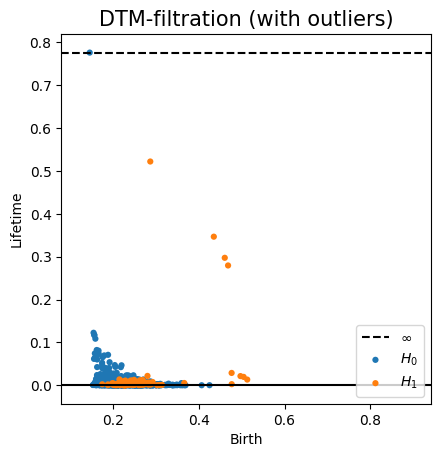}

\includegraphics[width=0.448\textwidth]{./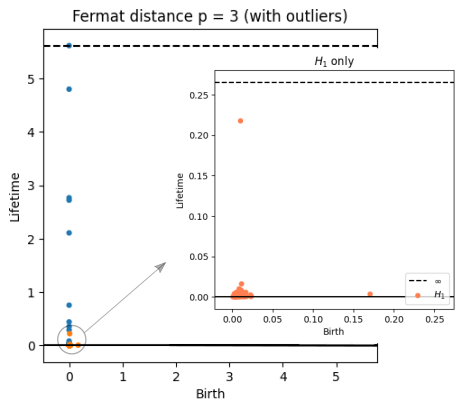}
\includegraphics[width=0.539\textwidth]{./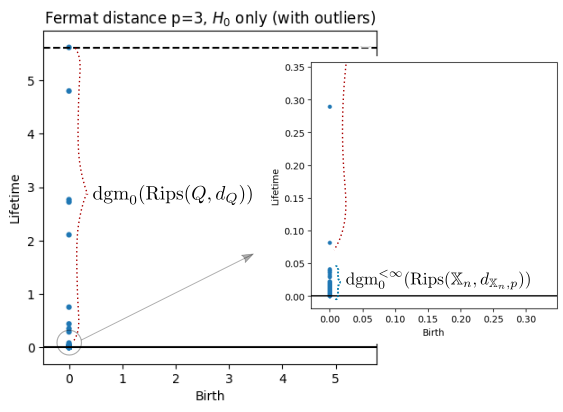}
\end{minipage}

\caption{Persistence diagrams associated to the Vietoris-Rips filtration of the sample of the trefoil knot using Euclidean distance, $k$-NN distance with $k=10$, DTM weight
and Fermat distance with $p=3$ of the sample without outliers $\XX_n$ (left) and the sample with outliers $\XX_n \cup Y$ (right) respectively.
When Fermat distance is used, the persistence diagram of $\XX_n\cup Y$ for degree 1 
equals the diagram of $\XX_n$ (without outliers). For degree 0, it decomposes as the union of the subdiagram of finite intervals of $\XX_n$, $\dgm^{<\infty}_0(\Rips(\XX_n, d_{\XX_n, p}))$, and the diagram $\dgm_0(\Rips(Q, d_Q))$ of the quotient space
 $Q=(Y\cup \XX_n)/\XX_n$.
}
\label{fig:PH_trefoil}
\end{figure}

\end{example}

\subsection{Computational Complexity}

Our proposed pipeline for the computation of Fermat-based persistent homology consists of the precomputation of Fermat distance in the input sample $\XX_n$, followed by the computation of persistent homology from the metric space $(\XX_n, d_{\XX_n, p})$ described by the distance matrix.

The computation of the matrix of pairwise sample Fermat distances between points in $\XX_n$
has complexity $\O(n^3)$. However, it can be reduced to $\O(n^2\log^2 n)$ with high probability by restricting the computation of shortest paths to the $k$-NN graph on top of $\XX_n$ with $k = O(\log n)$ (see Section 2.3 in \cite{GJS}, also \cite{LMM, CMS}).

On the other hand, the \textit{standard algorithm} used to compute persistent homology was first introduced in \cite{ELZ} and it is based on the Gaussian reduction of the boundary matrix. Persistent homology for degree up to $k$ depends on the $(k+1)$-skeleton of the filtration and the worst case computational complexity is cubical in the number $N$ of simplices of dimension at most $k+1$ \citep{morozov2005persistence, otter2017roadmap}. An alternative algorithm for the reduction of the boundary matrix, introduced in \cite{MMS}, has complexity $O(N^{\omega})$, with $\omega$ the matrix multiplication coefficient. At present, the best bound for $\omega$ is 2.376 \citep{coppersmith1987matrix}.

In practice, computation of persistent homology has lower complexity. For Vietoris--Rips filtrations, the worst case complexity is for $k$-dimensional persistent homology is $O\left({{n}\choose{k+2}}^3\right) = O\left(n^{3(k+2)}\right)$ with $n$ the number of vertices of $\XX_n$. However, 
in \cite{GHK22} it proved that, for instance, the average complexity for the reduction of the boundary matrix of degree 1 is upper bounded by $O(n^5\log^2(n))$.
Moreover, they showed that this upper bound seems to be not tight, since experimental simulations show that the average cost of the reduction of the 1-boundary matrix follows a curve of around $O(n^{3.73})$.

Overall, our proposed pipeline based on the precomputation of pairwise Fermat distance in $\XX_n$ does not increase the complexity of the total persistent homology computation.

\section{Applications to Signal Analysis}\label{experiments}

In this section we present a method for change-point detection and pattern recognition in time series through the analysis of topological features (see also \cite{MR3501790, P, PH}). This method is illustrated by a series of experiments in both synthetic and 
real data. In the experiments, the use of Fermat distance (as opposed to Euclidean distance) is observed to lead to more robust inference of the topology of the underlying space. We remark that in these examples the data does not necessarily verify the i.i.d. assumption.

 Fermat and  $k$-NN distances are computed using the library {\fontfamily{lmss}\selectfont Fermat} \citep{fermat_package}, while {\fontfamily{lmss}\selectfont Ripser}  \cite{ripser2021} is employed for the computation of persistence diagrams associated to Vietoris--Rips filtrations. All the computations are over the field $\mathbf{k}=\ZZ_2$. The code for all the examples and experiments can be found in the repository \cite{repository}.

\subsection{Topological Analysis of Time Series.}
Time-delay embeddings of scalar time-series data is a well-known technique to recover the underlying dynamics of a system. Takens' theorem \cite{MR654900} gives conditions under which a smooth attractor can be reconstructed from a generic observable function, with dimensional bounds related to those of the Whitney Embedding Theorem. It implies in particular that if $X(t)$ is a real valued signal (which is assumed to be one of the coordinates of a flow given by a system of differential equations), then the \textit{delay coordinate map}
\[
t\mapsto \Big(X(t), X(t+\tau), X(t+2\tau) \dots, X(t+(D-1)\tau)\Big)
\]
is an embedding of an orbit. Here $D$ is the embedding dimension and $\tau$ is the time delay.
From a theoretical point of view, $D$ is the number of variables of the original system.
However, in practice the underlying equations describing the dynamical system are not available.
Thus, dynamics are often analyzed by studying the topology of their \textit{attractors}; i.e., invariant subsets of the phase space towards which the system tends to evolve \citep{BW, MR228014, GL}.
 If the attractor is a smooth manifold $\M$ of dimension $d$, under certain conditions Takens' theorem  implies that the delay embedding of the signal with $D\geq 2d+1$ is  diffeomorphic to $\M$.

We describe now an approach --- based on intrinsic persistence diagrams --- to study geometry of attractors and pattern recognition in time series by means of the analysis of the time evolving topological organization of the embedded flow. 
Let $(x_1, x_2, \dots, x_n)$ be a time series, i.e. a finite sample of a signal $X:[0,T]\to \RR$ such that for evenly spaced points $0=t_1<t_2<\dots<t_n=T$, $x_i = X(t_i)$ for all $1\leq i\leq n$. Given $D$ and $\tau$, compute the delay embedding of the time series 
\[\XX_n = \{(x_i, x_{i+\tau},  x_{i+2\tau}, \dots, x_{i+(D-1)\tau}): 1\leq i \leq n-(D-1)\tau\}\subseteq \RR^D.\]
Then, for $p>1$, endow $\XX_n$ with a metric space structure induced by the sample Fermat distance $d_{\XX_n, p}$. 
The persistence diagram of the delay embedding $(\XX_n, d_{\XX_n, p})$ quantifies information about the homology of the attractor associated to the underlying dynamical system.

\begin{example} [Reconstruction of Lorenz attractor]

The parameters associated to the delay coordinate reconstruction for a time series can be determined following some heuristics (e.g. \textit{false nearest neighbors} to determine the embedding dimension \citep{embedding_dim}). However, in case of noisy data, the embedding dimension is often over-estimated and it may have a great impact on the phase space reconstruction. Indeed, in high dimensional spaces, any two points of a typical large set are at similar Euclidean distance \citep{aggarwal2000on}. This phenomenon is part of what is known as the \textit{curse of dimensionality}. For this reason, the choice of an intrinsic distance is crucial to recover the right topological features of a space embedded in high dimension.

Consider the strange attractor associated to the   Lorenz system \cite{MR4021434}
\begin{equation}\label{eqn:lorenz}
\begin{cases}
\dot x = \sigma ( y - x ) , \\
\dot y = x(\rho-z)-y ,\\
\dot z = xy-\beta z \end{cases}
\end{equation}
when $(\sigma, \rho, \beta) = (10, 28, 8/3)$. 

In Figure \ref{fig:lorenz} we take a numerical integration $\varphi(t, v_0)$ of \eqref{eqn:lorenz} with $dt = 0.01$, satisfying the initial condition $\varphi(0,v_0)=v_0$ with $v_0=(1,1,1)$.
We inspect the time series corresponding to the $x$-coordinate with additive Gaussian noise with variance $0.1$, and recover topological information of the attractor from the delay embedding (see also \cite{MR3501790}). Notice that in this case, although the number of variables in the underlying system is 3, the dimension of the attractor is $d=2$ so the  embedding dimension estimated by Takens' theorem is greater than or equal to 5.

The persistence diagram of the delay embedding reconstruction is computed with time delay $\tau=10$ and embedding dimensions $D=3, 4$ and $5$, Figure \ref{fig:lorenz}. Here, a uniform down-sampling from the original point cloud of $\sim 10000$ points is computed, to obtain a new point cloud of $\sim 3400$ points.

\begin{figure}[ptb!]
    \centering
    \includegraphics[width=0.99\textwidth]{./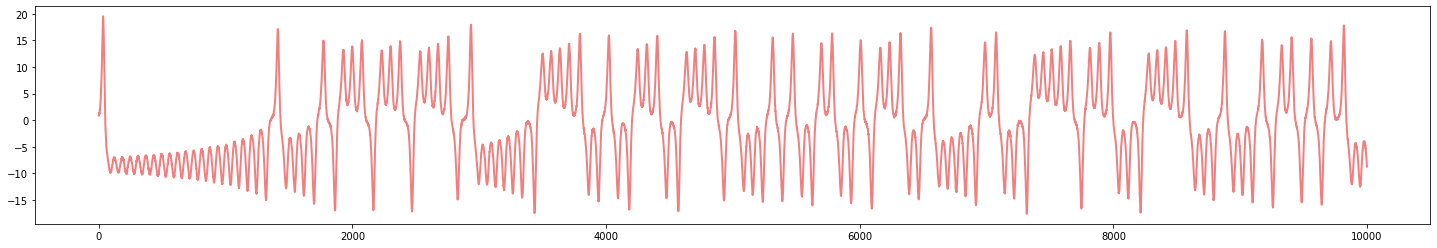} 
    
    \includegraphics[width=0.38\textwidth]{./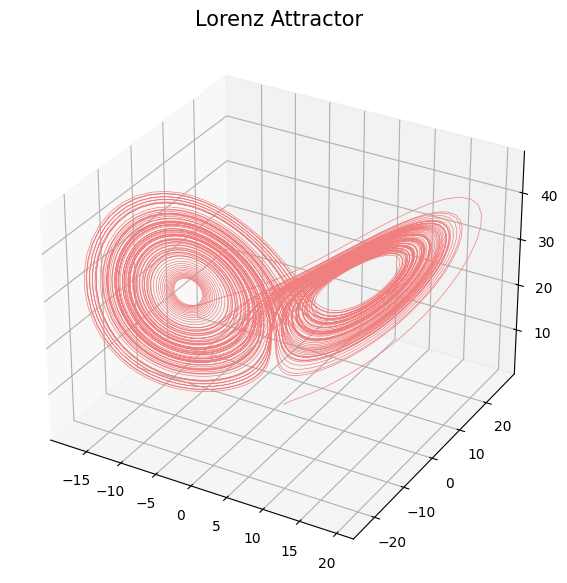}
    \includegraphics[width=0.4\textwidth]{./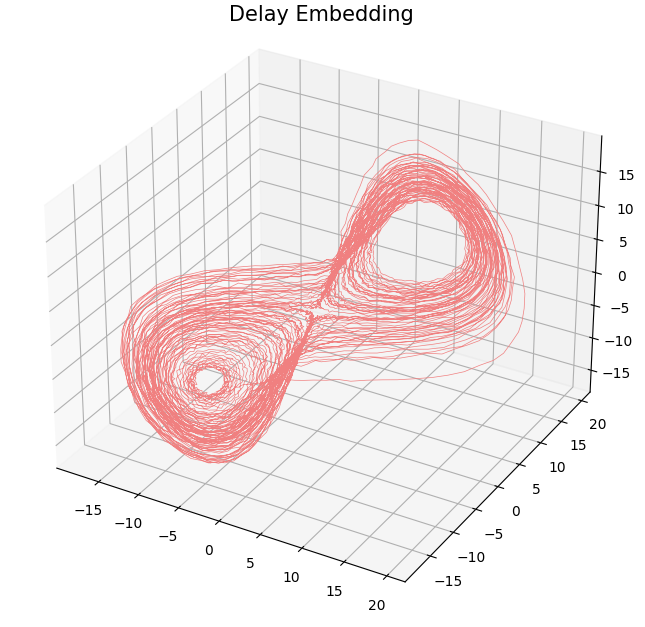}
    
    \includegraphics[width=0.293\textwidth]{./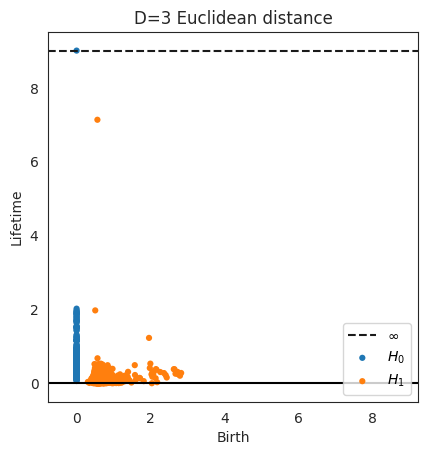}
    \includegraphics[width=0.3\textwidth]{./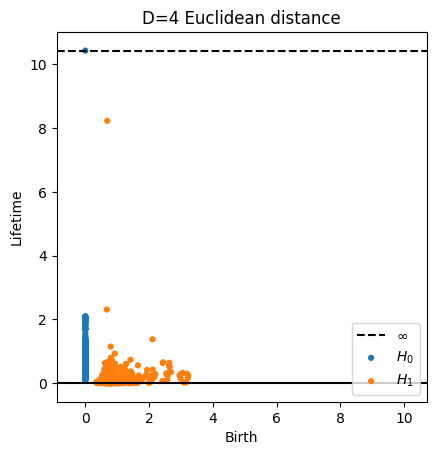}
    \includegraphics[width=0.3\textwidth]{./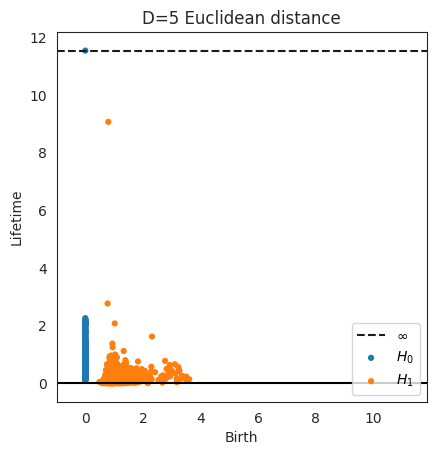}

    \includegraphics[width=0.3\textwidth]{./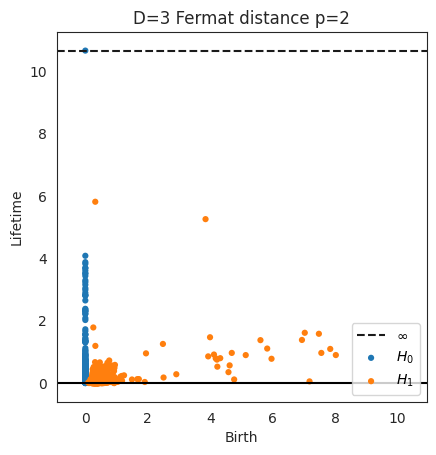}
    \includegraphics[width=0.3\textwidth]{./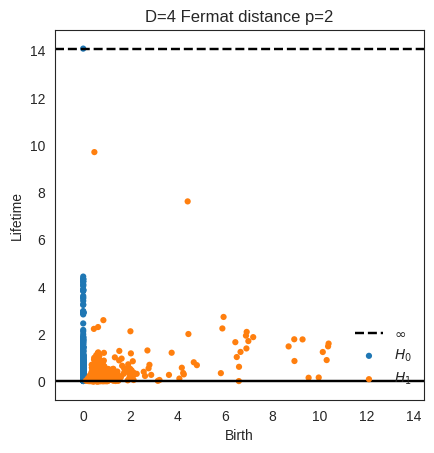}
    \includegraphics[width=0.3\textwidth]{./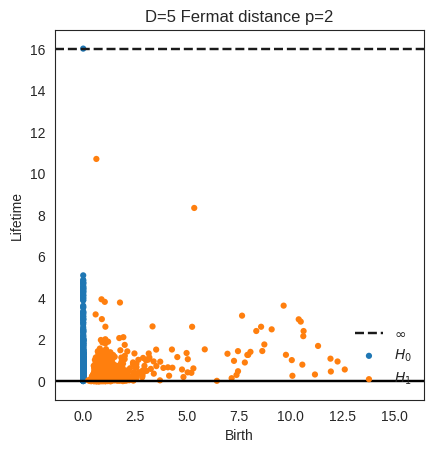}
    \caption{From top to bottom: The $x$-coordinate time series with Gaussian noise (variance = 0.1) of the Lorenz attractor. The original trajectory and the delay embedding of the noisy $x$-coordinate time series with $D=3$ and $\tau = 10$. Persistence diagrams associated to the delay embedding computed with Euclidean and Fermat distances for embedding dimension $D=3$, $D=4$ and $D=5$ and time delay $\tau=10$.}
    \label{fig:lorenz}
 \end{figure}

The Lorenz attractor is homotopy equivalent to the \textit{eight-space} with two holes corresponding to the equilibrium points that the trajectory never reaches.
As Figure \ref{fig:lorenz} reveals, the use of Fermat distance leads to robustly capturing the intrinsic two 1-cycles for the different embedding dimensions, while this is not the case for the Euclidean distance.

\end{example}

\begin{example} [Periodicity] A periodic dynamic within a noisy system might be robustly captured using time-delay embeddings. Indeed, embeddings of periodic signals have the topology of a cycle. However, the general success of the reconstruction of the intrinsic cyclic geometry is highly dependent on the choice of the delay parameter $\tau$ (and the embedding dimension $D$). In practice, classic heuristics based on time-delayed mutual information \citep{embedding_delay} and false nearest neighbors \citep{embedding_dim} are used, but they present high sensitiveness to noise. We  show that the use of Fermat distance when  recovering the intrinsic geometry of delay embeddings has stability properties with respect to the choice of $\tau$. 

\par Consider the function $f(t) = \cos(t) + \cos(3t)$ with additive Gaussian noise of variance 0.4. For a sample of 2000 points of the noisy signal in consideration at the interval $[0,100]$, the classic heuristic estimations of the optimal parameters outputs $\tau = 28$ and $D=8$ (here, the computations are preformed with the package {\fontfamily{lmss}\selectfont Time Series} from the software {\fontfamily{lmss}\selectfont Giotto-tda} \cite{tauzin2020giottotda}). However, the associated time-delay embedding presents low reach value and, hence, it is still hard to capture its  homology with standard methods (see Figure \ref{fig:periodicity}).

\par In general dynamics, the effect of the choice of $\tau$  is reflected in changes in the embedding of the associated attractor in the ambient space. Although Takens' theorem theoretically establishes diffeomorphic embeddings for different choices of $\tau$, in practice the accuracy of the reconstruction of the underlying manifold usually depends on the choice of $\tau$. Crucially, persistence diagrams computed using Fermat distance are less dependent of extrinsic properties and hence,  highly appropriate for the estimation of topological properties of the attractor (that are, indeed, independent of the embedding).
To illustrate the stability properties with respect to the choice of the delay parameter, we computed the delay embedding of the noisy periodic signal of Figure \ref{fig:periodicity} in $\RR^8$ for a range of values of $\tau$. We observe that, while the features displayed on the diagrams computed using Euclidean distance change with the embedding, the ones computed using Fermat distance are consistent: they all display a single generator for $H_1$ (Figure \ref{fig:stability_tau}). Here, $p$ was set equal to  6, but similar results can be obtained for a range of values of $p$. 

\begin{figure}[tb!]
\centering
    \includegraphics[width=0.99\textwidth]{./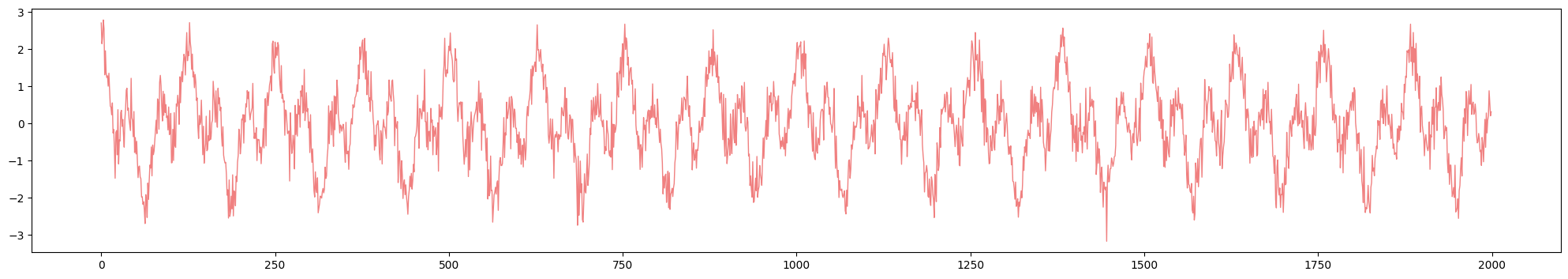} 
    
    \vspace{5pt}
    
    \includegraphics[width=0.95\textwidth]{./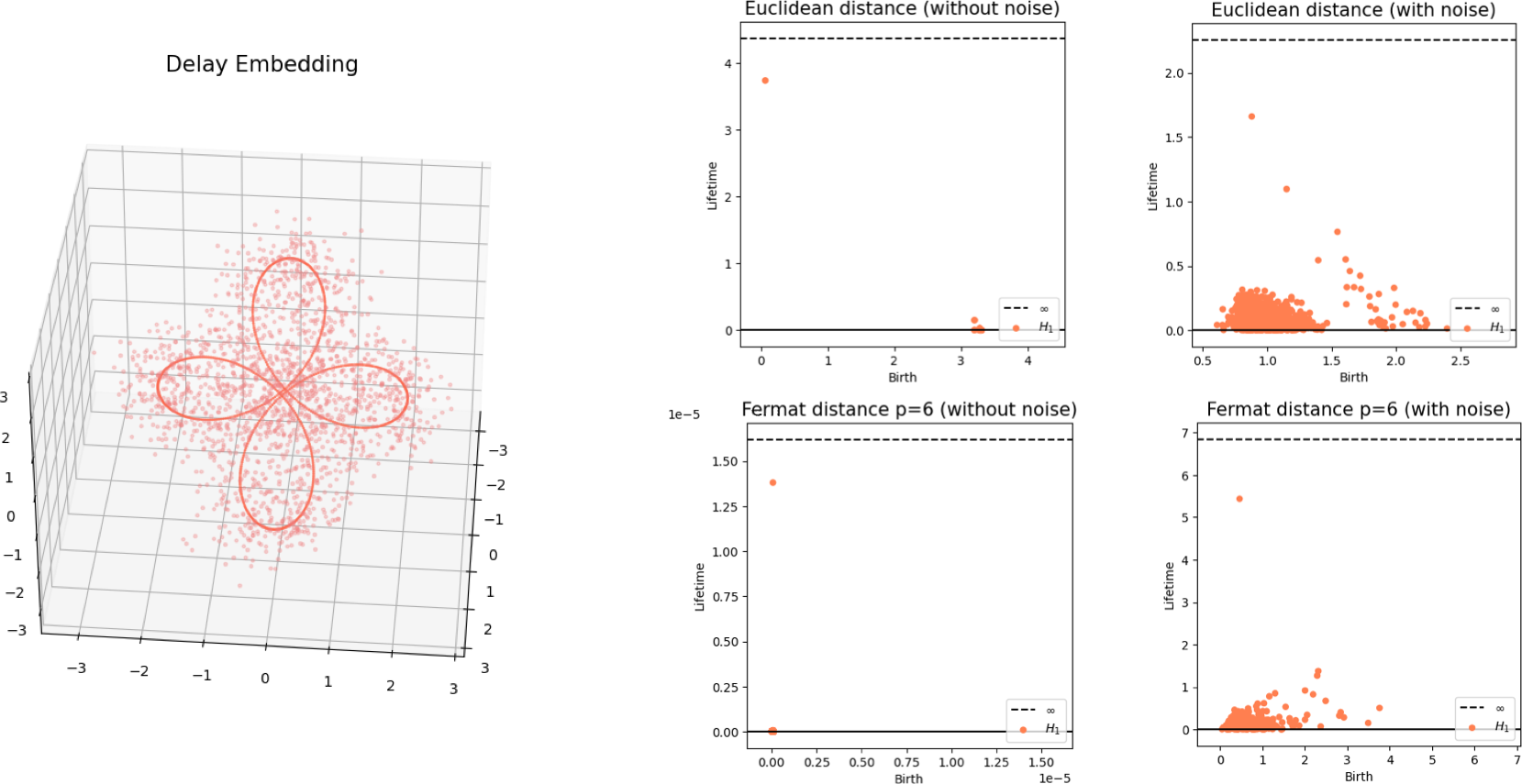}
    
    \caption{Top: Periodic signal with noise, defined as $f(t) = \cos(t) + \cos(3t)$ with additive Gaussian noise of variance $0.4$. Bottom left: Delay embedding (projection 3d to the first coordinates) with the optimal values of the parameters, i.e $D=8, \tau = 28$, according to the canonical heuristics (embedding of the signal without noise in dark orange).  Bottom right: Persistence diagrams (degree 1 only) of the embedding of the signal without and with noise, computed using the Euclidean distance and Fermat distance for $p=6$.}
    \label{fig:periodicity}
\end{figure}

\begin{figure}[tb!]
    \centering

    \includegraphics[width=\textwidth]{./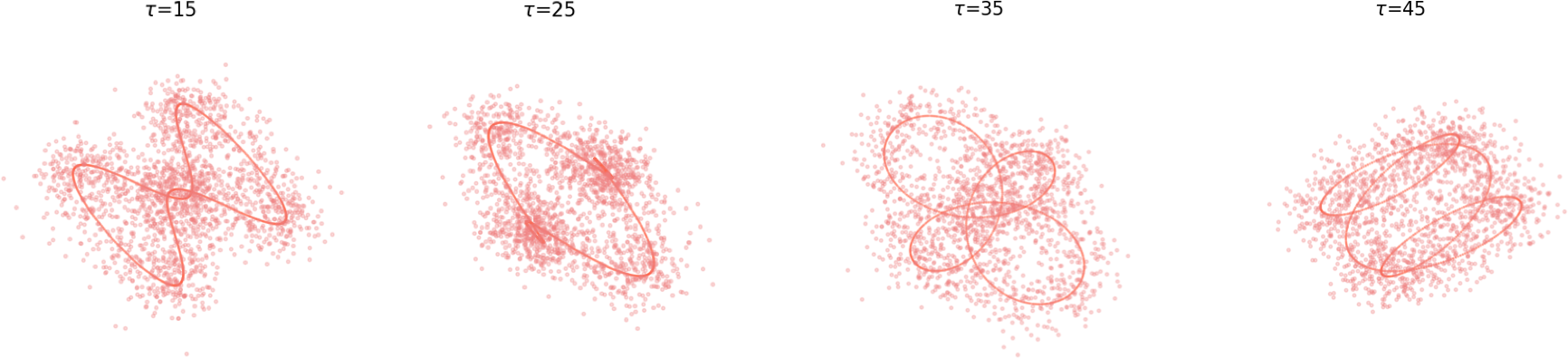}

    \vspace{10pt}
    
    \includegraphics[width=0.24\textwidth]{./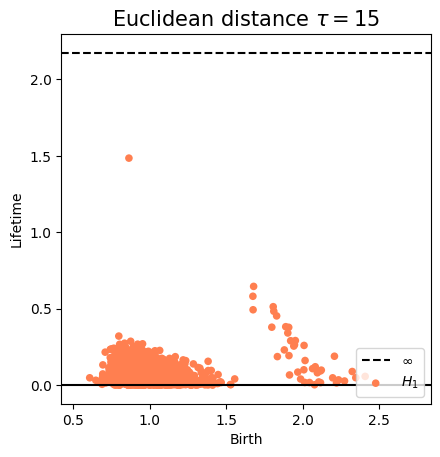}
    \includegraphics[width=0.24\textwidth]{./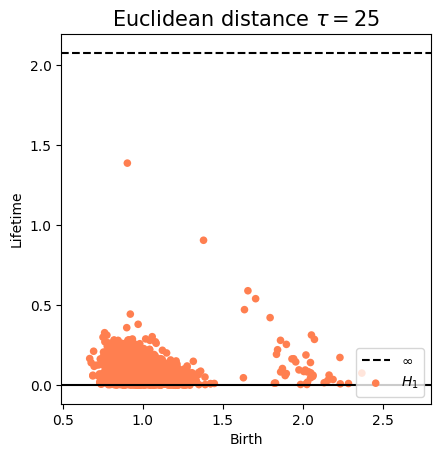}
    \includegraphics[width=0.24\textwidth]{./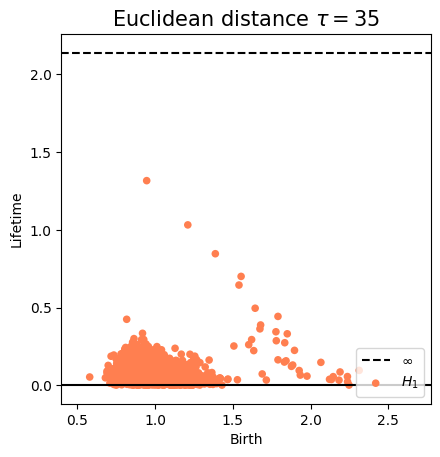}
    \includegraphics[width=0.24\textwidth]{./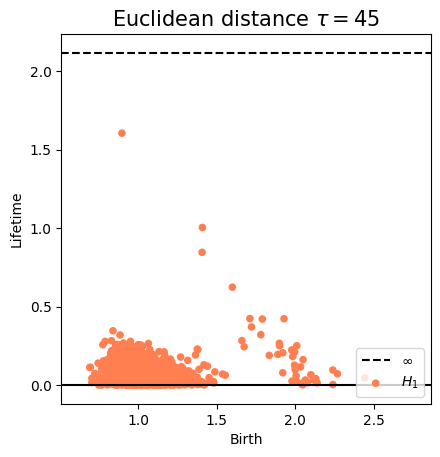}
    
    \vspace{5pt}
    
    \includegraphics[width=0.24\textwidth]{./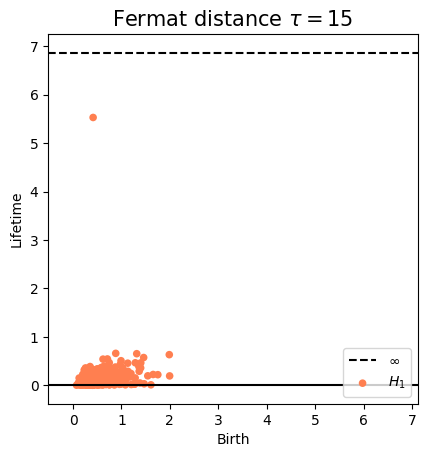}
    \includegraphics[width=0.24\textwidth]{./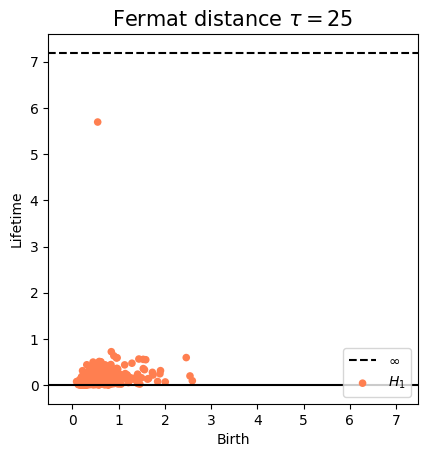}
    \includegraphics[width=0.24\textwidth]{./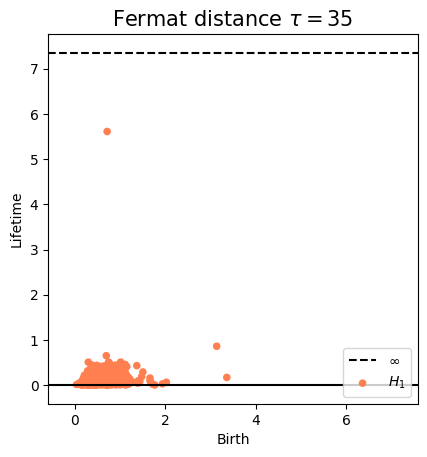}
    \includegraphics[width=0.24 \textwidth]{./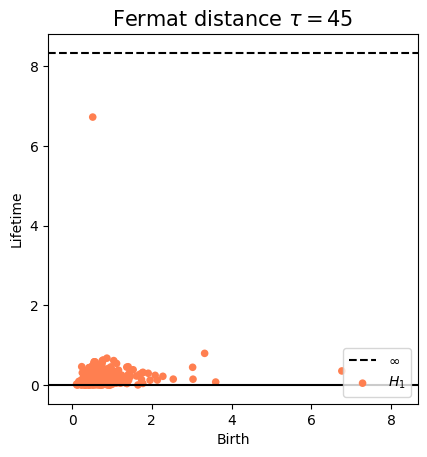}
    
    \caption{Top: Time-delay embeddings in $\RR^8$ (projection 3d to the first coordinates) for $\tau = 15, 25, 35, 45$ of the signal $f(t) = \cos(t) + \cos(3t)$ with additive Gaussian noise of variance $0.4$ (cf. Fig. \ref{fig:periodicity}). Bottom: Persistence diagrams (degree 1 only) using Euclidean distance and Fermat distance (for $p=6$, but similar outputs are obtained for a range of values of $p$).}
    \label{fig:stability_tau}
\end{figure}
\end{example}

In order to identify changes in patterns of time series, we investigate the topological evolution in time of the delay embedding.
For every sample time $t_j\in [0,T]$ ($1\leq j \leq n-(D-1)\tau)$, consider the delay embedding $\XX_j$ of the restriction of the time series up to time $t_j$, with the  metric structure inherited from $(\XX_n, d_{\XX_n, p}$). That is,
\[\XX_j :=  \{(x_i, x_{i+\tau},  x_{i+2\tau}, \dots, x_{i+(D-1)\tau}): 1\leq i \leq j\}\subseteq \XX_n.\]
If $\M[0,t]$ is the delay embedding of the restricted signal $X|_{[0,t]}$, the time evolving series of diagrams $\{\dgm(\Rips(\XX_i)):1\leq j \leq n-(D-1)\tau\}$ is a sample of an approximation of the curve 
\begin{equation}\label{curve diagrams} t\mapsto \dgm(\Rips(\M[0,t])),
\end{equation}
where $\M[0,t]$ is considered a metric subspace of $\M = \M[0,T]$ endowed with the population Fermat distance.
Finally, compute  
\begin{equation}\label{first derivative}\dfrac{d_b\big(\dgm(\Rips(\XX_i)), \dgm(\Rips(\XX_{i-1}))\big)}{t_{i}-t_{i-1}}
\end{equation}
as an approximate the `first order derivative' of \eqref{curve diagrams}. Shifts in patterns in the signal can be detected from the sample as peaks in the bottleneck distance between consecutive persistence diagrams.

Some applications of this technique follow below.

\begin{example}[Anomaly detection in ECG] The purpose of this example is to present a computational method of automated detection of abnormal heartbeats (arrhythmia) through the topological analysis of a delay embedding of ECG signals. We consider the 
record \textit{sel102} of the \textit{QT Database} from the freely-available repository of medical research data PhysioNet \cite{physionet_database}, Figure \ref{fig:ECG}. 

\begin{figure}[htb!]
\centering
\includegraphics[width=1\textwidth]{./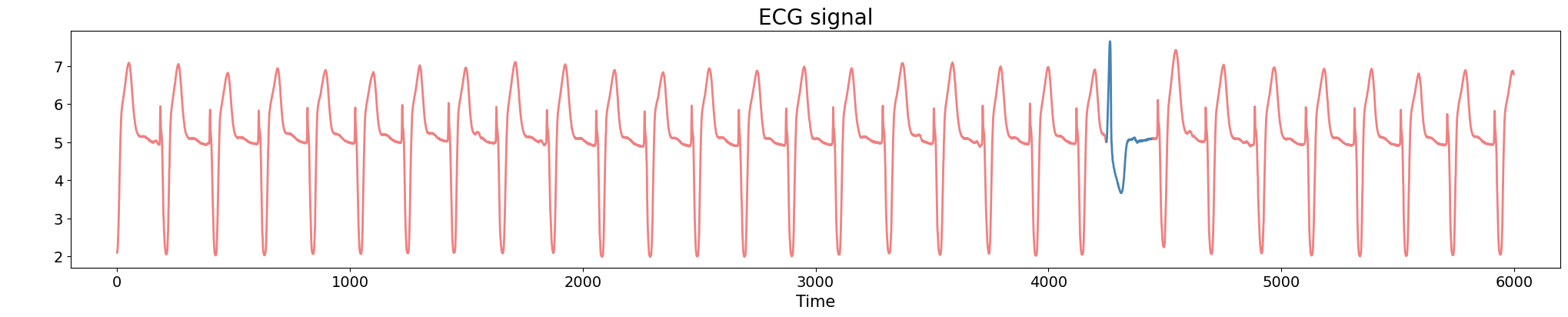} 
 \includegraphics[width=0.995\textwidth]{./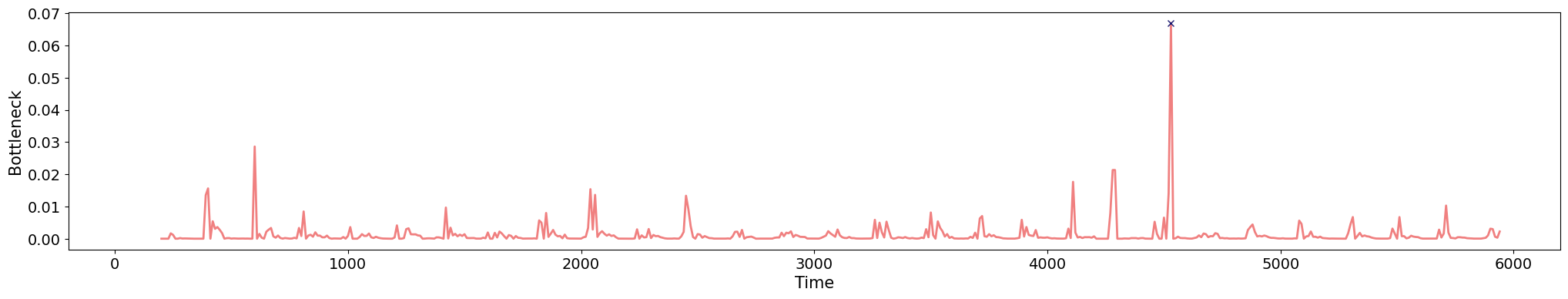} 

\begin{minipage}{0.5\textwidth}
\includegraphics[width=\textwidth, ,height=5.6cm]{./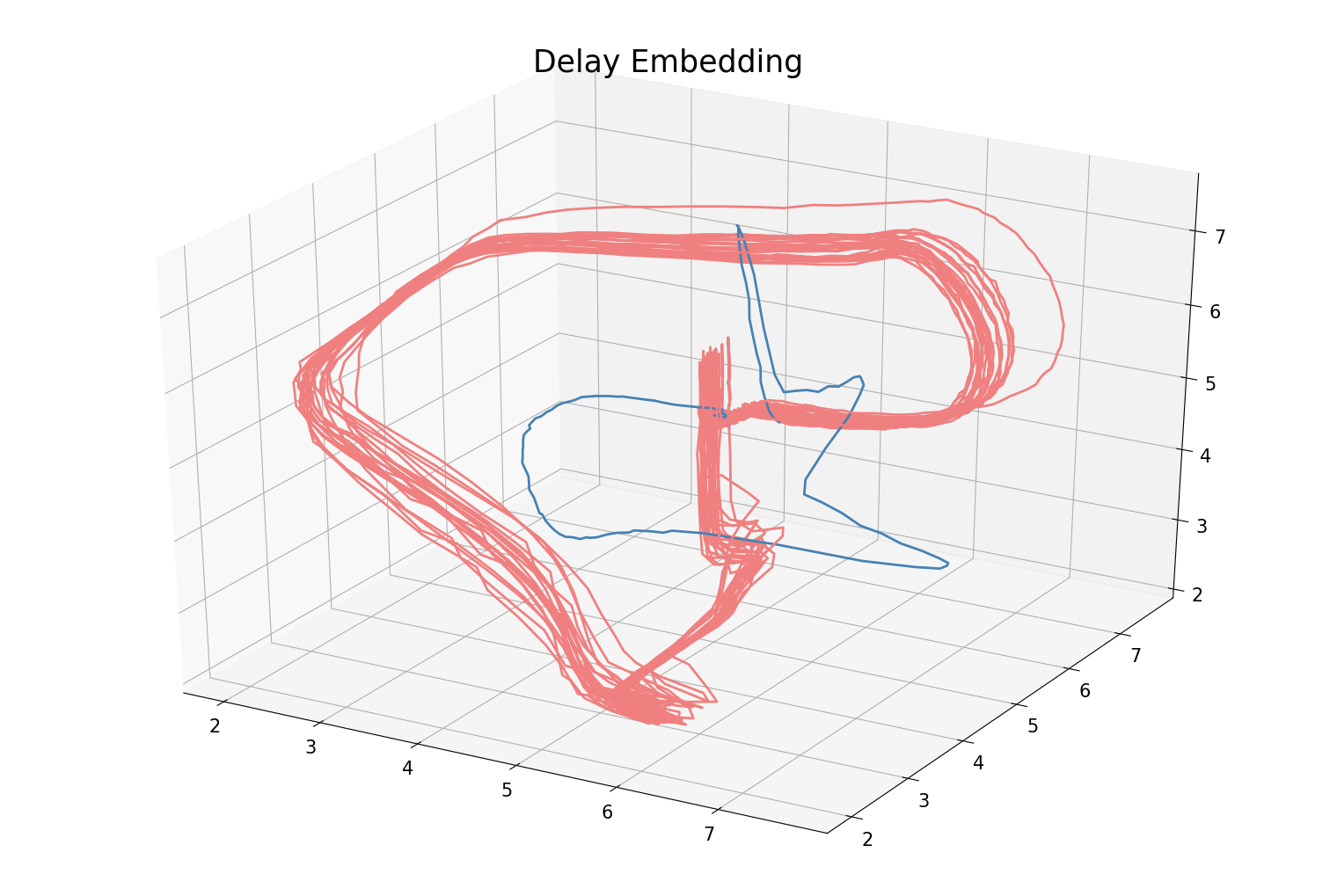}
\end{minipage}
\begin{minipage}{0.49\textwidth}
\includegraphics[width=0.49\textwidth]{./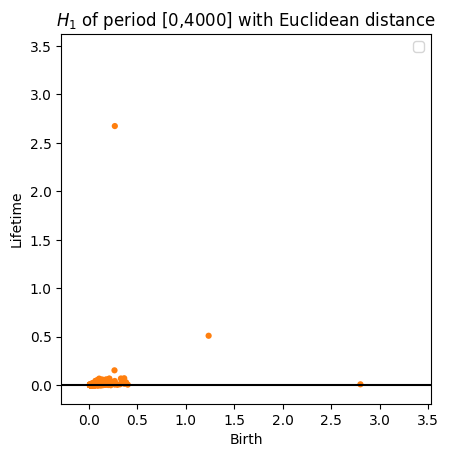}
\includegraphics[width=0.49\textwidth]{./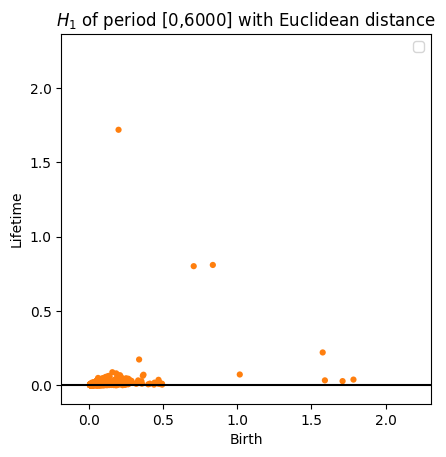}

\includegraphics[width=0.49\textwidth]{./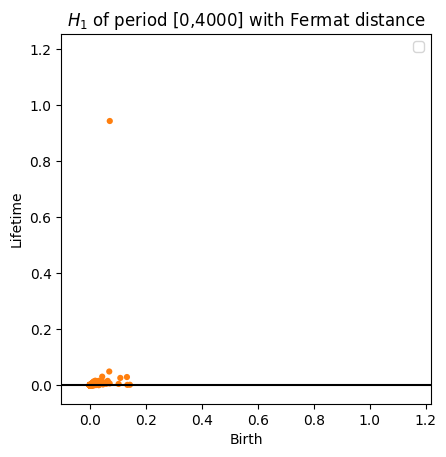}
\includegraphics[width=0.49\textwidth]{./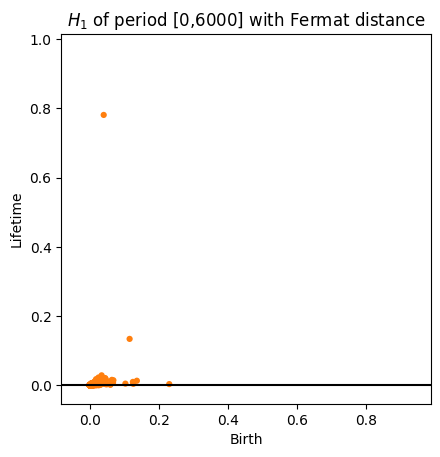}
\end{minipage}
 
\caption{Top: ECG signal (anomaly in blue). Middle: Bottleneck distance between consecutive persistence diagrams associated to time evolving embeddings of the ECG signal. Bottom: Delay embedding in $\RR^3$ with $\tau=15$. The associated persistence diagrams at degree 1 using Euclidean distance and Fermat distance with $p=2$ for the embedding of the signal in the periods of time $[0,4000]$ and $[0,6000]$.}
    \label{fig:ECG}
    \end{figure}

\par Regular heartbeats are characterized by a periodic pattern \cite[Ch.4]{lilly2016pathophysiology}.
The delay embedding in $\RR^3$ of a normal ECG has hence a cyclic topology induced by the periodic behavior of the time series  \cite[see][]{P, 6737251}. However, every time that an irregular heartbeat occurs, a new cycle arises in the embedding. We compute the associated persistence diagram for a normal period and for a period that includes an anomalous heartbeat. All delay embeddings were computed with a stride of  $t=2$, obtaining point clouds of up to $\sim 3000$ points from the original sample of size $6000$. Persistent cycles in $H_1$ in diagrams computed using Euclidean distance are not in correspondence with the periodicity pattern and the anomaly. Indeed, at the periodic interval $[0,4000]$ there are two salient generators for $H_1$. On the contrary, by using Fermat distance, an initial cycle for the periodic pattern and a second cycle in the irregular period that accounts for the anomaly are distinctly detected (here, the choice of $p=2$ is related to the weight we give to the density when computing Fermat distances; that is, we set $p$ so that the exponent $\frac{p-1}{d}$ equals $1$, where $d=1$ is the dimension of the curve).
Moreover, the moment immediately following the occurrence of the anomaly can be detected  using persistent homology of time evolving delay embeddings. Indeed, the estimator \eqref{first derivative} of the first derivative of the time evolving persistent diagrams features a prominent peak when the topology of the embedding changes. Lower peaks are also present as the result of the noisy real record.
\end{example}

\begin{example}[Pattern recognition in birdsongs] During song production, canaries use a set of air sac pressure gestures with characteristic shapes to generate different patterns of sound (or syllables). Pressure patterns of different syllables constitute a diverse set: they can be either almost harmonic oscillations, high frequency fluctuations or oscillations presenting wiggles. The recognition of song syllables from the air sac pressure series is a well-studied problem in non-linear dynamical systems \citep{mindlin2006physics, alonso2009low}.
    
We provide a topological method to detect the number of different syllables in a canary song from the (noisy) record of the fluctuations of its air sac pressure $X(t)$,  Figure \ref{fig:canary} (data provided by the Laboratory of Dynamical Systems from the Department of Physics of the University of Buenos Aires).
Given the time delay embedding of the  time series $X(t)$ with $\tau = 500$ and $D=3$, its associated persistence diagram  computed using Fermat distance with $p=1.5$ shows four prominent generators for the first homology group, which are in correspondence with the four different patterns observed in the time series (see Figure \ref{fig:canary2}). Indeed, the embedding of each syllable is topologically a cycle \cite[see][]{P, PH}. However, this decomposition is not available beforehand so the study of the global topology of the embedding of the entire time series is necessary in order to analyze the complete song. Here, prior to the computation of the persistence diagram, we down-sampled the original time series at evenly spaced times with stride  $t=100$, obtaining a subsample of size $\sim 3000$ from the original $T \sim 300000$ points.

\begin{figure}[ptb!]
\centering
\includegraphics[width=0.99\textwidth]{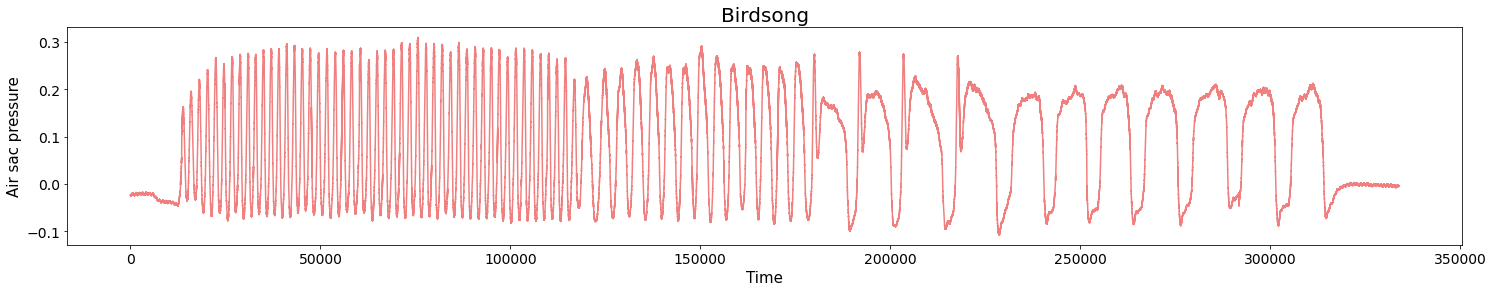}

\includegraphics[width=0.33\textwidth]{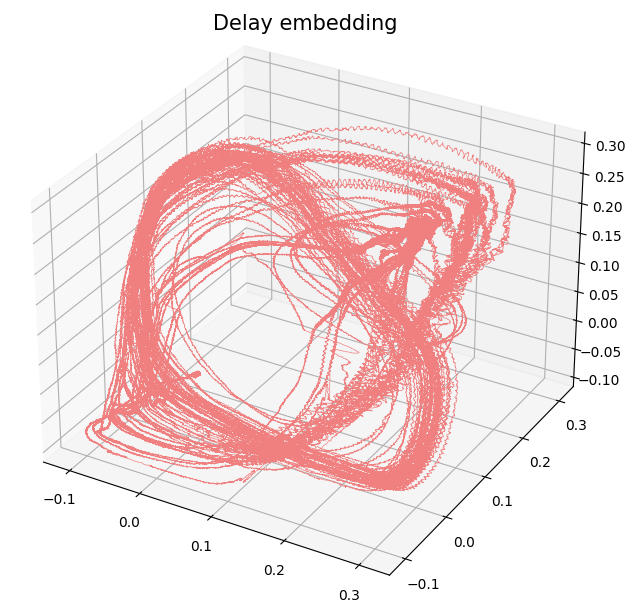}\hspace{5pt}
\hspace{30pt}
\includegraphics[width=0.29\textwidth]{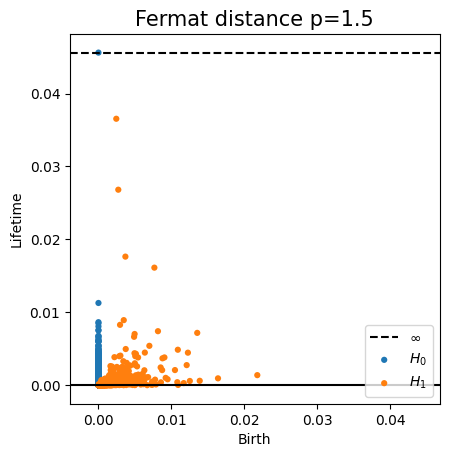}
\caption{Top:  Record of the air sac pressure of canary during a song. Bottom: Delay embedding in $\RR^3$ with time delay $\tau=500$ and its associated persistence diagram using Fermat distance with $p=1.5$.}
\label{fig:canary}
\end{figure}

We can also detect the moments at which changes of syllables take place during the song. 
The estimator \eqref{first derivative} of the first derivative of the path of persistence diagrams associated to the time evolving delay embeddings presents peaks followed by an exponential decay each time a new pattern arises, Figure \ref{fig:canary2}. 

\begin{figure}[ptb!]
\centering
\includegraphics[width=0.98\textwidth]{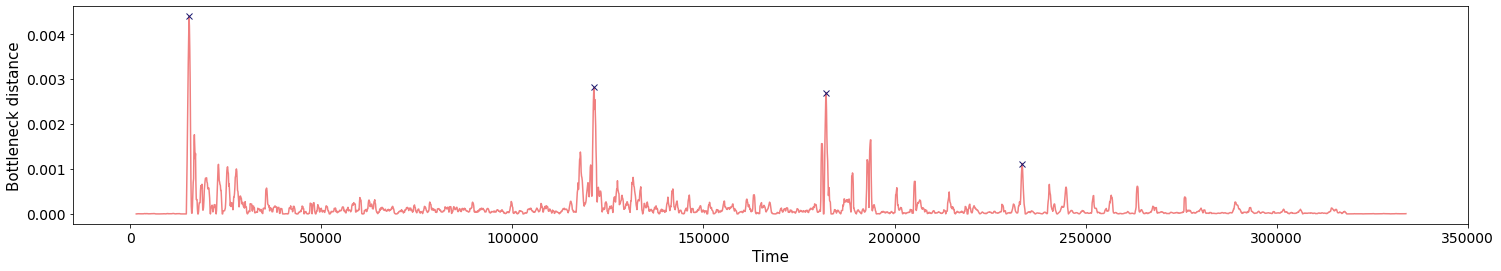}
    
\vspace{3pt}
    
\includegraphics[width=0.98\textwidth]{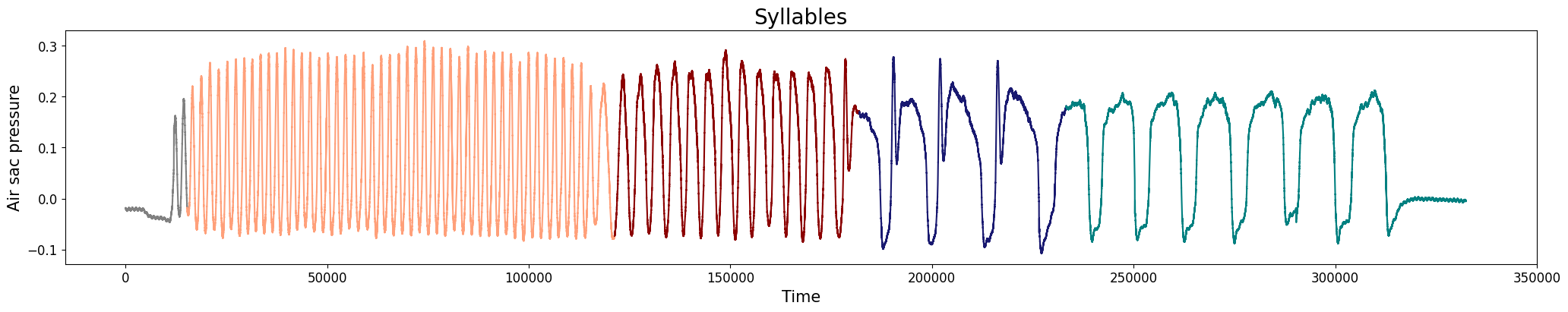}
\includegraphics[width=0.98\textwidth]{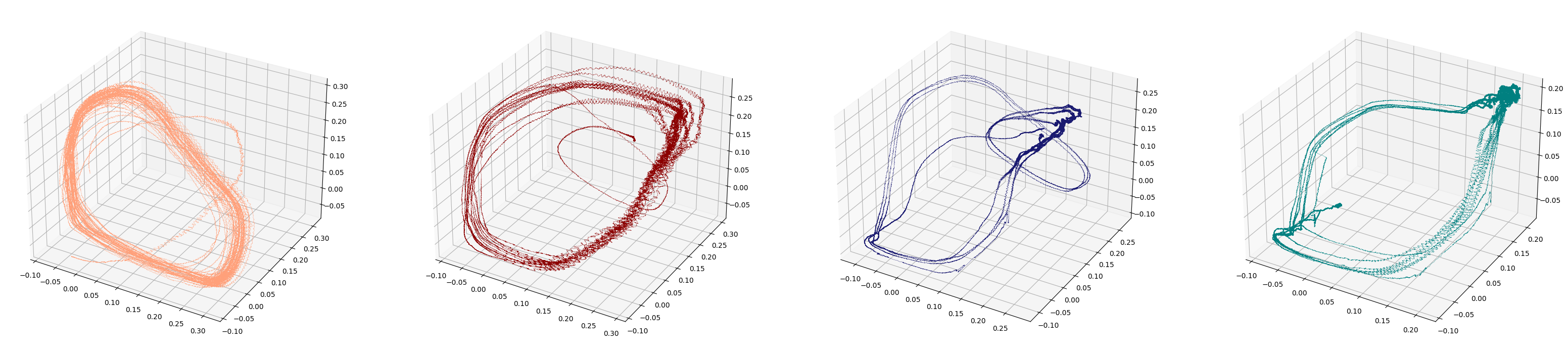}
\caption{Top: Bottleneck distance between consecutive persistence diagrams associated to time evolving embeddings (moving average curve with window of time 500). Peaks are related to changes in the pattern of the air sac pressure record of the canary song. Bottom: Delay embedding of each detected syllable.}
\label{fig:canary2}
\end{figure}
\end{example}

\section{Conclusions and Future Work} 

We introduced the use of density-based asymptotically intrinsic distances in point clouds to reconstruct the homology of a manifold from a noisy sample.
In most of the standard approaches, persistent homology computed from Euclidean samples of manifolds lacks of two relevant properties: robustness to outliers and independence of the embedding in the ambient space. Whereas each of these  properties has been studied separately in previous works, we present a simple method that is able to achieve both at the same time.

Our proposal is based on the use of Fermat distance when computing persistence diagrams of samples of manifolds. The key point is that, although this distance deforms the inherited geometry of the manifold, it produces intrinsic persistence diagrams that are more robust to outliers. Concretely, we provided rigorous proofs of convergence of the persistence diagrams of the associated metric spaces, robustness to a simple model of outliers and dependence of the persistence intervals on intrinsic (but not extrinsic) attributes of the underlying manifold.
Furthermore, we showed experimentally that our technique is stable under to a wider range of noisy situations, including real datasets. We intend to extend our results to more general models of outliers and noise in future works.
Finally, a detailed comparison of our approach with other related methods, like DTM-filtrations and the use of Euclidean distance and the intrinsic $k$-NN distance in the construction of Vietoris-Rips filtrations, is also presented.

\subsection*{Acknowledgements}
We are grateful to Luis Scoccola and Jeffrey Giansiracusa 
for many useful discussions and suggestions during the preparation of this article. We also acknowledge the anonymous reviewers and the associate editor for many helpful comments that greatly improved the manuscript. X. F. is a member of the Centre for Topological Data Analysis funded by the EPSRC grant EP/R018472/1. P. G. is partially supported by CONICET grant PIP 2021 11220200102825CO and UBACyT grant 20020190100293BA. G. M. is partially supported by PICT MAX PLANCK 4681 and PICT 00619.

\appendix
\section{Proof of Auxiliary Results}\label{proofs}

The purpose of this appendix is to present formal proofs of Proposition \ref{thm resta} and Lemma \ref{spacing}. 
Recall that $\M\subseteq \RR^D$ is a closed submanifold of dimension $d \leq D$ and $\XX_n \subseteq \M$ is an i.i.d. sample of size $n$ with common density $f>0$. Given $p>1$, we set $\alpha = 1/(d+2p)$. 
 
Proposition \ref{thm resta} will be derived from Theorem \ref{thm damenlin} \cite{HDH}.
We start with a series of results to show that any segment that is part of any shortest path with respect to $d_{\XX_n,p}$ is arbitrarily small with high probability for $n$ large enough.
This will allow us to prove that the sample Fermat distance uniformly well-approximates the power-weighted distance \eqref{weighted shortest path}.

\begin{proposition}\label{upper bound} Given  $b >0$ and $\varepsilon >0$, there exists $\theta>0$ such that 
\[\PP\left(\sup_{x,y}\left( \frac{n^{(p-1)/d} d_{\XX_n, p}(x,y)}{d_{f,p}(x,y )} - \mu \right) > \varepsilon\right) \leq \exp(-\theta n^\alpha)\]
for $n$ large enough, where the supremum is taken over all $x,y\in \M$ with $d_{\M}(x,y)\geq b$.

\end{proposition}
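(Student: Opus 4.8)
The plan is to reduce the statement to the uniform estimate for the power-weighted shortest path $L_{\XX_n,p}$ already recorded in Theorem \ref{thm damenlin}, by means of a single elementary comparison. The key observation is that for any $u,v\in\M$ one has $|u-v|\le d_{\M}(u,v)$, since the Euclidean length of any curve on $\M$ joining $u$ and $v$ is at least $|u-v|$. Now $d_{\XX_n,p}$ and $L_{\XX_n,p}$ are infima over the \emph{same} family of paths, namely those with intermediate vertices in $\XX_n$ and prescribed endpoints $x,y$, differing only in that each edge is weighed by $|x_{i+1}-x_i|^{p}$ in the former and by $d_{\M}(x_{i+1},x_i)^{p}$ in the latter. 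Since $t\mapsto t^{p}$ is increasing for $p>1$, the edgewise inequality $|x_{i+1}-x_i|^{p}\le d_{\M}(x_{i+1},x_i)^{p}$ holds; summing over edges and taking infima yields
\[
d_{\XX_n,p}(x,y)\le L_{\XX_n,p}(x,y)\qquad\text{for all }x,y\in\M,
\]
which is exactly the direction needed to \emph{upper} bound $d_{\XX_n,p}$.

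Next I would invoke Theorem \ref{thm damenlin} with the \emph{constant} sequence $b_n\equiv b$. For fixed $b>0$ the hypothesis $\tfrac{\log n}{n b_n^{d}}=\tfrac{1}{b^{d}}\tfrac{\log n}{n}\to 0$ holds automatically, so the theorem supplies a constant $\theta>0$ with
\[
\PP\!\left(\sup_{\substack{x,y\in\M\\ d_{\M}(x,y)\ge b}}\left|\frac{n^{(p-1)/d}L_{\XX_n,p}(x,y)}{d_{f,p}(x,y)}-\mu\right|>\varepsilon\right)\le \exp\!\big(-\theta\,(n b^{d})^{\alpha}\big).
\]
As $b$ is fixed we have $(n b^{d})^{\alpha}=b^{d\alpha}n^{\alpha}$, so the right-hand side equals $\exp(-\theta' n^{\alpha})$ with $\theta'=\theta\,b^{d\alpha}>0$, which is already of the desired form.

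Finally I would combine the two ingredients. Dividing the pointwise inequality by $d_{f,p}(x,y)>0$ and subtracting $\mu$ gives, for every $x,y$ with $d_{\M}(x,y)\ge b$,
\[
\frac{n^{(p-1)/d}d_{\XX_n,p}(x,y)}{d_{f,p}(x,y)}-\mu\;\le\;\frac{n^{(p-1)/d}L_{\XX_n,p}(x,y)}{d_{f,p}(x,y)}-\mu\;\le\;\left|\frac{n^{(p-1)/d}L_{\XX_n,p}(x,y)}{d_{f,p}(x,y)}-\mu\right|.
\]
Taking suprema over the relevant pairs, the one-sided event to be controlled is contained in the two-sided event of Theorem \ref{thm damenlin}, and the claimed bound follows with $\theta=\theta'$. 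For this one-sided (upper) statement there is essentially no obstacle: the monotonicity $|u-v|\le d_{\M}(u,v)$ does all the work, and the only matters requiring care are verifying that a constant $b_n$ is admissible in Theorem \ref{thm damenlin} and the bookkeeping that absorbs $b^{d\alpha}$ into the constant. The genuinely hard, complementary estimate is the matching \emph{lower} bound, where one must control how much \emph{smaller} $d_{\XX_n,p}$ can be than $L_{\XX_n,p}$; this is precisely where the forthcoming ``edges of any minimizing path are uniformly short'' argument (so that chords and geodesic arcs nearly agree) will be needed, and it is handled separately.
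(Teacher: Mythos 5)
Your proposal is correct and follows essentially the same route as the paper: both deduce the edgewise inequality $|x_{i+1}-x_i|\leq d_{\M}(x_{i+1},x_i)$, conclude $d_{\XX_n,p}(x,y)\leq L_{\XX_n,p}(x,y)$, and then apply Theorem \ref{thm damenlin} with the constant sequence $b_n\equiv b$, absorbing the factor $b^{d\alpha}$ into the constant $\theta$. Your explicit verification of the hypothesis $\log n/(nb^d)\to 0$ and the remark that the one-sided event is contained in the two-sided one are just slightly more detailed versions of steps the paper leaves implicit.
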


\begin{proof} Given $\varepsilon>0$ and $b>0$, by Theorem \ref{thm damenlin} there exists $\theta>0$ such that for every $x,y\in\M$ with $d_{\M}(x,y)\geq b$,

\begin{equation}\label{equation damelin up}
\dfrac{n^{(p-1)/d}L_{\XX_n, p}(x,y)}{d_{f, p}(x,y)}-\mu > \varepsilon
\end{equation}
with probability at most $\exp(-\theta n^{\alpha})$ (notice that here we set the sequence $b_n$ to be constantly $b$).

Let $x,y\in \M$ and let $ \gamma =  (x_0,\dots,x_{k+1})$ be the shortest path between $x,y$ with respect to $L_{\XX_n, p}$. 
That is,
\[
    L_{\XX_n, p}(x,y) = \sum_{i=0}^{k}d_{\M}(x_{i+1},x_i)^{p}.
\]
Since $|x_{i+1}-x_i|\leq d_{\M}(x_{i+1}, x_i)$,
\[
    L_{\XX_n, p}(x,y) \geq \sum_{i=0}^{k}|x_{i+1}-x_i|^{p} \geq  d_{\XX_n, p}(x,y).
\]
Thus, by \eqref{equation damelin up}, the inequality
\[
\dfrac{ n^{(p-1)/d} d_{\XX_n, p}(x,y)}{d_{f, p}(x,y)}-\mu > \varepsilon
\]
holds with probability bounded by $\exp(-\theta n^{\alpha})$.
\end{proof}

\begin{corollary}\label{global_bound}
Let $b_0>0$. 
Let $x,y\in \M$ be such that they belong to some minimal path between points in $\M$ with respect to $d_{\XX_n, p}$. Then,
\[
\PP(|x-y|>b_0)\leq \exp(-\theta n^{\alpha})
\]
for some constant $\theta > 0$, provided $n$ is large enough.
\end{corollary}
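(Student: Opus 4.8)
The plan is to combine the optimality built into minimal $d_{\XX_n,p}$-paths with the one-sided comparison already furnished by Proposition \ref{upper bound}; here I read the statement in the only sensible way, namely that $x$ and $y$ are \emph{consecutive} vertices of a minimal path (an edge), since for the endpoints of a long minimal path $|x-y|$ could be as large as the diameter. The starting observation is that if $x,y$ are consecutive vertices of a minimal path $\gamma=(x_0,\dots,x_{k+1})$ realizing $d_{\XX_n,p}$ between two points of $\M$, then the single edge joining them is itself a minimal path between its endpoints; otherwise one could splice in a cheaper connection and strictly decrease the total cost of $\gamma$. Consequently
\[
d_{\XX_n,p}(x,y) = |x-y|^{p}.
\]
It then suffices to show that the event $\{|x-y|>b_0\}$ forces $d_{\XX_n,p}(x,y)$ to be abnormally large relative to what Proposition \ref{upper bound} allows, and hence sits inside a low-probability event.

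Next I would invoke Proposition \ref{upper bound} with the threshold $b=b_0$ and a fixed $\varepsilon>0$. On its complementary (high-probability) event, every pair $u,v\in\M$ with $d_{\M}(u,v)\ge b_0$ satisfies
\[
n^{(p-1)/d}\, d_{\XX_n,p}(u,v) \le (\mu+\varepsilon)\, d_{f,p}(u,v).
\]
Because $\M$ is compact and $f\ge m_f>0$, the population Fermat distance is globally bounded: from $f^{-(p-1)/d}\le m_f^{-(p-1)/d}$ one gets $d_{f,p}(u,v)\le m_f^{-(p-1)/d}\, d_{\M}(u,v)\le \Lambda$, where $\Lambda:=m_f^{-(p-1)/d}\,\diam(\M,d_{\M})<\infty$. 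Hence, on this event, $d_{\XX_n,p}(u,v)\le (\mu+\varepsilon)\,\Lambda\, n^{-(p-1)/d}$ for all such $u,v$ simultaneously.

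Finally I would close the argument by contradiction. Since the chord length never exceeds the geodesic length, $|x-y|>b_0$ implies $d_{\M}(x,y)\ge |x-y|>b_0$, so the displayed bound applies to the consecutive pair $x,y$, giving $|x-y|^{p}=d_{\XX_n,p}(x,y)\le (\mu+\varepsilon)\,\Lambda\, n^{-(p-1)/d}$. For $n$ large enough the right-hand side drops below $b_0^{p}$, contradicting $|x-y|^{p}>b_0^{p}$. Therefore, for $n$ large, the event $\{|x-y|>b_0\}$ is contained in the exceptional event of Proposition \ref{upper bound}, whose probability is at most $\exp(-\theta n^{\alpha})$. I do not expect a genuine obstacle here: the one delicate point is that the conclusion must hold uniformly over all minimal paths and all their edges, but this uniformity comes for free, since Proposition \ref{upper bound} already controls the supremum over every pair $u,v$ with $d_{\M}(u,v)\ge b_0$ at once, so no additional union bound or covering argument is needed. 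In this sense the real work was done in the earlier one-sided estimate, and the present corollary is essentially a packaging step.
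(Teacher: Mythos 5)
Your proof is correct and follows essentially the same route as the paper: both arguments reduce the claim to Proposition \ref{upper bound} combined with the compactness bound $d_{f,p}(u,v) \le m_f^{-(p-1)/d}\diam(\M,d_{\M})$, which forces all sample Fermat distances between $b_0$-separated points to be of order $n^{-(p-1)/d}$, contradicting an edge of Euclidean length greater than $b_0$ once $n$ is large. The only cosmetic difference is that you apply the proposition to the edge pair $(x,y)$ itself, via the (correct) observation that a single edge of a minimal path is itself minimal so that $d_{\XX_n,p}(x,y)=|x-y|^{p}$, whereas the paper applies it to the endpoints $(u,v)$ of the ambient minimal path using $d_{\XX_n,p}(u,v)\ge |x-y|^{p}$ together with a short check that $d_{\M}(u,v)>b_0$ --- both variants are valid and inherit the needed uniformity from the supremum in Proposition \ref{upper bound}.
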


\begin{proof}
Fix $\varepsilon_0>0$.
By Proposition \ref{upper bound}, there exists a constant $\theta > 0$ such that
\[
\PP\left(\sup_{u,v} \dfrac{n^{(p-1)/d} d_{\XX_n, p}(u,v)}{ d_{f, p}(u,v)} >  \mu + \varepsilon_0 \right) \leq \exp(-\theta n^{\alpha}) 
\]
for all $n$ sufficiently large, where the supremum is taken over $u,v \in \M$ such that $d_{\M}(u,v)  \geq b_0$.
\par On the other hand, note that since $\M$ is compact the diameter $\diam_p(\M)$ of $\M$ with respect to the distance $d_{f,p}$ is finite. Hence,
\[
\dfrac{d_{f, p}(u,v)}{n^{(p-1)/d}} (\mu+\varepsilon_0)\leq \dfrac{\diam_p(\M)}{n^{(p-1)/d}} (\mu+\varepsilon_0) \leq b_0^{p}
\]
for all $u,v \in \M$ with $d_{\M}(u,v) \geq b_0$ and all $n$ sufficiently large.
\par Suppose now that $x,y\in \M$ belong to some shortest path between points of $\M$ with respect to $d_{\XX_n, p}$, say $u$ and $v$, but that $|x-y| > b_0$. Then, clearly $d_{\XX_n, p}(u,v) \geq |x-y|^p$ and $d_{\M}(u,v) > b_0$ (since otherwise $d_{\XX_n,p}(u,v) \leq |u-v|^{p} < b_0^{p}$). We remark here that $x$ and $y$ do not necessarily belong to the sample $\XX_n$. From the previous computations, it follows that whenever $n$ is large enough, with probability at least $1 - \exp(-\theta n^{\alpha})$,
\[
 |x-y|^{p} \leq d_{\XX_n, p}(u,v) \leq \dfrac{d_{f,p}(u,v)}{n^{(p-1)/d}} (\mu+\varepsilon_0) \leq b_0^{p}, \]
as we wanted to show.
\end{proof}

\begin{remark}\label{boiss} (see \citealp[Corollary 4]{isomapproofs} or \citealp[Lemma 3]{BLW})
Let $(\M, g)$ be a smooth compact Riemannian manifold embedded in $\RR^D$. Given $\delta >0$, there exists $\varepsilon>0$ such that for every $x,y\in \M$ with $|x-y|<\varepsilon$, 
\[
    d_{\M}(x,y)\leq (1+\delta)|x-y|.
\]
\end{remark}

We are now able to prove a new version of Theorem \ref{thm damenlin} in which the proposed estimator of $d_{f,p}$ is the sample Fermat distance (rather than the power-weighted shortest path). 

\begin{proposition}
\label{thm quotient}
Fix $\varepsilon > 0$ and a sequence of positive real numbers $(b_n)_{n \geq 1}$ satisfying that $\frac{\log(n)}{n b_n^{d}} \to 0$ when $n \to \infty$. Then, for every $p>1$, there exists $\theta>0$ such that
\[
\PP\left(\sup_{x,y}\left| \frac{n^{(p-1)/d} d_{\XX_n, p}(x,y)}{d_{f,p}(x,y )} - \mu \right|>\varepsilon\right) \leq \exp\left(-\theta (n b_n^{d})^\alpha\right)
\]
for $n$ large enough, where the supremum is taken over $x,y\in \M$ with $d_{\M}(x,y)\geq b_n$.
\end{proposition}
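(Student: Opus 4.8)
The plan is to transfer the two-sided estimate of Theorem \ref{thm damenlin} for the power-weighted shortest path $L_{\XX_n,p}$ to the sample Fermat distance $d_{\XX_n,p}$ by sandwiching $d_{\XX_n,p}$ between two multiples of $L_{\XX_n,p}$ that hold with high probability. Concretely, I would split the bad event according to the sign of the deviation of $\frac{n^{(p-1)/d} d_{\XX_n,p}(x,y)}{d_{f,p}(x,y)} - \mu$ and control the two directions separately, each reducing to the corresponding half of Theorem \ref{thm damenlin} via a comparison between $d_{\XX_n,p}$ and $L_{\XX_n,p}$.

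For the upper direction (the event where the ratio exceeds $\mu+\varepsilon$), the argument is immediate. Since $|x_{i+1}-x_i|\le d_{\M}(x_{i+1},x_i)$, the $L_{\XX_n,p}$-minimal path between any $x,y$ is an admissible competitor for $d_{\XX_n,p}$, so $d_{\XX_n,p}(x,y)\le L_{\XX_n,p}(x,y)$ for all $x,y$ deterministically. Dividing by $d_{f,p}(x,y)$ and invoking the upper half of Theorem \ref{thm damenlin} over $\{d_{\M}(x,y)\ge b_n\}$ bounds this event by $\exp(-\theta (n b_n^d)^\alpha)$.

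The lower direction is the crux. Here I would establish the reverse comparison $d_{\XX_n,p}(x,y)\ge (1+\delta)^{-p} L_{\XX_n,p}(x,y)$ on a high-probability event, for an arbitrarily small $\delta>0$. Fix such a $\delta$; by Remark \ref{boiss} there is a $b_0>0$ so that $|u-v|<b_0$ forces $d_{\M}(u,v)\le (1+\delta)|u-v|$. By Corollary \ref{global_bound}, outside an event of probability at most $\exp(-\theta n^\alpha)$ every edge of every $d_{\XX_n,p}$-minimal path has Euclidean length below $b_0$. On this event, using the $d_{\XX_n,p}$-minimal path $\gamma=(x_0,\dots,x_{k+1})$ between arbitrary $x,y$ as a competitor for $L_{\XX_n,p}$ yields
\[
L_{\XX_n,p}(x,y)\le \sum_{i=0}^{k} d_{\M}(x_{i+1},x_i)^p \le (1+\delta)^p\sum_{i=0}^{k}|x_{i+1}-x_i|^p = (1+\delta)^p\, d_{\XX_n,p}(x,y),
\]
which is exactly the reverse comparison. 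Combined with the lower half of Theorem \ref{thm damenlin}, this gives $\frac{n^{(p-1)/d} d_{\XX_n,p}(x,y)}{d_{f,p}(x,y)}\ge (1+\delta)^{-p}(\mu-\varepsilon')$ over $\{d_{\M}(x,y)\ge b_n\}$ with the stated rate.

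To conclude, I would choose $\delta$ and $\varepsilon'$ small enough that $(1+\delta)^{-p}(\mu-\varepsilon')>\mu-\varepsilon$, then take a union bound over the two directions. Since the estimate is vacuous once $b_n$ exceeds the ($d_{\M}$-)diameter of $\M$, which is finite, one may assume $b_n$ bounded, in which case $(n b_n^d)^\alpha$ and $n^\alpha$ differ only by a multiplicative constant; thus the auxiliary event from Corollary \ref{global_bound} is absorbed into the target rate after relabelling $\theta$, and the finitely many exponential terms combine into a single $\exp(-\theta (n b_n^d)^\alpha)$. The main obstacle is precisely this reverse comparison in the lower direction: whereas $d_{\XX_n,p}\le L_{\XX_n,p}$ is free, bounding $L_{\XX_n,p}$ from above by $d_{\XX_n,p}$ requires that the optimal $d_{\XX_n,p}$-path never makes a long Euclidean jump, so that the local Euclidean/Riemannian distortion stays within the factor $(1+\delta)$ uniformly over all pairs $x,y$ — which is exactly the uniform control that Corollary \ref{global_bound} provides.
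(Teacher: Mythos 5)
Your proposal is correct and follows essentially the same route as the paper's proof: both sandwich $d_{\XX_n,p}(x,y)\le L_{\XX_n,p}(x,y)\le(1+\delta)^p d_{\XX_n,p}(x,y)$ on the high-probability event from Corollary \ref{global_bound} (via Remark \ref{boiss}) and then transfer Theorem \ref{thm damenlin}, absorbing constants by taking $\delta$ small. Your splitting into two one-sided events and your explicit remark that $\exp(-\theta n^{\alpha})$ is absorbed into $\exp(-\theta(nb_n^d)^{\alpha})$ because $b_n$ may be assumed bounded are only cosmetic refinements of the paper's argument.
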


\begin{proof} Let $\delta > 0$ be a small number to be fixed later. The strategy of the proof consists of showing that, with probability exponentially high in $(nb_n^d)^{\alpha}$, $L_{\XX_n,p}(x,y)$ and $d_{\XX_n,p}(x,y)$ coincide up to a factor of $(1+\delta)^{p}$ for all $x,y \in \M$ with $d_{\M}(x,y) \geq b_n$. Once that is established, the proof follows readily by applying Theorem \ref{thm damenlin}.
\par Notice in first place that by Remark \ref{boiss}, there exists $\eta > 0$ such that $d_{\M}(x,y) \leq (1+\delta)|x-y|$ whenever $x,y \in \M$, $|x-y| < \eta$. By Corollary \ref{global_bound}, we may assume that $|u-v| < \eta$ for every $u, v \in \M$ belonging to a minimal path with probability exponentially high in $n^{\alpha}$. Let $x,y \in \M$ be two points with $d_{\M}(x,y) \geq b_n$. Since by our assumptions every segment in a shortest path from $x$ to $y$ with respect to $d_{\XX_n,p}$ has Euclidean length at most $\eta$, it is not difficult to see that
\begin{equation}\label{eq:bilipschitz}
    d_{\XX_n,p}(x,y) \leq L_{\XX_n,p}(x,y) \leq (1+\delta)^{p}d_{\XX_n,p}(x,y).
\end{equation}

Now, by Theorem \ref{thm damenlin}, the probability that

\begin{equation}\label{eq:damelin_ineq}
    \left| \frac{n^{(p-1)/d} L_{\XX_n, p}(x,y)}{d_{f,p}(x,y )} - \mu \right|<\frac{\varepsilon}{2}
\end{equation}
is exponentially high in $(n b_n^{d})^{\alpha}$, provided $n$ is large enough. We will check that for $\delta > 0$ sufficiently small, the desired inequality for $d_{\XX_n,p}$ follows if we assume that the event from \eqref{eq:damelin_ineq} occurs. It is clear by \eqref{eq:bilipschitz} and \eqref{eq:damelin_ineq} that
\[
    \frac{n^{(p-1)/d} d_{\XX_n, p}(x,y)}{d_{f,p}(x,y )} - \mu < \frac{\varepsilon}{2}.
\]
As for the other inequality, notice that
\[
    -\frac{\varepsilon}{2} < (1+\delta)^{p} \left(\frac{n^{(p-1)/d} d_{\XX_n, p}(x,y)}{d_{f,p}(x,y )} - \mu\right) + ((1+\delta)^{p} - 1) \mu.
\]
Hence, for $\delta > 0$ small enough we have
\[
    -\varepsilon < \frac{n^{(p-1)/d} d_{\XX_n, p}(x,y)}{d_{f,p}(x,y )} - \mu
\]
as desired.
\end{proof}

Finally, we promote the convergence of the sample Fermat distance from Proposition \ref{thm quotient} to a \textit{uniform} convergence in probability (that is, for any pair of points $x,y \in\M$ regardless of the distance between them).
Such uniform convergence may be accomplished by choosing a sequence $(b_n)_{n \geq 1}$ which converges to $0$ at an adequate rate.
This step is instrumental in order to prove the Gromov--Hausdorff convergence of the sample metric spaces $(\XX_n, d_{n,p})$ to $(\M, d_{f,p})$ (see Theorem \ref{conv.diag} and its proof).

\begin{proof} [Proposition \ref{thm resta}] 
Roughly, the strategy of the proof consists in bounding the quantity
\[
    |n^{(p-1)/d} d_{\XX_n,p}(x,y) - \mu d_{f,p}(x,y)|
\]
splitting in two cases according to whether the distance $d_{\M}(x,y)$ is greater than or smaller than some appropriately chosen sequence $b_n > 0$. More precisely, we will set $b_n = n^{-\lambda}$ for some $\lambda \in ((p-1)/pd, 1/d)$. Let $\varepsilon > 0$. Since $\lambda < 1/d$, clearly the sequence $\left(\frac{\log(n)}{n b_n^d}\right)_{n \geq 1}$ converges to $0$ as $n$ goes to infinity and hence, by Proposition \ref{thm quotient} the bound
\[
    \left| \frac{n^{(p-1)/d} d_{\XX_n, p}(x,y)}{d_{f,p}(x,y )} - \mu \right|>\varepsilon'
\]
holds with probability at most $\exp(-\theta(n b_n^{d})^{\alpha}) = \exp(-\theta n^{(1-\lambda d) \alpha})$ for some $\theta > 0$ and all $x,y \in \M$ with $d_{\M}(x,y) \geq n^{-\lambda}$ provided $n$ is large enough (here $\varepsilon' > 0$ is a small number to be determined). Denote by $\diam(\M)$ the diameter of $\M$ with respect to the distance $d_{\M}$. Since $d_{f,p}(x,y) \leq m_f^{-(p-1)/d} d_{\M}(x,y) \leq m_f^{-(p-1)/d} \diam(\M)$, we see that the event
\[
    |n^{(p-1)/d} d_{\XX_n, p}(x,y) - \mu d_{f,p}(x,y)| > m_f^{-(p-1)/d} \diam(\M) \varepsilon'
\]
also holds with probability bounded from above by $\exp(-\theta n^{(1-\lambda d) \alpha})$ for the same $\theta > 0$ as before, whenever $d_{\M}(x,y) \geq n^{-\lambda}$. By setting $\varepsilon' = \varepsilon (m_f^{-(p-1)/d} \diam(\M))^{-1}$ we obtain the desired bound for $x,y \in \M$ with $d_{\M}(x,y) \geq n^{-\lambda}$. For the remaining case, take $x,y \in \M$ satisfying $d_{\M}(x,y) \leq n^{-\lambda}$ and notice in first place that
\[
    d_{f,p}(x,y) \leq m_f^{-(p-1)/d} d_{\M}(x,y) \leq m_f^{-(p-1)/d} n^{-\lambda}.
\]
Hence, for $n$ sufficiently large, $\mu d_{f,p}(x,y) \leq \varepsilon/2$. On the other hand, since by definition of $d_{\XX_n,p}$ it is
\[
    d_{\XX_n,p}(x,y) \leq |x-y|^{p} \leq d_{\M}(x,y)^{p} \leq n^{-\lambda p},
\]
we see that $n^{(p-1)/d} d_{\XX_n,p}(x,y) \leq n^{(p-1)/d - \lambda p}$. The hypothesis on $\lambda$ implies that the exponent of $n$ in the last inequality is negative and thus $n^{(p-1)/d}d_{\XX_n,p}(x,y) \leq \varepsilon/2$ provided $n$ is large. Summing up, we conclude that there exists $n_0$ such that for all $x,y \in \M$ with $d_{\M}(x,y) \leq n^{-\lambda}$ and $n \geq n_0$,
\[
    |n^{(p-1)/d} d_{\XX_n,p}(x,y) - \mu d_{f,p}(x,y)| \leq \varepsilon,
\]
which completes the proof of the proposition.
\end{proof}

We turn now to the proof of Lemma \ref{spacing}, which follows ideas from \cite{CRC} and \cite[Section 5]{MR2147326}.

\begin{definition}\cite[see][Chapter 5]{Lee}
The \textit{injectivity radius} $\inj(\N)$ of a Riemannian manifold  $(\N,g)$ is defined as
\[
\inj(\N) := \inf_{x \in N} \inj(\N,x),
\]
where $\inj(\N,x)$ is the largest radius for which the exponential map is a diffeomorphism.
\end{definition}

\begin{proof} [Lemma \ref{spacing}] Since $\M$ is compact, its injectivity radius $\inj(\M)$ is strictly positive. Then, by an inequality of Croke \cite[see][Proposition 14]{Croke}, there exists a constant $c = c(d) > 0$ such that every metric ball $B$ in $\M$ of radius $r < \frac{\inj(\M)}{2}$ has volume at least $c(d) r^{d}$. Since we can assume that $\kappa < 1$ without loss of generality, for all $n$ sufficiently large we have $n^{(\kappa-1)/d} < \frac{\inj(\M)}{2}$. From this point, we follow the strategy from the proof of \cite[Theorem 3]{CRC}. Let $P_n$ be the maximum number of disjoint balls of radius $\frac{n^{(\kappa-1)/d}}{4}$ contained in $\M$ --- this is known as \textit{packing number}, see for example \cite[Section 5]{NSW} --- and take $\{ B_1, \dots, B_{P_n}\}$ a set of disjoint balls of radius $\frac{n^{(\kappa-1)/d}}{4}$ in $\M$. It is clear then that
\[
    P_n \leq \frac{\Vol(\M)}{\min_{1 \leq j \leq P_n} \Vol(B_j)} \leq \frac{\Vol(\M)4^{d}}{c(d)}n^{1-\kappa},
\]
for $n$ so large that $n^{(\kappa-1)/d} < \frac{\inj(\M)}{2}$.
Now, suppose that $x \in \M$ verifies $d_{\M}(x, \XX_n) > n^{(\kappa-1)/d}$. Since the balls $2B_1, \dots, 2B_{P_n}$ cover $\M$ (where $2B_j$ stands for the ball with the same center as $B_j$ but with twice the radius) the distance from $x$ to some center of these balls is at most $\frac{n^{(\kappa-1)/d}}{2}$ and thus there should be no point from the sample in some ball $2B_j$.
A simple computation reveals that the probability that some random variable $\mathbf{x}_i \in \XX_n$ does not belong to $2B_j$ is at most $1 - m_f\cdot \Vol(2B_j)$. By the independence of the random variables $\mathbf{x}_1,\dots, \mathbf{x}_n $, if $n$ is large enough
\[ \PP\left(\bigcap_{i=1}^{n} \{\mathbf{x}_i \not\in 2B_j\} \right) \leq \big(1 - m_f\cdot \Vol(2B_j)\big)^{n} \leq \big(1 - m_f c(d) n^{\kappa-1}\big)^n.
\]
We conclude that
\[
    \PP\left(\left\{ \sup_{x \in \M} d_{\M}(x,\XX_n) \geq n^{(\kappa-1)/d} \right\}\right)  \leq \sum_{j=1}^{P_n}  \PP\left(\bigcap_{i=1}^{n} \{\mathbf{x}_i \not\in 2B_j\}\right) \leq (1 - m_f c(d) n^{\kappa-1})^n P_n .
\]
Since $P_n$ grows at most like a polynomial in $n$, $ (1 - m_f c(d) n^{\kappa-1})^nP_n  \le \exp(-\theta n^{\kappa})$
for an appropriate $\theta >0$ and $n$ big enough, as we wanted to show.
\end{proof}

\bibliographystyle{plain}
\bibliography{biblio}
\end{document}